\documentclass{article} 
\usepackage{iclr2026_conference,times}

\usepackage{amsmath,amsfonts,bm}

\def\eqref#1{equation~\ref{#1}}

\def\1{\bm{1}}

\DeclareMathAlphabet{\mathsfit}{\encodingdefault}{\sfdefault}{m}{sl}
\SetMathAlphabet{\mathsfit}{bold}{\encodingdefault}{\sfdefault}{bx}{n}

\usepackage{hyperref}
\usepackage{url}
\usepackage{multirow}  
\usepackage{hyperref}       
\usepackage{url}            
\usepackage{graphicx}
\usepackage{subcaption}
\usepackage[export]{adjustbox} 
\usepackage{enumitem}
\usepackage{booktabs}       
\usepackage{amsfonts}       
\usepackage{nicefrac}       
\usepackage{microtype}      
\usepackage{xcolor}         
\usepackage{amsmath, amssymb, algorithm, algpseudocode}
\usepackage{comment}
\usepackage{booktabs}  
\usepackage{lscape}    
\usepackage{array}   
\usepackage[colorinlistoftodos]{todonotes} 
\usepackage{wrapfig}  
\usepackage{graphicx}
\usepackage{caption}   
\usepackage{wrapfig}   

\usepackage{booktabs}  
\usepackage{lscape}    
\usepackage{array}     
\usepackage{multirow}  
\usepackage{longtable} 
\usepackage{adjustbox} 
\usepackage{tabularx}  
\usepackage{subcaption}
\usepackage{amsthm}  
\usepackage{float}
\usepackage{amsmath,amssymb,mathtools}
\usepackage{amsthm}

\theoremstyle{plain}
\newtheorem{theorem}{Theorem}[section]
\newtheorem{lemma}[theorem]{Lemma}

\theoremstyle{definition}

\theoremstyle{remark}

\title{Hessian-Enhanced Token Attribution (HETA): Interpreting Autoregressive LLMs}

\author{Vishal Pramanik \\
Department of Computer \& Information \\ 
Science \& Engineering, \\University of Florida\\
Gainesville, FL 32611, USA\\
\texttt{vishalpramanik@ufl.edu} \\
\And
Maisha Maliha \\
School of Computer Science,\\ University of Oklahoma\\
Norman, OK 73019, USA\\
\texttt{maisha.maliha-1@ou.edu} \\
\AND
Nathaniel D. Bastian \\
Department of Electrical Engineering and Computer Science, \\
United States Military Academy\\
West Point, NY 10996, USA\\
\texttt{nathaniel.bastian@westpoint.edu}
\And
Sumit Kumar Jha \\
Department of Computer \& Information \\
Science \& Engineering, \\University of Florida\\
Gainesville, FL 32611, USA\\
\texttt{sumit.jha@ufl.edu} \\
}

\iclrfinalcopy
\begin{document}

\maketitle

\begin{abstract}
Attribution methods seek to explain language model predictions by quantifying the contribution of input tokens to generated outputs. However, most existing techniques are designed for encoder-based architectures and rely on linear approximations that fail to capture the causal and semantic complexities of autoregressive generation in decoder-only models. To address these limitations, we propose \textbf{Hessian-Enhanced Token Attribution (HETA)}, a novel attribution framework tailored for decoder-only language models. HETA combines three complementary components: a semantic transition vector that captures token-to-token influence across layers, Hessian-based sensitivity scores that model second-order effects, and KL divergence to measure information loss when tokens are masked. This unified design produces context-aware, causally faithful, and semantically grounded attributions. Additionally, we introduce a \textbf{curated benchmark dataset} for systematically evaluating attribution quality in generative settings. Empirical evaluations across multiple models and datasets demonstrate that HETA consistently outperforms existing methods in attribution faithfulness and alignment with human annotations, establishing a new standard for interpretability in autoregressive language models.  Our code  and dataset are available here.\footnote{\url{https://github.com/VishalPramanik/HETA.git}}

\end{abstract}

\section{Introduction}

As machine learning systems achieve increasingly high performance, they are being deployed in high-stakes domains such as healthcare, autonomous driving, and finance. However, despite their success, deep neural networks remain difficult to interpret due to their large parameter spaces, layered architectures, and nonlinear computations, earning them the reputation of ``black box'' models \cite{benitez1997artificial}. This opacity can erode trust, impede debugging, and raise ethical or regulatory concerns. To address these challenges, the field of Explainable AI (XAI) emerged, with the goal of making model decisions more transparent, interpretable, and trustworthy.

\begin{figure*}[t]
  \includegraphics[width=1.0\textwidth]{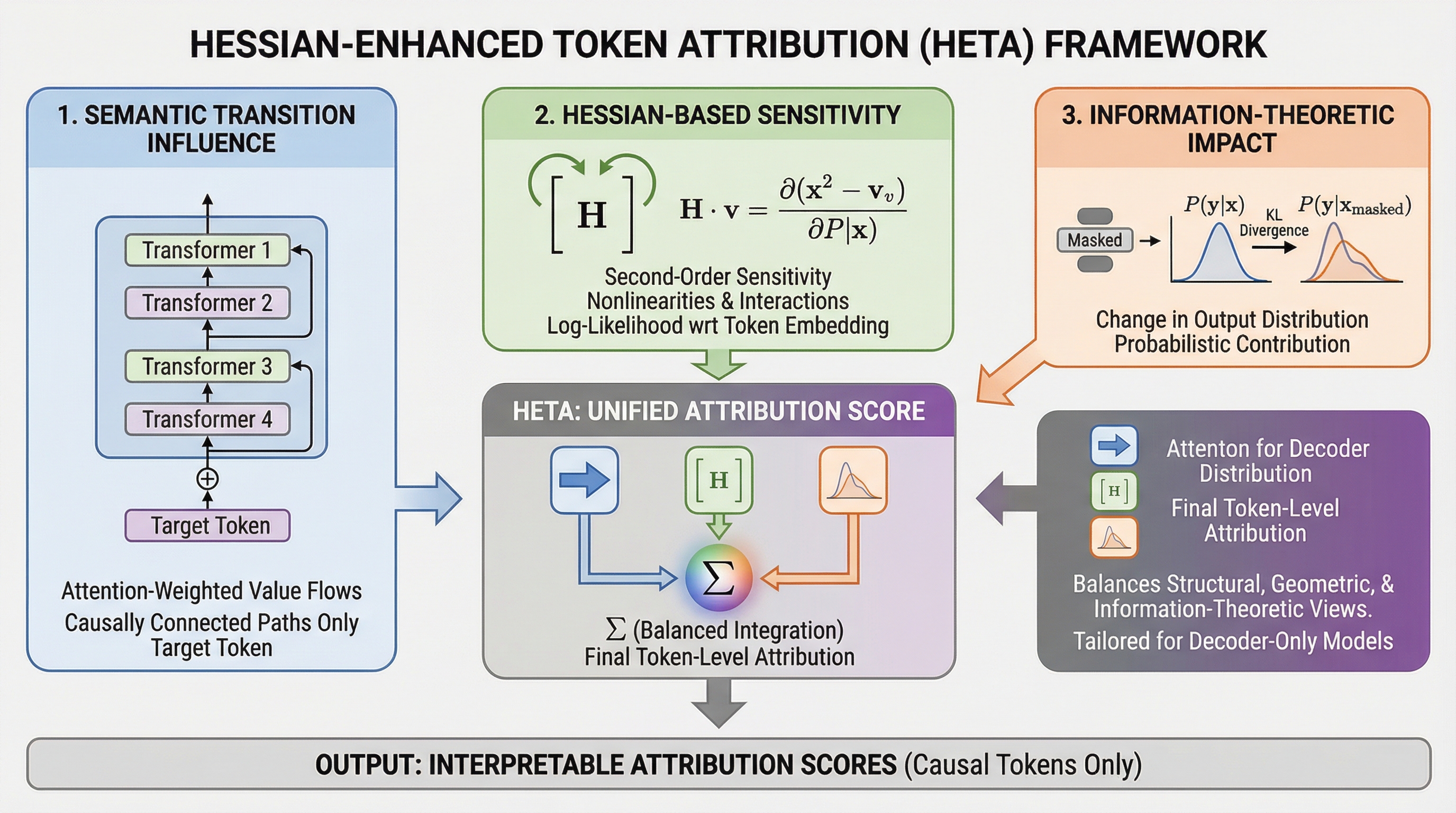}
  \caption{\textbf{Overview of HETA.} The pipeline (a) rolls out attention--value flows that end at the target token to form a causal gate over input tokens, (b) estimates token-level curvature via scalable Hessian--vector products to capture nonlinear interactions, and (c) measures KL-based information impact under token masking. The final attribution combines causal gating, curvature sensitivity, and information gain to produce target-conditioned, token-level explanations.
}

  \label{fig:main_heta_fig}
\end{figure*}

A wide range of interpretability methods such as LIME~\cite{ribeiro2016should}, KernelSHAP~\cite{lundberg2017unified}, Integrated Gradients~\cite{sundararajan2017axiomatic}, Grad-CAM~\cite{selvaraju2017grad}, and Layer-wise Relevance Propagation (LRP)~\cite{bach2015pixel} have been developed under the classical feature attribution paradigm, which aims to quantify the contribution of input features to a model’s output. Most of these methods are based on linear or first-order derivative approximations and assume local model linearity. However, this assumption often breaks down in the context of autoregressive language models, where token interactions are nonlinear and highly contextual. Despite their practical utility, these techniques frequently produce inconsistent attributions for the same input and model~\cite{hooker2019benchmark}, casting doubt on their reliability. Although some efforts have introduced axiomatic foundations to formalize attribution~\cite{han2022explanation,bressan2024theory}, a universally accepted definition of explanation quality remains elusive. Furthermore, these attribution methods have primarily been designed for encoder-based architectures, and recent work~\cite{zhao2024reagent} shows that directly applying them to decoder-only language models in generative tasks is non-trivial and often unfaithful. The discrepancy arises from architectural and functional differences, where encoder models leverage bidirectional attention and require a single attribution map, while decoder-only models generate outputs autoregressively and demand attribution at each token position. Figure \ref{fig:main_heta_fig} illustrates the complexity of the attribution task for a generative model, highlighting how input words contribute to the generation of a specific output word. Although model-agnostic approaches have been proposed for generative settings~\cite{zhao2024reagent}, they typically ignore the dense semantic structure encoded in the internal layers of large language models~\cite{chen2024distributional,xu2020theory}, thereby limiting their ability to capture deep token-level influence. 

To address the shortcomings of gradient-based attribution methods in autoregressive models, we propose \textbf{Hessian-Enhanced Token Attribution (HETA)}, a framework tailored for decoder-only architectures. HETA combines semantic flow tracing, Hessian-based sensitivity, and KL-based information loss to yield faithful, token-level attributions. By modeling attention-weighted value flow and capturing second-order and informational effects, HETA offers a principled and robust alternative that respects the causal and contextual structure of generative language models. The key contributions of this work are:

\begin{itemize}
    \item We propose \textbf{Hessian-Enhanced Token Attribution (HETA)}, a novel attribution method for decoder-only language models that integrates semantic flow for causal directionality, Hessian-based sensitivity for capturing second-order interactions, and KL-divergence for quantifying information-theoretic impact.
    \item We construct and release a new curated dataset specifically designed for evaluating token-level attributions in autoregressive generation tasks, enabling systematic assessment of attribution faithfulness, robustness, and human alignment.
    \item We conduct extensive experiments across four diverse decoder-only models and a broad suite of strong attribution baselines. Results show that HETA consistently outperforms existing methods in faithfulness, robustness, and semantic alignment, while exhibiting remarkable stability under both decoding hyperparameter changes and syntactic rephrasings.
\end{itemize}


\section{Motivation}

Understanding which input tokens influence a generative language model’s output is central to interpretability. However, existing attribution methods based on attention weights or first-order sensitivity are fundamentally limited in faithfully capturing token-level influence.

Attention-based methods \cite{abnar2020quantifying}, while widely used, are not reliable indicators of causal influence. Attention weights reflect where the model attends, not what actually affects the output, and can often be perturbed without significantly altering predictions \cite{jain2019attention}. Moreover, attention mechanisms—especially when aggregated across layers or heads—often fail to account for indirect or multi-hop influence paths that propagate through residual connections and MLP layers \cite{lu2021influence}. In decoder-only models, attention is explicitly masked to prevent information flow from future tokens; yet, post-hoc attention aggregations that disregard this constraint may inadvertently assign importance to tokens that are not causally connected to the output. As a result, attention alone should not be treated as a faithful attribution signal.

First-order attribution methods, such as pointwise gradients, Input$\times$Gradient \cite{shrikumar2017learning}, and DIG \cite{sanyal2021discretized}, measure the local linear sensitivity of the model’s output with respect to its inputs. While computationally efficient, these methods can entirely miss meaningful influence in regions where the gradient vanishes but the function remains sensitive to finite perturbations. Integrated Gradients (IG) \cite{sundararajan2017axiomatic} partially addresses this by accumulating gradients along a path from a baseline to the input, but its attributions depend heavily on the choice of baseline and path, and can underrepresent influence near sharp transitions or in highly nonlinear regimes.

To illustrate this limitation, consider the model’s prediction function $f(x) = \log P(x_T \mid x_{<T})$, where $x$ denotes the concatenated input embeddings (or intermediate hidden states) and $T$ is the target position. Even if $f$ is differentiable, it is possible for the partial derivative with respect to some input dimension to be zero while a finite perturbation in that direction still causes a nontrivial change in $f(x)$. More formally, for some $i \in \{1, \dots, n\}$, we may have $\frac{\partial f(x)}{\partial x_i} = 0$ but $f(x + \epsilon e_i) \neq f(x)$ for some $\epsilon > 0$, indicating that the gradient fails to detect influence that manifests at higher orders.

This phenomenon is captured by the second-order Taylor expansion:
\[
f(x) = f(x_0) + \nabla f(x_0)^\top (x - x_0) + \tfrac{1}{2}(x - x_0)^\top \nabla^2 f(\xi)(x - x_0),
\]
for some $\xi$ on the segment between $x_0$ and $x$. When the gradient at $x_0$ vanishes, changes in the function are driven entirely by the curvature encoded in the Hessian, and the contribution scales quadratically with the perturbation norm. For example, in the smooth activation $f(x) = \log(1 + \exp(w^\top x + b))$, the gradient can be nearly zero in the saturated regime ($w^\top x + b \ll 0$), yet the function value still changes significantly for finite perturbations. In such cases, gradient-based attribution methods underestimate or entirely miss the relevant influence (see appendix\ref{app1} for details).

Recent studies like, ContextCite~\cite{cohen2024contextcite}, TDD~\cite{feng2024unveiling}, and \textit{Peering into the Mind of LMs}~\cite{phukan-etal-2024-peering} all aim to attribute outputs in generative language models, but each exhibits notable limitations. ContextCite relies on a sparse linear surrogate trained through extensive ablations, which makes it sensitive to redundancy and indirect dependencies, computationally expensive, and limited to sentence-level attribution without next-token conditioning. TDD projects hidden states through the model's output head (logit lens), conflating correlation with causation and producing saliency scores that are highly sensitive to the choice of target–alternative token pairs and vocabulary dynamics. \textit{Peering} matches hidden states of generated answer tokens to context tokens using cosine similarity with layer-specific thresholds, a representation-matching approach that performs well primarily for verbatim spans but lacks robustness to paraphrasing, reordering, or indirect evidence chains.

These limitations, namely, the inability of attention to capture causal importance and the failure of first-order gradients to model non-linear sensitivity, underscore the need for more robust attribution frameworks. Existing methods remain unstable under decoding hyperparameter changes and syntactic rephrasings (see Section\ref{sec:exp_1}). We propose \textbf{Hessian-Enhanced Token Attribution (HETA)}, which integrates causal semantic flow, second-order sensitivity, and output-aware information gain. Together, these components yield stable, faithful, and interpretable attributions for decoder-only LMs.

\section{Background}

Understanding token-level influence in transformer models requires going beyond raw attention weights or local gradients. Two complementary strands of research have highlighted important limitations and proposed more robust alternatives. \cite{kobayashi2020attention} demonstrated that attention weights alone are insufficient for faithful interpretation, as they neglect the scale of the value vectors being attended to. They proposed a norm-based approach that combines attention weights with the magnitude of the transformed value projections, offering a more accurate view of token influence within self-attention. This formulation captures not only alignment (via attention) but also semantic strength (via vector norms), leading to more faithful attributions.

Separately, Hessian-based sensitivity methods provide deeper insight into model behavior by accounting for second-order interactions between inputs and outputs. Specifically, the Hessian of the log-likelihood with respect to input embeddings,$
H_T = \nabla_X^2 \log P(x_T \mid x_{<T})$,
captures local curvature and reveals how token effects manifest in nonlinear regions of the model's decision surface. Unlike first-order methods which can fail in flat regions or under poor baseline selection, second-order methods remain informative even when gradients vanish. Prior studies (\cite{dong2025towards}, \cite{alvarez2018robustness}, \cite{dhamdhere2018importantneuron}) support the use of Hessian-based approaches to uncover latent influences in deep architectures.

Together, these techniques underscore the importance of considering both semantic flow and higher-order sensitivity to capture faithful token attributions in transformer models.

\section{Our Methodology}

We propose \textbf{Hessian-Enhanced Token Attribution (HETA)}, a principled framework that integrates these perspectives into a \emph{unified influence decomposition}. Our central view is that token attribution in autoregressive models should estimate a token’s \emph{directional, target-conditioned causal contribution} to the log-likelihood of the current target token, incorporating both \emph{semantic path dependencies} and \emph{higher-order effects}. HETA achieves this through three complementary components: \textbf{(1) Semantic Transition Influence}, which captures how tokens propagate influence through compositional attention–value flows across layers, enforcing causal directionality toward the target. \textbf{(2) Hessian-Based Sensitivity}, which models second-order curvature of the log-likelihood surface with respect to token embeddings, capturing nonlinear and interaction effects. \textbf{(3) Information-Theoretic Impact}, which measures the change in predictive uncertainty when a token is masked, providing a probabilistic interpretation of its contribution. Together, these components form a mathematically grounded attribution score that balances structural, geometric, and information-theoretic perspectives on token influence.

Let \(x_{1:T}\) be the input sequence with embeddings \(\mathbf X=(\mathbf e_1,\dots,\mathbf e_T)\in\mathbb R^{T\times d}\). A decoder-only model \(f_\theta\) defines the conditional distribution over the \emph{target} token:
\begin{equation}
\label{eq1}
P_\theta(x_T \mid x_{<T})=\mathrm{Softmax}\!\big(f_\theta(\mathbf X)\big).
\end{equation}
Our goal is a nonnegative score \(\mathrm{Attr}(x_i\!\to\!x_T)\) quantifying the contribution of token \(x_i\) to predicting \(x_T\). The score combines (i) semantic transition influence, (ii) Hessian-based sensitivity, and (iii) KL-based information loss.

\paragraph{Semantic Flow for Causal Token Influence}
To ensure \emph{target-conditioned} causality, we trace attention-weighted value flow that \emph{terminates at position \(T\)} under the decoder’s causal mask. For each layer \(l\in\{1,\dots,L\}\) and head \(h\in\{1,\dots,H\}\), let \(A^{(l,h)}\in\mathbb R^{T\times T}\) be the masked attention matrix, \(V^{(l,h)}\in\mathbb R^{T\times d}\) the value vectors, and \(W_O^{(l,h)}\in\mathbb R^{d\times d}\) the output projection. We compute a target-conditioned attention rollout \(\Phi^{(l,h)}(i\!\to\!T)\) (e.g., \cite{abnar2020quantifying}) that aggregates only paths ending at \(T\). The semantic transition influence is
\[
M_T[i] \;=\; \frac{1}{Z}\sum_{l=1}^{L}\sum_{h=1}^{H}\Phi^{(l,h)}(i\!\to\!T)\,\big\| V^{(l,h)}_{i} W_O^{(l,h)}\big\|_{1},
\qquad
Z=\sum_{j=1}^{T}\sum_{l,h}\Phi^{(l,h)}(j\!\to\!T)\,\big\| V^{(l,h)}_{j} W_O^{(l,h)}\big\|_{1}.
\]
Thus \(M_T\in\mathbb R_{\ge 0}^{T}\) is simplex-normalized (\(\sum_i M_T[i]=1\)) and assigns mass only to tokens with causal paths to \(T\).

\paragraph{Hessian-Based Sensitivity Analysis}
To capture second-order effects, we consider the Hessian of the target log-probability with respect to \(\mathbf X\):
\begin{equation}
    H_T \;=\; \nabla_{\mathbf X}^{2}\log P_\theta(x_T\mid x_{<T}) \;\in\; \mathbb R^{(Td)\times (Td)}.
\end{equation}
Explicitly forming \(H_T\) is infeasible for realistic \(T,d\). We therefore estimate block sensitivities via Hessian–vector products (HVPs) with Hutchinson estimators. Let \(\Pi_i\) select token \(i\)’s \(d\)-dimensional block. With Rademacher vectors \(r_k\) supported on that block, the sensitivity used in ~\eqref{eq4} is
\begin{equation}
\label{eq2}
S_i^{(T)} \;\approx\; \frac{1}{m}\sum_{k=1}^{m}\Big\| \Pi_i\, H_T\, (\Pi_i r_k)\Big\|_{1},
\end{equation}
where each HVP is computed by Pearlmutter’s trick; we optionally use a Gauss–Newton/Fisher surrogate for numerical stability. We report \(m\), runtime, and memory in our experiments.

\paragraph{KL Divergence for Information Contribution}
To quantify a token’s contribution to the \emph{target} distribution, we compare predictions at \(T\) with and without information from \(x_i\). For each token $x_i$, we mask it and measure how the output distribution over the target token changes. For a chosen scheme, let \(P_{\text{orig}}(\cdot\mid x_{<T})\) and \(P_{\text{masked}}^{(i)}(\cdot\mid x_{<T})\) denote the target distributions. The information contribution is
\begin{equation}
\label{eq3}
\mathcal I(x_i\!\to\!x_T) \;=\; D_{\mathrm{KL}}\!\left(P_{\text{orig}}(\cdot\mid x_{<T}) \,\big\|\, P_{\text{masked}}^{(i)}(\cdot\mid x_{<T})\right).
\end{equation}

\paragraph{Final Attribution Score}
We combine the three components into a target-conditioned attribution:
\begin{equation}
\label{eq4}
\mathrm{Attr}(x_i\!\to\!x_T) \;=\; M_T[i]\;\big(\beta\, S_i^{(T)} \;+\; \gamma\, \mathcal I(x_i\!\to\!x_T)\big),
\end{equation}
where \(\beta,\gamma\ge 0\) weight curvature-based sensitivity and information contribution, respectively. The gate \(M_T[i]\) restricts attribution to tokens with causal paths to the target and redistributes mass over such paths. In conjunction with the scalable HVP-based curvature estimator (~\eqref{eq2}) and the output-aware information term (~\eqref{eq3}), ~\eqref{eq4} yields a causally grounded, curvature-aware, and robust token-level attribution tailored for decoder-only generative models.

An overview of our method is illustrated in Figure~\ref{fig:main_heta_fig}, with the full algorithmic procedure provided in Appendix~\ref{app3}. The theoretical properties and error bounds of HETA are discussed in detail in Appendix~\ref{app2}.

\section{Experiments and Results}
\label{sec:exp_1}
\subsection{Experimental Setup and Datasets}

To evaluate the proposed HETA framework, we conduct experiments on both benchmark and curated datasets covering a wide range of reasoning and generation complexity.

We use established benchmarks from \cite{zhao2024reagent}. The datasets considered are: (1) \textbf{Long-Range Agreement (LongRA)}~\cite{vafa2021rationales}, which evaluates a model's ability to maintain coherence across long-distance semantic dependencies by inserting distractor sentences between related word pairs (e.g., ``Japan'' and ``Tokyo''); (2) \textbf{TellMeWhy}~\cite{lal2021tellmewhy}, a narrative QA dataset that requires multi-sentence causal reasoning to explain a character's motivations; and (3) \textbf{WikiBio}~\cite{manakul2023selfcheckgpt}, composed of structured Wikipedia biographies where the task involves generating plausible and factual sentence continuations from short prompts.In addition, we introduce a carefully \textbf{curated dataset} of 2{,}000 mixed-paragraph instances to evaluate whether attribution aligns with the truly predictive evidence in context. Each instance concatenates one narrative segment from \textsc{NarrativeQA} \cite{kovcisky2018narrativeqa} with the answer-bearing support segment from \textsc{SciQ} \cite{SciQ}, followed by the corresponding \textsc{SciQ} question; the model then generates the first answer token, and attributions are computed with respect to this onset token so the model has access to the full paragraph and question before attribution is measured. For example:
\begin{quote}
\textit{The protagonist returns to the village after the winter storm, reflecting on her father’s passing. Photosynthesis primarily occurs in the leaves of the plant, where chloroplasts capture light. Question: In which part of the plant does photosynthesis mainly take place?}
\end{quote}
Here, the correct target is \textit{leaves}, and the meaningful contributing tokens lie in the second (SciQ) segment; the first (NarrativeQA) segment is semantically rich but non-diagnostic for the question. Answer spans and minimal supporting cues in the SciQ segment are automatically annotated by two independent systems (GPT-4o and GPT-5~Thinking), and we take their \emph{intersection} at the subword level to form high-precision labels; inter-annotator agreement is high (\textbf{F$_1$ = 0.91}, \textbf{Cohen’s $\kappa$ = 0.89}) across the corpus. To quantify alignment, we report the \emph{Dependent Sentence Attribution} score (described below), which contrasts attribution mass on annotated tokens in the SciQ segment against mass placed on the entire NarrativeQA segment after per-instance normalization. This construction provides a compact, interpretable probe of target-conditioned attribution quality.

\begin{wraptable}{r}{0.6\textwidth} 
    \centering
    \scriptsize
    \vspace{-10pt} 
    \caption{
    \textbf{Attribution alignment on the curated dataset} using the 
    (Dependent Sentence Attribution) metric. We evaluate robustness across both model size and architecture by testing HETA on diverse decoder-only LMs, spanning different parameter scales and design choices. Higher scores indicate stronger alignment with human-annotated tokens. HETA outperforms all baselines. }
    \begin{tabular}{l|c|c|c|c}
        \toprule
        \textbf{Method} &
        \textbf{GPT-J} &
        \textbf{LLaMA} &
        \textbf{Phi-3} &
        \textbf{Qwen2.5} \\
        \midrule
        IG                & -0.34 & -0.28 & -0.41 & -0.31 \\
        ContextCite       & -0.12 & -0.09 & -0.18 & -0.14 \\
        Peering (PML)              & -0.25 & -0.21 & -0.30 & -0.22 \\
        TDD-backward               & -0.31 & -0.27 & -0.36 & -0.29 \\
        Attention Rollout          & -0.44 & -0.39 & -0.52 & -0.41 \\
        fAML                       & 2.10  & 2.30  & 2.05  & 2.20 \\
        Progressive Inference      & 2.65  & 2.88  & 2.40  & 2.73 \\
        SEA-CoT                    & 2.92  & 3.15  & 2.77  & 2.85 \\
        ReAgent                    & 3.60  & 3.78  & 3.35  & 3.50 \\
        \textbf{HETA (Ours)}       & \textbf{4.80} & \textbf{5.10} & \textbf{4.25} & \textbf{4.65} \\
        \bottomrule
    \end{tabular}
    \label{tab:curated_dsa_horizontal}
\end{wraptable}

HETA is compared against a broad suite of attribution methods: \textbf{ContextCite}\cite{cohen2024contextcite}, \textbf{Integrated Gradients}\cite{sundararajan2017axiomatic}, \textbf{Peering into the Mind of LMs (PML)}\cite{phukan-etal-2024-peering}, \textbf{TDD-backward}\cite{feng2024unveiling}, \textbf{attention rollout}\cite{abnar2020quantifying}, \textbf{fAML}\cite{barkan2024llm}, \textbf{Progressive Inference}\cite{kariyappa2024progressive}, \textbf{SEA-CoT}\cite{palikhe2025towards}, and \textbf{ReAgent}\cite{zhao2024reagent}. To measure attribution faithfulness, we use \textbf{Soft-NC} and \textbf{Soft-NS}~\cite{zhao2023incorporating}, modified for generative models as in \cite{zhao2024reagent}, which assess how output distributions shift under input perturbation based on attribution scores.

\paragraph{Controlled Attribution Evaluation via the DSA Metric.}
To assess attribution alignment on the curated dataset, we introduce the \textbf{Dependent Sentence Attribution (DSA)} metric. This metric quantifies the degree to which attribution mass is correctly concentrated on the answer-relevant portion of the input, specifically, the second part of each curated paragraph (SciQ), which contains the evidence required to answer the question.

Formally, let \(S_{NarrQA}\) and \(S_{SciQ}\) denote the set of model-selected important token indices within the first and second part of the curated text, and let \(ss_i\) and \(fs_i\) be the normalized attribution scores assigned to token \(i\) in the second and first part, respectively; the DSA score is defined as \(\mathrm{DSA} = \sum_{i\in S_{SciQ}} ss_i \;-\; \sum_{j\in \text{\(S_{NarrQA}\)}} fs_j\), with attributions normalized so the total mass over the paragraph sums to one and the final DSA reported as the average over all instances.

Higher DSA values indicate that the attribution method assigns more mass to the truly predictive evidence (in the second part) and less to unrelated context (in the first part), thereby reflecting better alignment with causal semantics. DSA complements traditional faithfulness metrics by directly evaluating attribution precision under a controlled, interpretable input structure.

We evaluate attribution quality using both perturbation-based faithfulness metrics (Soft-NC, Soft-NS) and alignment-based analysis on a curated dataset (DSA). Experiments are conducted across four transformer models: \textbf{Qwen2.5-3B (Alibaba Qwen)}\footnote{\url{https://huggingface.co/Qwen/Qwen2.5-3B}}(results are shown in Appendix\ref{app3} due to space constraint), \textbf{GPT-J-6B (EleutherAI)\cite{gpt-j}: \footnote{\url{https://huggingface.co/EleutherAI/gpt-j-6b}}}, \textbf{Phi-3-Medium-4K-Instruct (14B, Microsoft): \footnote{\url{https://huggingface.co/microsoft/Phi-3-medium-4k-instruct}}
}, and \textbf{Llama-3.1-70B (Meta) \footnote{\url{https://huggingface.co/meta-llama/Llama-3.1-70B}}}. This selection enables analysis across varying model capacities, architectures, and parameter scales. For the Qwen model the results are shown in Appendix \ref{app3} due to space constraints.

\paragraph{Hyperparameters and compute.}
All experiments run on a single NVIDIA A100 (80~GB). Unless noted, we set the HETA weights in the final score to \(\beta=0.5\) (Hessian sensitivity) and \(\gamma=0.5\) (KL information) and evaluate sequences of length \(512\) with batch size \(16\); long-context runs use length \(2{,}048\) with windowing and batch size \(8\). For efficiency, our default is a \textbf{low-rank Hessian with windowing (HETA-LR+WIN)}: a rank-64 blockwise low-rank estimator applied within 512-token windows with 50\% overlap. This variant closely matches full HETA while substantially reducing cost (see Appendix Tables~\ref{tab:hessian_ablation}, \ref{tab:long_context}). For \textbf{LLaMA-3.1-70B}, we further cut compute by computing second-order terms only in the \emph{last six layers}, which preserves attribution quality in practice. For smaller models (e.g., GPT-J, OPT), we compute second-order terms across \emph{all layers} unless otherwise specified. In ablations, we also report \textbf{HETA-LR} (rank \(=64\)) without windowing and \textbf{HETA-LS} (6-layer sampling) without low-rank, keeping all other hyperparameters fixed for a fair comparison.

\begin{figure}[t]
    \centering
    \begin{subfigure}[b]{0.24\textwidth}
        \centering
        \includegraphics[width=\linewidth]{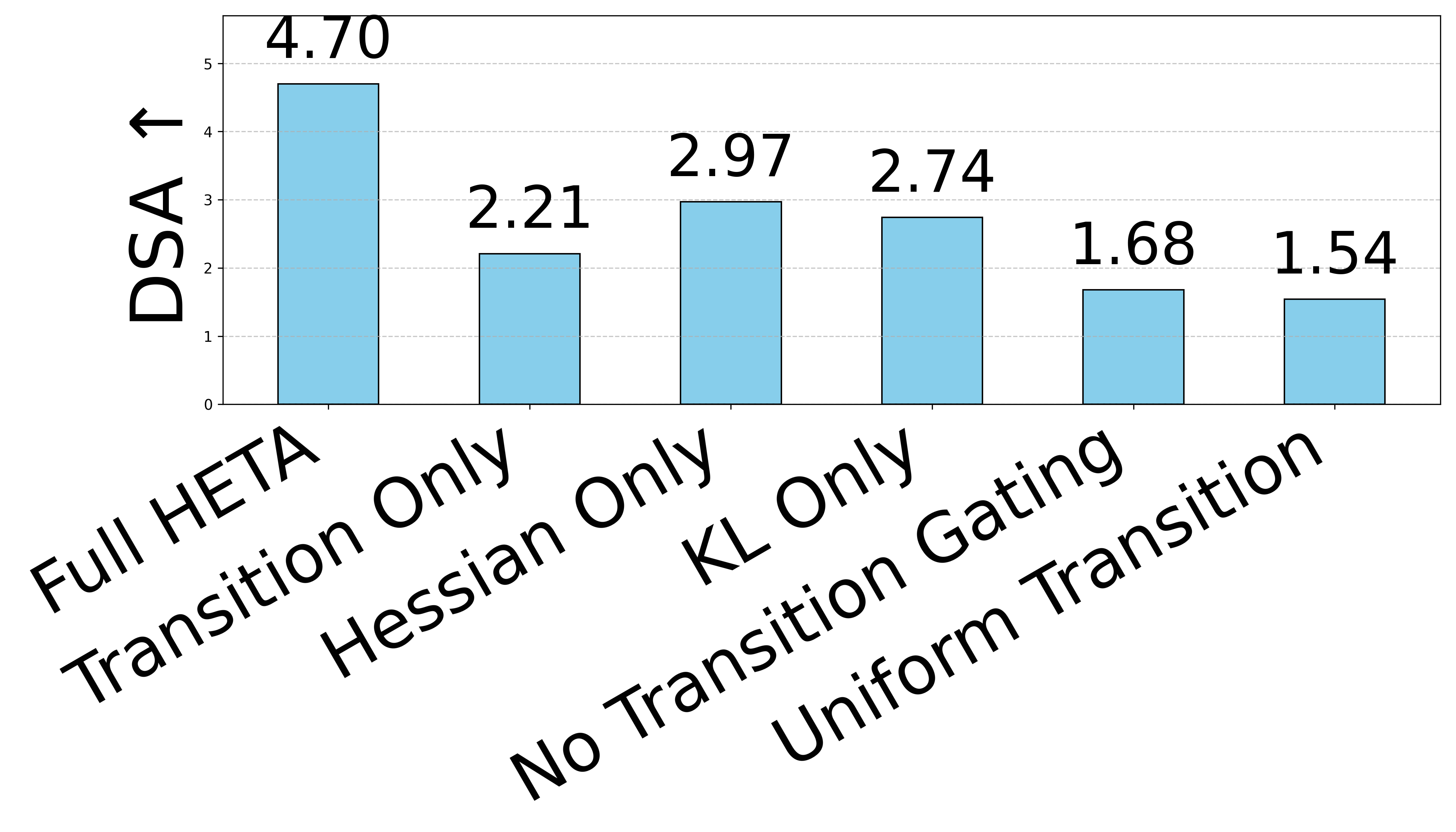}
        \caption{}
        \label{fig:soft_nc}
    \end{subfigure}
    \hfill
    \begin{subfigure}[b]{0.24\textwidth}
        \centering
        \includegraphics[width=\linewidth]{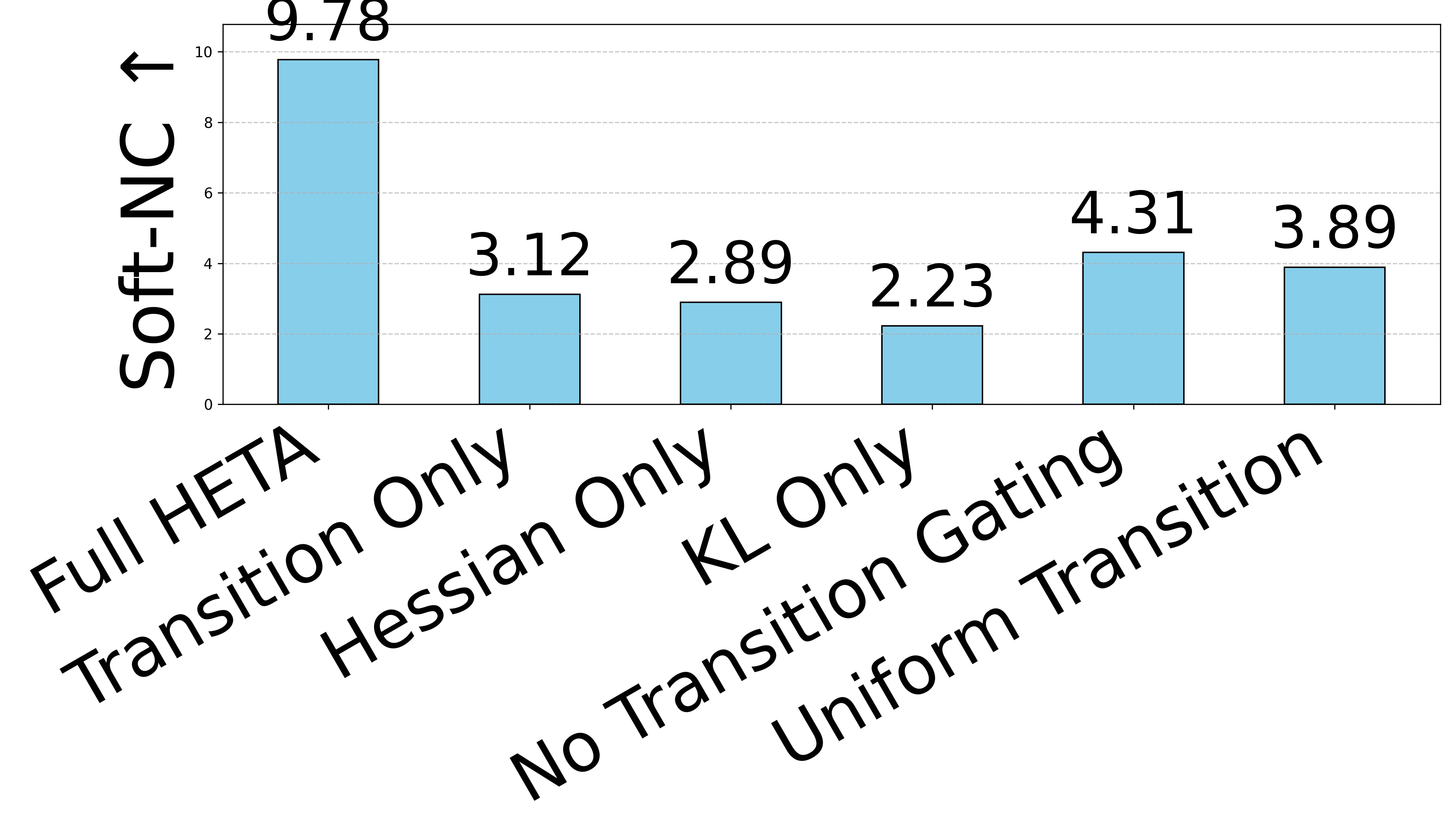}
        \caption{}
        \label{fig:soft_ns}
    \end{subfigure}
    \hfill
    \begin{subfigure}[b]{0.24\textwidth}
        \centering
        \includegraphics[width=\linewidth]{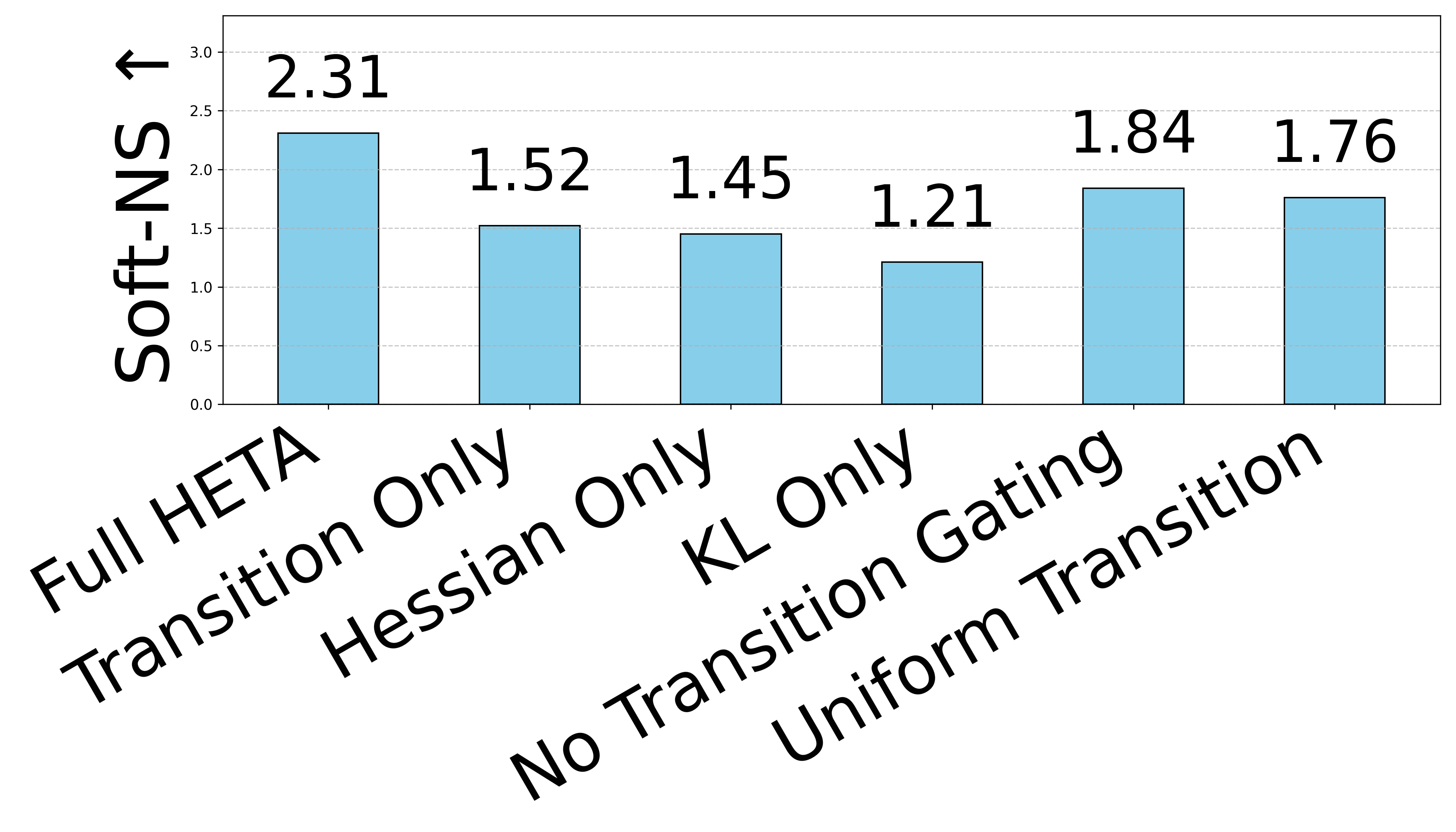}
        \caption{}
        \label{fig:dsa}
    \end{subfigure}
    \hfill
    \begin{subfigure}[b]{0.24\textwidth}
        \centering
        \includegraphics[width=\linewidth]{HETA.pdf}
        \caption{}
        \label{fig:optional}
    \end{subfigure}
    \caption{
\textbf{(a)-(c) Analysis of HETA components.} Each bar plot shows the effect of ablating key components of HETA. The full HETA model achieves the highest attribution faithfulness and alignment across all metrics, while removing individual components consistently degrades performance.(d) Input importance distributions for a generative task using our proposed HETA method.
}

    \label{fig:ablation_metrics}
\end{figure}

\begin{figure}[t]
    \centering
    \begin{subfigure}[b]{0.24\textwidth}
        \centering
        \includegraphics[width=\linewidth]{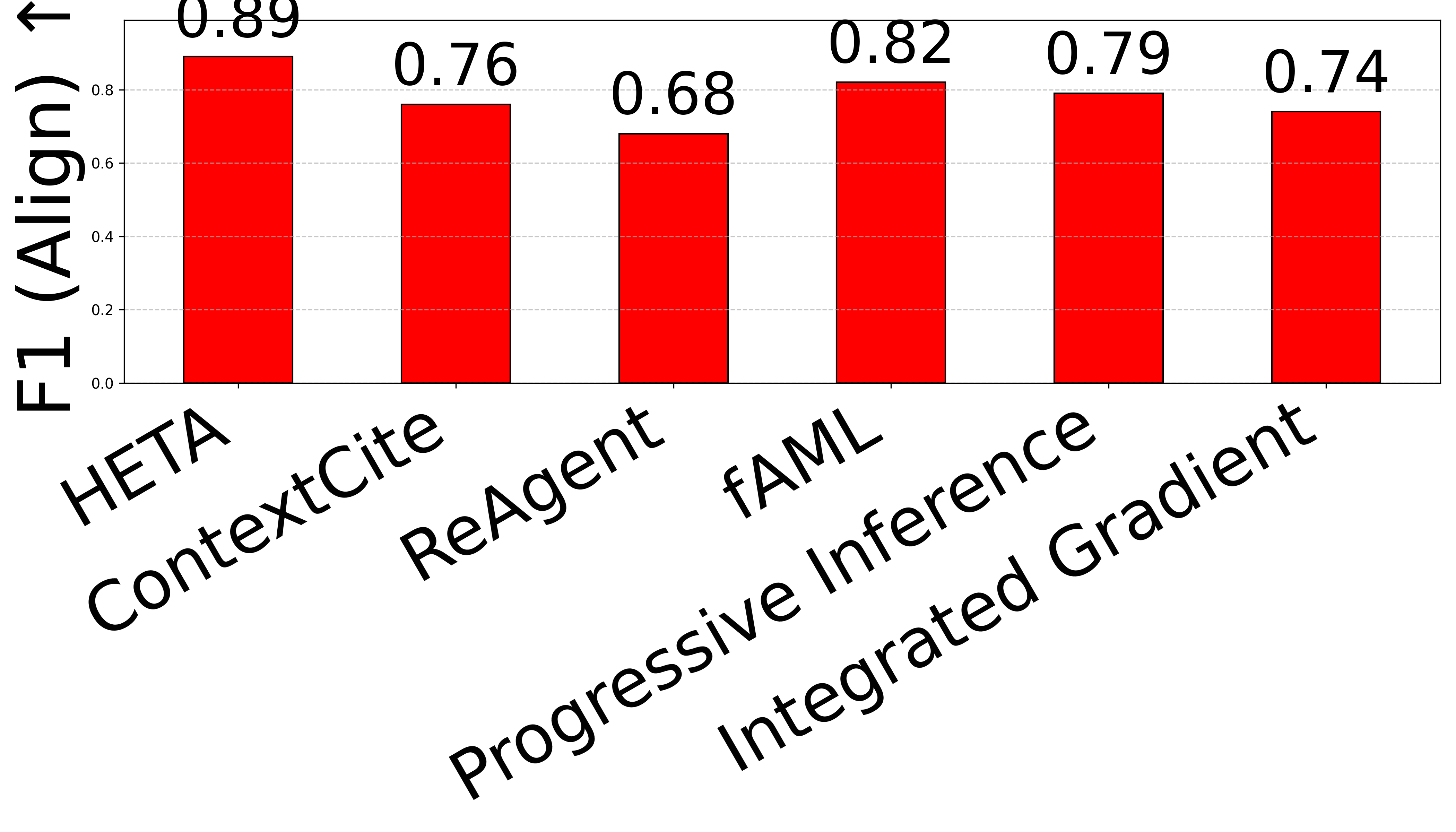}
        \caption{}
        \label{fig:soft_nc_1}
    \end{subfigure}
    \hfill
    \begin{subfigure}[b]{0.24\textwidth}
        \centering
        \includegraphics[width=\linewidth]{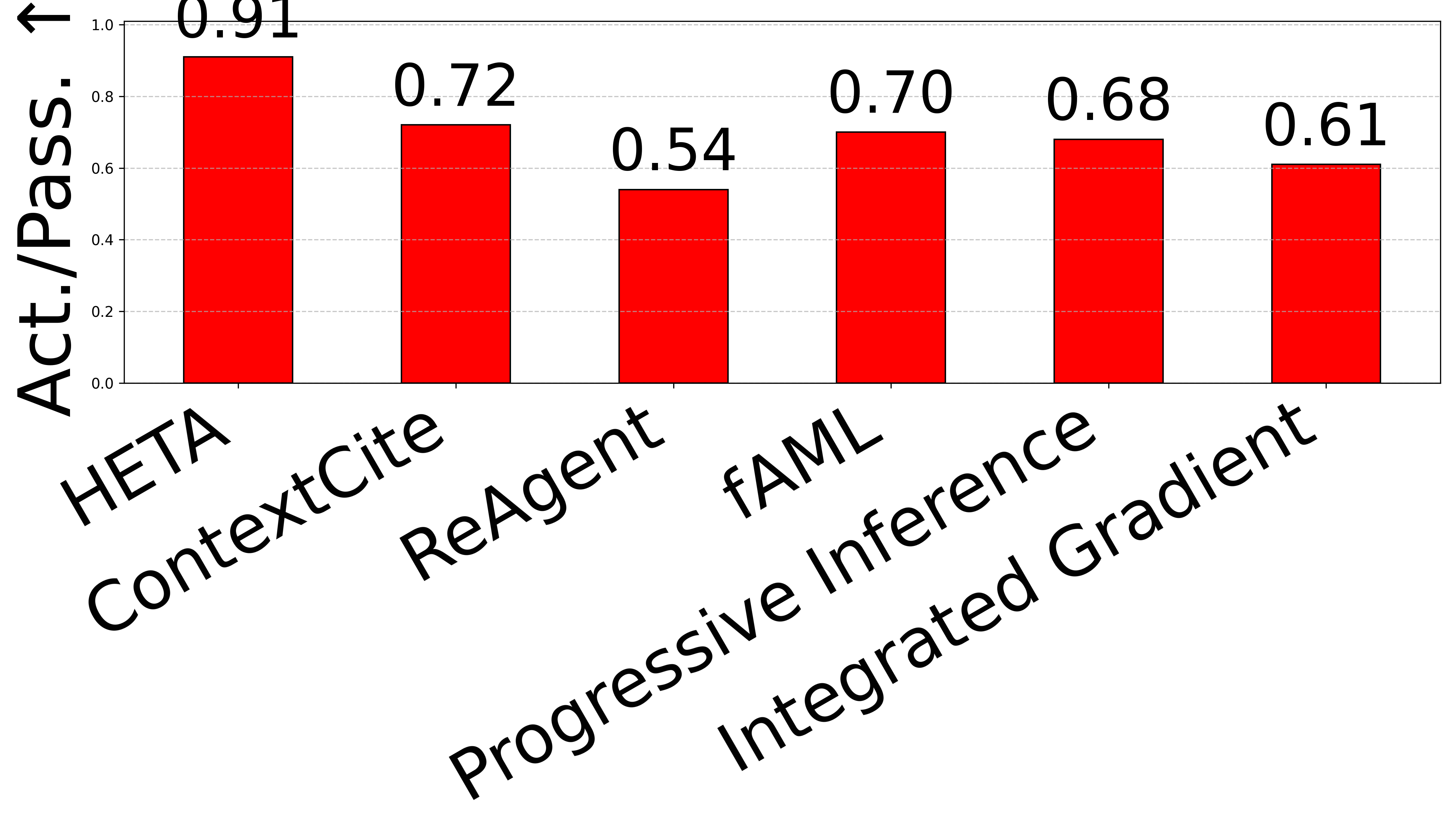}
        \caption{}
        \label{fig:soft_ns_1}
    \end{subfigure}
    \hfill
    \begin{subfigure}[b]{0.24\textwidth}
        \centering
        \includegraphics[width=\linewidth]{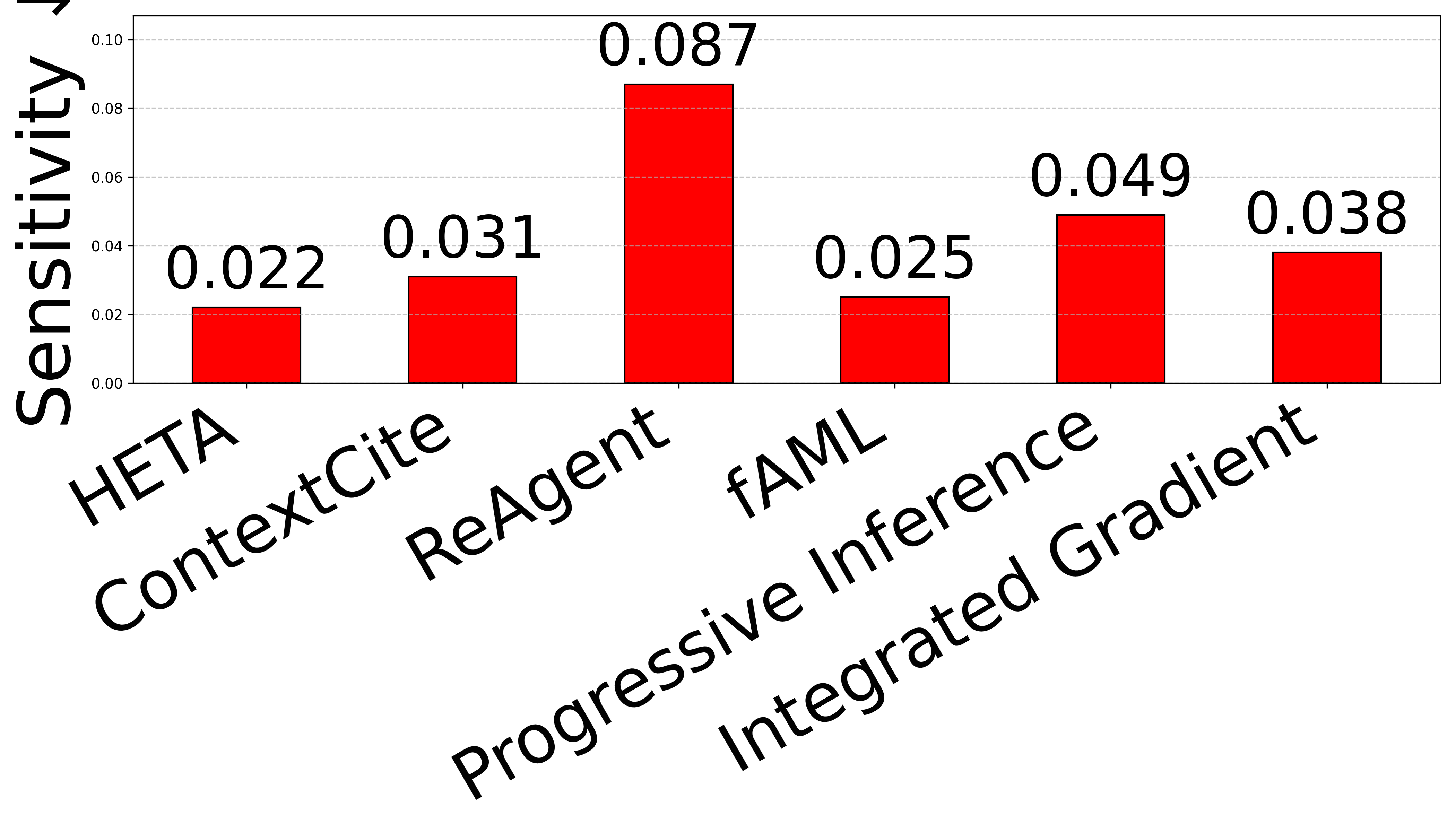}
        \caption{}
        \label{fig:dsa_1}
    \end{subfigure}
    \hfill
    \begin{subfigure}[b]{0.24\textwidth}
        \centering
        \includegraphics[width=\linewidth]{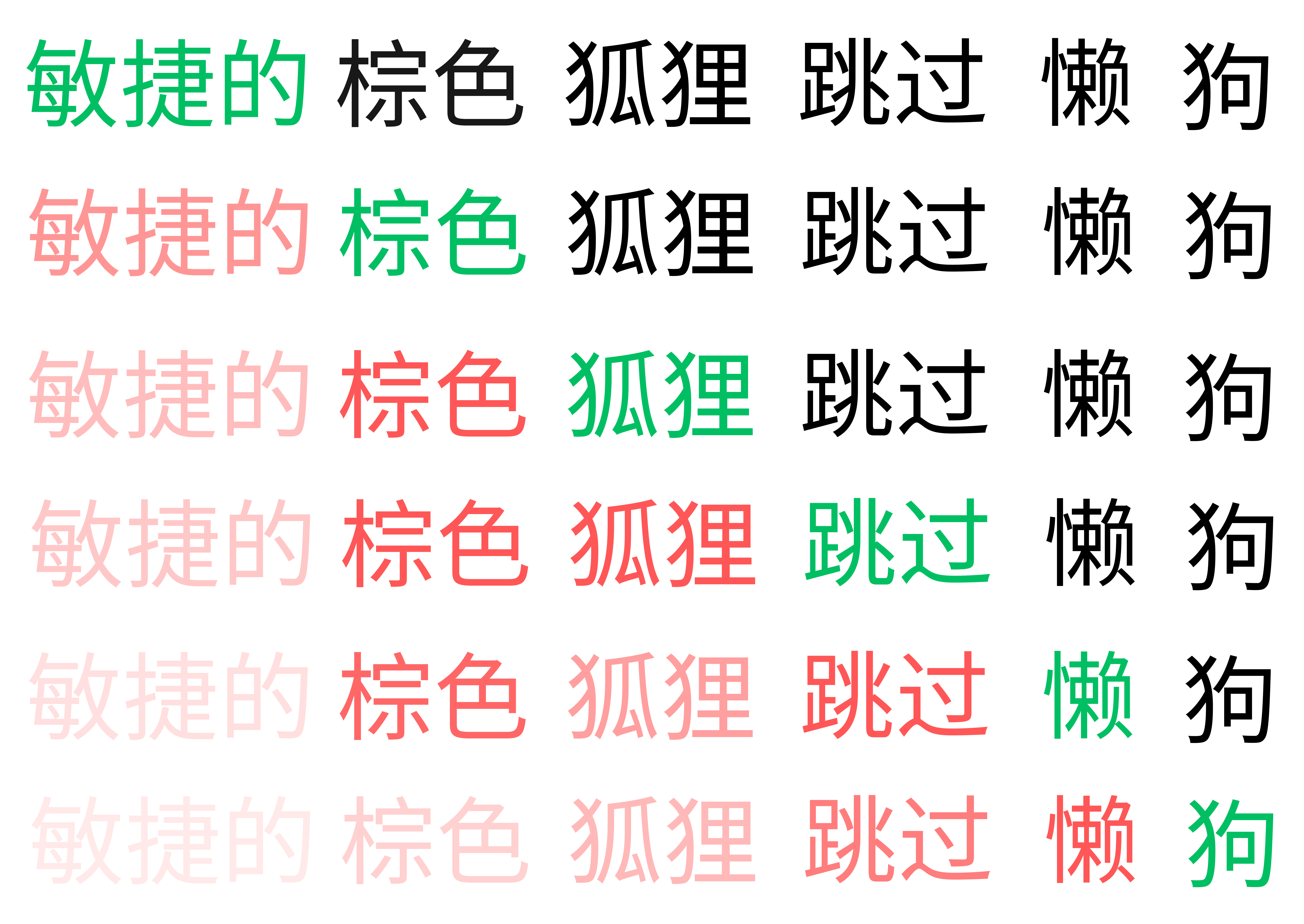}
        \caption{}
        \label{fig:optional_1}
    \end{subfigure}
    \caption{
\textbf{(a)-(c) Analysis of robustness of HETA vs baseline methods} \textbf{(Left)} Sensitivity under Gaussian perturbations (lower is better), where HETA maintains the lowest variance across input noise. \textbf{(Center)} Active/Passive robustness (higher is better), reflecting attribution consistency across syntactic rephrasings. \textbf{(Right)} Alignment F1 score against annotated tokens (higher is better). HETA outperforms all baselines, validating the complementary role of transition flow, curvature, and information gain.(d) Input importance distributions for a generative task using our proposed HETA method using Qwen 2.5 3B.
}

    \label{fig:ablation_metrics2}
\end{figure}

\subsection{Results}

According to Table~\ref{tab:faithfulness_combined}, across all model-task combinations, \textbf{HETA achieves the highest Soft-NC and Soft-NS scores}, demonstrating superior attribution robustness under input perturbations. For instance, on GPT-J 6B, HETA attains a Soft-NC of 10.3 on \textsc{LongRA} and 9.2 on \textsc{TellMeWhy}—exceeding the next best method, ReAgent, by over 2$\times$. Similar trends hold across Phi-3 and Llama-3.1, confirming HETA’s effectiveness across model scales. While \textbf{ReAgent consistently ranks second}, recent methods such as SEA-CoT, and Progressive Inference show moderate improvements over traditional techniques. In contrast, \textit{Integrated Gradients} and attention-based variants often yield low or negative Soft-NS values, indicating instability and low attribution faithfulness.

To complement the above, we assess attribution alignment using the DSA metric on a curated dataset (Table~\ref{tab:curated_dsa_horizontal}). Again, \textbf{HETA outperforms all baselines by a substantial margin}, achieving DSA scores $\geq 4.2$ across all models. \textbf{ReAgent remains the strongest non-HETA method}, followed by SEA-CoT. In contrast, gradient- and attention-based methods yield negative DSA values, highlighting their inability to isolate causal tokens in the presence of distractors. \textbf{Results are averaged across all datasets with GPT-J-6B; higher is better. Mean performance is reported over three independent runs with standard deviation $< 0.2$.} These results collectively indicate that \textbf{HETA provides both faithful and semantically aligned attributions}, setting a new state-of-the-art across both benchmark and controlled evaluation settings (please see Appendix~\ref{app3} for further details). We illustrate HETA's token-level attributions with two qualitative examples in Figures~\ref{fig:ablation_metrics}(d) and~\ref{fig:ablation_metrics2}(d).

\subsection{Robustness of HETA}

To further demonstrate the robustness of our methodology, we performed a stress test using three complementary attribution metrics: \emph{Sensitivity}, \emph{Active/Passive Robustness}, and \emph{F1 (Alignment)}. We use the Phi-3 medium model and the TellMeWhy dataset for stress test and ablation studies unless otherwise mentioned. These were evaluated across the six ablated configurations: \textbf{(1) Sensitivity} quantifies attribution stability under small perturbations. For each token embedding \( X_i \), we add Gaussian noise \( \epsilon \sim \mathcal{N}(0,\delta^2 I) \), compute attribution scores over multiple perturbations, and report the average per-token standard deviation: \( \text{Sensitivity} = \frac{1}{T}\sum_{i=1}^{T} \sigma_i \), where \( \sigma_i \) denotes the standard deviation of the attribution score for token \( i \), and \( T \) is the sequence length.\textbf{(2) Active/Passive Robustness} measures syntactic invariance. Given an original sentence and its active/passive rephrasing, we align corresponding tokens and compute the Spearman rank correlation between their attribution rankings: \( \text{Robustness} = \rho(\text{Attr}(x_i \to x_T),\, \text{Attr}(x'_i \to x'_T)) \).\textbf{(3) F1 (Alignment)} evaluates agreement between model attributions and annotations made by GPT-4o and GPT5 in our curated dataset. Let \( \mathcal{A}_{\text{model}} \) denote the top-attributed tokens and \( \mathcal{A}_{\text{anno.}} \) the gold-annotated set. The F1 score is computed as \( \text{F1} = \frac{2\,|\mathcal{A}_{\text{model}} \cap \mathcal{A}_{\text{anno.}}|}{|\mathcal{A}_{\text{model}}| + |\mathcal{A}_{\text{anno.}}|} \).

Figure~\ref{fig:ablation_metrics2} (a-c) shows that the HETA framework consistently yields the lowest sensitivity and the highest robustness and F1 scores compared to the baseline methods, indicating stable, syntax-invariant, and logically-aligned attributions, emphasizing the complementary roles of semantic flow, curvature information, and information gain in delivering reliable token-level attribution.

\subsection{Robustness to Decoding Hyperparameters}
\label{sec:hyper_stability_app}

We evaluate the sensitivity of attribution quality to common decoding hyperparameters. For each method and model, we sweep a fixed grid, temperature $\{0.2,0.5,0.9\}$, top-$p$ $\{0.8,0.9,0.95\}$, top-$k$ $\{20,50,100\}$, and repetition penalty $\{1.0,1.2\}$, across three random seeds. For each metric, we report the \emph{maximum relative change} $\Delta\%$ across the grid (lower is better).

\begin{table}[t]
\centering
\scriptsize
\setlength{\tabcolsep}{3pt}
\caption{\textbf{Sensitivity of attribution metrics to decoding hyperparameters} (max relative change $\Delta\%$; lower is better). HETA varies $<\!1\%$ across all models/metrics; each baseline fluctuates $>\!2\%$. Grid: temperature $\in\{0.2,0.5,0.9\}$, top-$p\in\{0.8,0.9,0.95\}$, top-$k\in\{20,50,100\}$, repetition penalty $\in\{1.0,1.2\}$; 3 seeds.}
\label{tab:hyper_stability_full_1}
\begin{tabularx}{\linewidth}{l l r r r r r r r r r r}
\toprule
\textbf{Model} & \textbf{Metric} & \textbf{HETA} & \textbf{ContextCite} & \textbf{IG} & \textbf{PML} & \textbf{TDD-bw} & \textbf{AttnRoll} & \textbf{fAML} & \textbf{ProgInf} & \textbf{SEA-CoT} & \textbf{ReAgent} \\
\midrule

\multirow{3}{*}{Llama-3.1 70B} 
  & Soft-NC & \textbf{0.6} & 2.6 & 2.7 & 2.4 & 3.1 & 2.3 & 2.5 & 2.4 & 2.5 & 2.3 \\
  & Soft-NS & \textbf{0.8} & 3.1 & 3.0 & 2.8 & 3.4 & 2.7 & 2.9 & 2.8 & 3.0 & 2.6 \\
  & DSA     & \textbf{0.6} & 2.4 & 2.3 & 2.2 & 2.8 & 2.1 & 2.3 & 2.3 & 2.4 & 2.2 \\

\bottomrule
\end{tabularx}
\end{table}

Table~\ref{tab:hyper_stability_full_1} shows that \emph{HETA’s} attribution metrics remain effectively invariant to decoding settings, with worst-case $\Delta\%<1$ for Soft-NC, Soft-NS, and DSA (see Appendix\ref{app4} for more details). In contrast, all baselines exhibit substantially larger variability, typically $2$–$5\%$. HETA’s stability arises from three design elements: a target-conditioned causal gate that confines credit to paths terminating at the current prediction, a curvature-aware sensitivity term that smooths local logit perturbations, and an information-theoretic component that scores distributional shifts rather than single sampled outcomes. These jointly decouple attribution from stochastic decoding heuristics (temperature, top-$p$, top-$k$), whereas ablation-, gradient-, and similarity-based baselines depend more directly on sampled logits or linear approximations and thus vary markedly with hyperparameter changes.

\begin{table*}[t]
\centering
\begin{tabular}{lcccccc}
\toprule
\textbf{Attribution Method} &
\multicolumn{2}{c}{\textbf{LongRA}} &
\multicolumn{2}{c}{\textbf{TellMeWhy}} &
\multicolumn{2}{c}{\textbf{WikiBio}} \\
\cmidrule(lr){2-3} \cmidrule(lr){4-5} \cmidrule(lr){6-7}
& Soft-NC$\!\uparrow$ & Soft-NS$\!\uparrow$
& Soft-NC$\!\uparrow$ & Soft-NS$\!\uparrow$
& Soft-NC$\!\uparrow$ & Soft-NS$\!\uparrow$ \\
\midrule

\textbf{GPT-J 6B} & & & & & & \\
ContextCite               & 1.42 & 0.03  & 1.46 & -0.22 & 0.49 & -0.08 \\
Integrated Gradients      & 1.87 & 0.45  & 1.54 & 0.04  & 1.38 & 0.77 \\
Peering (PML)             & 2.05 & 0.50  & 1.68 & 0.06  & 1.50 & 0.83 \\
TDD-backward              & 1.10 & -0.12 & 1.89 & -0.03 & 0.11 & 0.51 \\
Attention Rollout         & 0.41 & -0.01 & 0.25 & -0.09 & 1.91 & 0.46 \\
fAML                      & 0.21 & -0.10 & 0.05 & -0.09 & 0.21 & -0.02 \\
Progressive Inference     & 1.35 & 0.28  & 1.12 & 0.25  & 0.99 & 0.22 \\
SEA-CoT                   & 1.54 & 0.32  & 1.30 & 0.31  & 1.10 & 0.35 \\
ReAgent                   & 1.68 & 0.37  & 1.45 & 0.36  & 1.22 & 0.39 \\
\textbf{HETA (Ours)}      & \textbf{10.3} & \textbf{2.31} & \textbf{9.2} & \textbf{2.04} & \textbf{3.80} & \textbf{2.20} \\
\midrule

\textbf{Phi-3-Medium-14B} & & & & & & \\
ContextCite               & 1.50 & 0.04  & 1.45 & -0.20 & 0.52 & -0.06 \\
Integrated Gradients      & 1.95 & 0.44  & 1.60 & 0.06  & 1.35 & 0.70 \\
Peering (PML)             & 2.15 & 0.49  & 1.75 & 0.08  & 1.48 & 0.76 \\
TDD-backward              & 1.05 & -0.10 & 1.82 & -0.02 & 0.10 & 0.50 \\
Attention Rollout         & 0.39 & -0.02 & 0.30 & -0.08 & 1.85 & 0.43 \\
fAML                      & 0.23 & -0.09 & 0.08 & -0.10 & 0.20 & -0.04 \\
Progressive Inference     & 1.30 & 0.25  & 1.18 & 0.26  & 1.00 & 0.21 \\
SEA-CoT                   & 1.50 & 0.31  & 1.32 & 0.33  & 1.15 & 0.34 \\
ReAgent                   & 1.66 & 0.38  & 1.47 & 0.39  & 1.25 & 0.40 \\
\textbf{HETA (Ours)}      & \textbf{10.8} & \textbf{2.35} & \textbf{9.5} & \textbf{2.20} & \textbf{4.20} & \textbf{2.30} \\
\midrule

\textbf{LLaMA-3.1 70B} & & & & & & \\
ContextCite               & 1.17 & 0.58  & 1.20 & 0.56  & 0.85 & 0.57 \\
Integrated Gradients      & 0.13 & 0.13  & 0.13 & 0.10  & 0.13 & 1.15 \\
Peering (PML)             & 0.15 & 0.15  & 0.15 & 0.12  & 0.15 & 1.20 \\
TDD-backward              & -0.02 & -0.11 & 0.01 & -0.10 & -0.02 & 0.59 \\
Attention Rollout         & -1.48 & 0.01  & -1.48 & 0.01  & -1.48 & 0.61 \\
fAML                      & 0.46 & -0.21 & 0.46 & -0.21 & 0.46 & -0.07 \\
Progressive Inference     & 1.20 & 0.24  & 1.00 & 0.22  & 0.95 & 0.26 \\
SEA-CoT                   & 1.35 & 0.30  & 1.15 & 0.28  & 1.05 & 0.31 \\
ReAgent                   & 1.55 & 0.36  & 1.28 & 0.34  & 1.15 & 0.38 \\
\textbf{HETA (Ours)}      & \textbf{9.9} & \textbf{2.60} & \textbf{8.6} & \textbf{2.25} & \textbf{3.70} & \textbf{2.10} \\
\bottomrule
\end{tabular}
\caption{\textbf{Attribution faithfulness on benchmark datasets} (LongRA, TellMeWhy, WikiBio) using GPT-J 6B, Phi-3-Medium-14B, and LLaMA-3.1 70B. Evaluated with Soft-NC and Soft-NS; higher is better. Mean of 3 runs; std $< \pm 0.06$.}

\label{tab:faithfulness_combined}
\end{table*}

\section{Ablation Studies}
To assess the contribution of each component in \textbf{HETA}, we conduct a comprehensive ablation study in this section. Due to space constraints, a detailed ablation study is provided in the Appendix \ref{app4}. Experiments are performed using the \textbf{GPT-J 6B} model on three benchmark datasets, \textbf{LongRA}, \textbf{TellMeWhy}, and \textbf{WikiBio}, along with the curated attribution dataset introduced in Section~\ref{sec:exp_1}. We compare six configurations: (1) the full HETA model (Transition + Hessian + KL), (2) Transition Only, (3) Hessian Only, (4) KL Only, (5) No Transition Gating (Hessian + KL without semantic weighting), and (6) Uniform Transition (equal token weighting instead of the learned semantic transition vector $M_T$). Performance is evaluated using the same metrics as our main experiments: \textbf{Soft-NC} and \textbf{Soft-NS} for attribution sensitivity on benchmark datasets, and \textbf{DSA} for alignment with human-annotated tokens on the curated dataset. We set the aggregation hyperparameters to $\beta = 0.5$ and $\gamma = 0.5$, and compute KL divergence using masked-token perturbation. All reported results are averaged across 1000 randomly sampled instances per dataset. Results in Figure~\ref{fig:ablation_metrics}(a--c) demonstrate that each component contributes meaningfully to HETA’s performance. Removing the semantic transition vector ($M_T$) or replacing it with uniform weighting leads to significant drops in all metrics, confirming the importance of modeling directional semantic influence across layers. Similarly, Hessian-based sensitivity and KL-based information measures provide complementary improvements by capturing curvature-sensitive effects and token-level information contributions

\section{Related Works}

Global explainability methods aim to extract broader patterns from LLMs. Probing techniques have been instrumental in identifying syntactic and semantic representations encoded in LLMs~\cite{hewitt2019structural,peng2022copen}. Studies by \cite{geva2022transformer} and \cite{kobayashi2023analyzing} show that feed-forward layers and attention heads capture complex linguistic knowledge. Mechanistic interpretability, as explored by \cite{wang2022interpretability}, seeks to reverse-engineer neural networks into comprehensible circuits, facilitating a deeper understanding of tasks like object identification. Model editing techniques have also emerged as a promising area for explainability. Hypernetwork-based editing~\cite{mitchell2022memory} and causal tracing~\cite{meng2022locating} enable targeted modifications in model behavior without extensive retraining, allowing models to adapt to specific inputs while maintaining overall performance~\cite{yao2023editing}.

Local feature attribution methods such as LIME~\cite{ribeiro2016should}, KernelSHAP~\cite{lundberg2017unified}, Integrated Gradients~\cite{sundararajan2017axiomatic}, Grad-CAM~\cite{selvaraju2017grad}, and LRP~\cite{bach2015pixel} quantify per-feature contributions to a model's output. However, most rely on first-order or linear approximations that often produce inconsistent attributions~\cite{hooker2019benchmark} and break down in the nonlinear, contextual regimes typical of autoregressive generation. Attention-based interpretability~\cite{abnar2020quantifying} is similarly limited: attention weights reflect where a model looks rather than what causally influences its output, and can be perturbed without significantly altering predictions~\cite{jain2019attention}.

Several recent methods target attribution in generative LLMs specifically. ContextCite~\cite{cohen2024contextcite} trains a sparse linear surrogate via ablations but is limited to sentence-level attribution and sensitive to redundancy. TDD~\cite{feng2024unveiling} projects hidden states through the output head (logit lens), conflating correlation with causation. \textit{Peering into the Mind of LMs}~\cite{phukan-etal-2024-peering} matches hidden-state representations via cosine similarity, performing well on verbatim spans but lacking robustness to paraphrasing or indirect evidence. ReAGent~\cite{zhao2024reagent} offers a model-agnostic approach for generative settings but ignores dense semantic structure in internal layers~\cite{chen2024distributional}, limiting its ability to capture deep token-level influence. These limitations motivate the need for attribution frameworks that combine causal structure, higher-order sensitivity, and information-theoretic grounding.

\section{Conclusion and Limitations}
We introduced HETA, a unified framework that improves attribution faithfulness and robustness over strong baselines by combining causal semantic gating, Hessian-based curvature, and KL-divergence-based information gain into a principled token-level attribution mechanism for decoder-only language models. However, it incurs higher runtime, greater memory usage, and reduced efficiency on long texts. These trade-offs highlight the need for optimization, and future work will explore low-rank approximations and layer sampling for better scalability (see Appendix \ref{app:limit-all} for details).
\clearpage

\section*{Acknowledgment}

This work was supported in part by the National Science Foundation under Grant No.~2404036, by University of Florida startup funds, and by the Defense Advanced Research Projects Agency (DARPA) under Contracts No.~HR00112490420 and No.~HR00112420004. Any opinions, findings, conclusions, or recommendations expressed in this material are those of the authors and do not necessarily reflect the views of the sponsoring agencies.

\section*{Ethics Statement}
We affirm adherence to the ICLR Code of Ethics and have designed this work to minimize risks related to privacy, safety, and misuse. Our experiments use publicly available datasets (\textsc{NarrativeQA} and \textsc{SciQ}) under their respective licenses; no personally identifiable information or sensitive user data are collected, and no human subjects research requiring IRB oversight was conducted. The curated evaluation set is constructed from these sources and automatically annotated by two independent LMs (GPT-4o and GPT-5-Thinking) with intersection-based labels to reduce over-selection; we report inter-annotator agreement and provide full documentation of preprocessing, tokenization, and filtering to support reproducibility. While attribution methods can be misused for model extraction or targeted prompt manipulation, we mitigate these risks by reporting aggregate metrics, withholding raw internal activations, and releasing code with rate limits and intended-use guidelines. We discuss known limitations—e.g., approximation error under low-rank/windowed Hessians and the use of attention-value rollout as a causal gate—and provide diagnostics and ablations to avoid overstating causal claims. All compute was performed on institutional infrastructure (A100 80GB class GPUs); we report hardware footprints and favor efficient approximations to reduce energy use. We disclose that the authors have no conflicts of interest or external sponsorship that bias this study. Upon publication, we plan to release code and scripts under a permissive license with documentation on safe and responsible use.

\section*{Reproducibility Statement}
We take reproducibility seriously and provide all necessary artifacts to re-create our results. The paper specifies the full method in Sections~\ref{eq1}–\ref{eq4} and Algorithm~\ref{alg:heta}; theoretical assumptions, bounds, and proofs (including low-rank/windowed error guarantees) are in Appendix~\ref{app2}. Datasets and preprocessing are documented in Section~\ref{sec:curated_dsa} (NarrativeQA$\oplus$SciQ construction, prompts for automatic annotation, fixed answer-support intersection labels with agreement scores) together with release scripts that regenerate the curated set from the original sources. 

\noindent\textbf{Supplementary materials.} We include the full HETA algorithm listing (Algorithm~\ref{alg:heta}) and submit our complete, anonymized codebase as supplementary material, containing: (i) reference implementations of HETA and its LR+WIN approximation (deterministic seeds, PyTorch/\texttt{transformers} versions, environment file), (ii) end-to-end evaluation pipelines for all reported models (GPT-J~6B, LLaMA-3.1~70B, Phi-3~14B, Qwen2.5~3B), (iii) scripts to reproduce every table/figure and compute all metrics, and (iv) prompts/configs used for automatic token-level supervision. The repository includes a \texttt{RUN.md} with single-command entry points and fixed random seeds to match table numbers within reported standard deviations.

\clearpage
\bibliography{iclr2026_conference}

\begin{thebibliography}{47}
\providecommand{\natexlab}[1]{#1}
\providecommand{\url}[1]{\texttt{#1}}
\expandafter\ifx\csname urlstyle\endcsname\relax
  \providecommand{\doi}[1]{doi: #1}\else
  \providecommand{\doi}{doi: \begingroup \urlstyle{rm}\Url}\fi

\bibitem[Abnar and Zuidema(2020)]{abnar2020quantifying}
S.~Abnar and W.~Zuidema.
\newblock Quantifying attention flow in transformers.
\newblock \emph{arXiv preprint arXiv:2005.00928}, 2020.

\bibitem[Alvarez-Melis and Jaakkola(2018)]{alvarez2018robustness}
D.~Alvarez-Melis and T.~S. Jaakkola.
\newblock On the robustness of interpretability methods.
\newblock \emph{arXiv preprint arXiv:1806.08049}, 2018.

\bibitem[Bach et~al.(2015)Bach, Binder, Montavon, Klauschen, M{\"u}ller, and
  Samek]{bach2015pixel}
S.~Bach, A.~Binder, G.~Montavon, F.~Klauschen, K.-R. M{\"u}ller, and W.~Samek.
\newblock On pixel-wise explanations for non-linear classifier decisions by
  layer-wise relevance propagation.
\newblock \emph{PloS one}, 10\penalty0 (7):\penalty0 e0130140, 2015.

\bibitem[Barkan et~al.(2024)Barkan, Toib, Elisha, Weill, and
  Koenigstein]{barkan2024llm}
O.~Barkan, Y.~Toib, Y.~Elisha, J.~Weill, and N.~Koenigstein.
\newblock Llm explainability via attributive masking learning.
\newblock In \emph{Findings of the Association for Computational Linguistics:
  EMNLP 2024}, pages 9522--9537, 2024.

\bibitem[Ben{\'\i}tez et~al.(1997)Ben{\'\i}tez, Castro, and
  Requena]{benitez1997artificial}
J.~M. Ben{\'\i}tez, J.~L. Castro, and I.~Requena.
\newblock Are artificial neural networks black boxes?
\newblock \emph{IEEE Transactions on neural networks}, 8\penalty0 (5):\penalty0
  1156--1164, 1997.

\bibitem[Bressan et~al.(2024)Bressan, Cesa-Bianchi, Esposito, Mansour, Moran,
  and Thiessen]{bressan2024theory}
M.~Bressan, N.~Cesa-Bianchi, E.~Esposito, Y.~Mansour, S.~Moran, and
  M.~Thiessen.
\newblock A theory of interpretable approximations.
\newblock In \emph{The Thirty Seventh Annual Conference on Learning Theory},
  pages 648--668. PMLR, 2024.

\bibitem[Chen et~al.(2024)Chen, Bruna, and Bietti]{chen2024distributional}
L.~Chen, J.~Bruna, and A.~Bietti.
\newblock Distributional associations vs in-context reasoning: A study of
  feed-forward and attention layers.
\newblock \emph{arXiv preprint arXiv:2406.03068}, 2024.

\bibitem[Cohen-Wang et~al.(2024)Cohen-Wang, Shah, Georgiev, and
  Madry]{cohen2024contextcite}
B.~Cohen-Wang, H.~Shah, K.~Georgiev, and A.~Madry.
\newblock Contextcite: Attributing model generation to context.
\newblock \emph{Advances in Neural Information Processing Systems},
  37:\penalty0 95764--95807, 2024.

\bibitem[Conmy et~al.(2023)Conmy, Mavor-Parker, Lynch, Heimersheim, and
  Garriga-Alonso]{conmy2023towards}
A.~Conmy, A.~N. Mavor-Parker, A.~Lynch, S.~Heimersheim, and A.~Garriga-Alonso.
\newblock Towards automated circuit discovery for mechanistic interpretability.
\newblock In \emph{Advances in Neural Information Processing Systems},
  volume~36, 2023.

\bibitem[Dhamdhere et~al.(2018)Dhamdhere, Sundararajan, and
  Yan]{dhamdhere2018importantneuron}
K.~Dhamdhere, M.~Sundararajan, and Q.~Yan.
\newblock How important is a neuron?, 2018.
\newblock URL \url{https://arxiv.org/abs/1805.12233}.

\bibitem[Dong et~al.(2025)Dong, Zhang, Luo, Yao, and Sun]{dong2025towards}
Z.~Dong, Y.~Zhang, Z.-Q. Luo, J.~Yao, and R.~Sun.
\newblock Towards quantifying the hessian structure of neural networks.
\newblock \emph{arXiv preprint arXiv:2505.02809}, 2025.

\bibitem[Feng et~al.(2024)Feng, Zhou, Zhu, Qian, and Mao]{feng2024unveiling}
Z.~Feng, H.~Zhou, Z.~Zhu, J.~Qian, and K.~Mao.
\newblock Unveiling and manipulating prompt influence in large language models.
\newblock \emph{arXiv preprint arXiv:2405.11891}, 2024.

\bibitem[Geva et~al.(2022)Geva, Caciularu, Wang, and
  Goldberg]{geva2022transformer}
M.~Geva, A.~Caciularu, K.~R. Wang, and Y.~Goldberg.
\newblock Transformer feed-forward layers build predictions by promoting
  concepts in the vocabulary space.
\newblock \emph{arXiv preprint arXiv:2203.14680}, 2022.

\bibitem[Han et~al.(2022)Han, Srinivas, and Lakkaraju]{han2022explanation}
T.~Han, S.~Srinivas, and H.~Lakkaraju.
\newblock Which explanation should i choose? a function approximation
  perspective to characterizing post hoc explanations.
\newblock \emph{Advances in neural information processing systems},
  35:\penalty0 5256--5268, 2022.

\bibitem[Hewitt and Manning(2019)]{hewitt2019structural}
J.~Hewitt and C.~D. Manning.
\newblock A structural probe for finding syntax in word representations.
\newblock In \emph{Proceedings of the 2019 Conference of the North American
  Chapter of the Association for Computational Linguistics: Human Language
  Technologies, Volume 1 (Long and Short Papers)}, pages 4129--4138, 2019.

\bibitem[Hooker et~al.(2019)Hooker, Erhan, Kindermans, and
  Kim]{hooker2019benchmark}
S.~Hooker, D.~Erhan, P.-J. Kindermans, and B.~Kim.
\newblock A benchmark for interpretability methods in deep neural networks.
\newblock \emph{Advances in neural information processing systems}, 32, 2019.

\bibitem[Jain and Wallace(2019)]{jain2019attention}
S.~Jain and B.~C. Wallace.
\newblock Attention is not explanation.
\newblock \emph{arXiv preprint arXiv:1902.10186}, 2019.

\bibitem[Kariyappa et~al.(2024)Kariyappa, L{\'e}cu{\'e}, Mishra, Pond,
  Magazzeni, and Veloso]{kariyappa2024progressive}
S.~Kariyappa, F.~L{\'e}cu{\'e}, S.~Mishra, C.~Pond, D.~Magazzeni, and
  M.~Veloso.
\newblock Progressive inference: Explaining decoder-only sequence
  classification models using intermediate predictions.
\newblock \emph{arXiv preprint arXiv:2406.02625}, 2024.

\bibitem[Kobayashi et~al.(2020)Kobayashi, Kuribayashi, Yokoi, and
  Inui]{kobayashi2020attention}
G.~Kobayashi, T.~Kuribayashi, S.~Yokoi, and K.~Inui.
\newblock Attention is not only a weight: Analyzing transformers with vector
  norms.
\newblock \emph{arXiv preprint arXiv:2004.10102}, 2020.

\bibitem[Kobayashi et~al.(2023)Kobayashi, Kuribayashi, Yokoi, and
  Inui]{kobayashi2023analyzing}
G.~Kobayashi, T.~Kuribayashi, S.~Yokoi, and K.~Inui.
\newblock Analyzing feed-forward blocks in transformers through the lens of
  attention map.
\newblock \emph{arXiv preprint arXiv:2302.00456}, 2023.

\bibitem[Ko{\v{c}}isk{\`y} et~al.(2018)Ko{\v{c}}isk{\`y}, Schwarz, Blunsom,
  Dyer, Hermann, Melis, and Grefenstette]{kovcisky2018narrativeqa}
T.~Ko{\v{c}}isk{\`y}, J.~Schwarz, P.~Blunsom, C.~Dyer, K.~M. Hermann, G.~Melis,
  and E.~Grefenstette.
\newblock The narrativeqa reading comprehension challenge.
\newblock \emph{Transactions of the Association for Computational Linguistics},
  6:\penalty0 317--328, 2018.

\bibitem[Lal et~al.(2021)Lal, Chambers, Mooney, and
  Balasubramanian]{lal2021tellmewhy}
Y.~K. Lal, N.~Chambers, R.~Mooney, and N.~Balasubramanian.
\newblock Tellmewhy: A dataset for answering why-questions in narratives.
\newblock \emph{arXiv preprint arXiv:2106.06132}, 2021.

\bibitem[Li et~al.(2024)Li, Chen, Chai, and Xiong]{gilot2024}
X.~Li, J.~Chen, Y.~Chai, and H.~Xiong.
\newblock Gilot: Interpreting generative language models via optimal transport.
\newblock In \emph{Forty-first International Conference on Machine Learning},
  2024.

\bibitem[Liu et~al.(2023)Liu, Iter, Xu, Wang, Xu, and Zhu]{liu2023geval}
Y.~Liu, D.~Iter, Y.~Xu, S.~Wang, R.~Xu, and C.~Zhu.
\newblock {G-Eval}: {NLG} evaluation using {GPT-4} with better human alignment.
\newblock In \emph{Proceedings of the 2023 Conference on Empirical Methods in
  Natural Language Processing}, 2023.

\bibitem[Lu et~al.(2021)Lu, Wang, Mardziel, and Datta]{lu2021influence}
K.~Lu, Z.~Wang, P.~Mardziel, and A.~Datta.
\newblock Influence patterns for explaining information flow in bert.
\newblock \emph{Advances in Neural Information Processing Systems},
  34:\penalty0 4461--4474, 2021.

\bibitem[Lundberg and Lee(2017)]{lundberg2017unified}
S.~M. Lundberg and S.-I. Lee.
\newblock A unified approach to interpreting model predictions.
\newblock \emph{Advances in neural information processing systems}, 30, 2017.

\bibitem[Manakul et~al.(2023)Manakul, Liusie, and
  Gales]{manakul2023selfcheckgpt}
P.~Manakul, A.~Liusie, and M.~J. Gales.
\newblock Selfcheckgpt: Zero-resource black-box hallucination detection for
  generative large language models.
\newblock \emph{arXiv preprint arXiv:2303.08896}, 2023.

\bibitem[Meng et~al.(2022)Meng, Bau, Andonian, and Belinkov]{meng2022locating}
K.~Meng, D.~Bau, A.~Andonian, and Y.~Belinkov.
\newblock Locating and editing factual associations in gpt.
\newblock \emph{Advances in Neural Information Processing Systems},
  35:\penalty0 17359--17372, 2022.

\bibitem[Mitchell et~al.(2022)Mitchell, Lin, Bosselut, Manning, and
  Finn]{mitchell2022memory}
E.~Mitchell, C.~Lin, A.~Bosselut, C.~D. Manning, and C.~Finn.
\newblock Memory-based model editing at scale.
\newblock In \emph{International Conference on Machine Learning}, pages
  15817--15831. PMLR, 2022.

\bibitem[Palikhe et~al.(2025)Palikhe, Yu, Wang, and Zhang]{palikhe2025towards}
A.~Palikhe, Z.~Yu, Z.~Wang, and W.~Zhang.
\newblock Towards transparent ai: A survey on explainable large language
  models.
\newblock \emph{arXiv preprint arXiv:2506.21812}, 2025.

\bibitem[Peng et~al.(2022)Peng, Wang, Hu, Jin, Hou, Li, Liu, and
  Liu]{peng2022copen}
H.~Peng, X.~Wang, S.~Hu, H.~Jin, L.~Hou, J.~Li, Z.~Liu, and Q.~Liu.
\newblock Copen: Probing conceptual knowledge in pre-trained language models.
\newblock \emph{arXiv preprint arXiv:2211.04079}, 2022.

\bibitem[Phukan et~al.(2024)Phukan, Somasundaram, Saxena, Goswami, and
  Srinivasan]{phukan-etal-2024-peering}
A.~Phukan, S.~Somasundaram, A.~Saxena, K.~Goswami, and B.~V. Srinivasan.
\newblock Peering into the mind of language models: An approach for attribution
  in contextual question answering.
\newblock In \emph{Findings of the Association for Computational Linguistics:
  ACL 2024}, pages 11481--11495, Bangkok, Thailand, Aug. 2024. Association for
  Computational Linguistics.
\newblock \doi{10.18653/v1/2024.findings-acl.682}.
\newblock URL \url{https://aclanthology.org/2024.findings-acl.682/}.

\bibitem[Ribeiro et~al.(2016)Ribeiro, Singh, and Guestrin]{ribeiro2016should}
M.~T. Ribeiro, S.~Singh, and C.~Guestrin.
\newblock ``why should i trust you?'' explaining the predictions of any
  classifier.
\newblock In \emph{Proceedings of the 22nd ACM SIGKDD international conference
  on knowledge discovery and data mining}, pages 1135--1144, 2016.

\bibitem[Samek et~al.(2017)Samek, Binder, Montavon, Lapuschkin, and
  M{\"u}ller]{samek2017evaluating}
W.~Samek, A.~Binder, G.~Montavon, S.~Lapuschkin, and K.-R. M{\"u}ller.
\newblock Evaluating the visualization of what a deep neural network has
  learned.
\newblock \emph{IEEE Transactions on Neural Networks and Learning Systems},
  28:\penalty0 2660--2673, 2017.

\bibitem[Sanyal and Ren(2021)]{sanyal2021discretized}
S.~Sanyal and X.~Ren.
\newblock Discretized integrated gradients for explaining language models.
\newblock \emph{arXiv preprint arXiv:2108.13654}, 2021.

\bibitem[Selvaraju et~al.(2017)Selvaraju, Cogswell, Das, Vedantam, Parikh, and
  Batra]{selvaraju2017grad}
R.~R. Selvaraju, M.~Cogswell, A.~Das, R.~Vedantam, D.~Parikh, and D.~Batra.
\newblock Grad-cam: Visual explanations from deep networks via gradient-based
  localization.
\newblock In \emph{Proceedings of the IEEE international conference on computer
  vision}, pages 618--626, 2017.

\bibitem[Shrikumar et~al.(2017)Shrikumar, Greenside, and
  Kundaje]{shrikumar2017learning}
A.~Shrikumar, P.~Greenside, and A.~Kundaje.
\newblock Learning important features through propagating activation
  differences.
\newblock In \emph{International conference on machine learning}, pages
  3145--3153. PMLR, 2017.

\bibitem[Sundararajan et~al.(2017)Sundararajan, Taly, and
  Yan]{sundararajan2017axiomatic}
M.~Sundararajan, A.~Taly, and Q.~Yan.
\newblock Axiomatic attribution for deep networks.
\newblock In \emph{International conference on machine learning}, pages
  3319--3328. PMLR, 2017.

\bibitem[Vafa et~al.(2021)Vafa, Deng, Blei, and Rush]{vafa2021rationales}
K.~Vafa, Y.~Deng, D.~M. Blei, and A.~M. Rush.
\newblock Rationales for sequential predictions.
\newblock \emph{arXiv preprint arXiv:2109.06387}, 2021.

\bibitem[Wang and Komatsuzaki(2021)]{gpt-j}
B.~Wang and A.~Komatsuzaki.
\newblock {GPT-J-6B: A 6 Billion Parameter Autoregressive Language Model}.
\newblock \url{https://github.com/kingoflolz/mesh-transformer-jax}, May 2021.

\bibitem[Wang et~al.(2022)Wang, Variengien, Conmy, Shlegeris, and
  Steinhardt]{wang2022interpretability}
K.~Wang, A.~Variengien, A.~Conmy, B.~Shlegeris, and J.~Steinhardt.
\newblock Interpretability in the wild: a circuit for indirect object
  identification in gpt-2 small.
\newblock \emph{arXiv preprint arXiv:2211.00593}, 2022.

\bibitem[Welbl et~al.(2017)Welbl, Liu, and Gardner]{SciQ}
J.~Welbl, N.~F. Liu, and M.~Gardner.
\newblock Crowdsourcing multiple choice science questions.
\newblock In \emph{Proceedings of the Workshop on Noisy User-generated Text},
  2017.

\bibitem[Xu et~al.(2020)Xu, Zhao, Song, Stewart, and Ermon]{xu2020theory}
Y.~Xu, S.~Zhao, J.~Song, R.~Stewart, and S.~Ermon.
\newblock A theory of usable information under computational constraints.
\newblock \emph{arXiv preprint arXiv:2002.10689}, 2020.

\bibitem[Yao et~al.(2023)Yao, Wang, Tian, Cheng, Li, Deng, Chen, and
  Zhang]{yao2023editing}
Y.~Yao, P.~Wang, B.~Tian, S.~Cheng, Z.~Li, S.~Deng, H.~Chen, and N.~Zhang.
\newblock Editing large language models: Problems, methods, and opportunities.
\newblock \emph{arXiv preprint arXiv:2305.13172}, 2023.

\bibitem[Zhao and Aletras(2023)]{zhao2023incorporating}
Z.~Zhao and N.~Aletras.
\newblock Incorporating attribution importance for improving faithfulness
  metrics.
\newblock \emph{arXiv preprint arXiv:2305.10496}, 2023.

\bibitem[Zhao and Shan(2024)]{zhao2024reagent}
Z.~Zhao and B.~Shan.
\newblock Reagent: A model-agnostic feature attribution method for generative
  language models.
\newblock \emph{arXiv preprint arXiv:2402.00794}, 2024.

\bibitem[Zheng et~al.(2023)Zheng, Chiang, Sheng, Zhuang, Wu, Zhuang, Lin, Li,
  Li, Xing, et~al.]{zheng2023judging}
L.~Zheng, W.-L. Chiang, Y.~Sheng, S.~Zhuang, Z.~Wu, Y.~Zhuang, Z.~Lin, Z.~Li,
  D.~Li, E.~P. Xing, et~al.
\newblock Judging {LLM}-as-a-judge with {MT-Bench} and chatbot arena.
\newblock In \emph{Advances in Neural Information Processing Systems},
  volume~36, 2023.

\end{thebibliography}
\bibliographystyle{iclr2026_conference}

\appendix

\newpage
\clearpage
\appendix

\renewcommand{\thesection}{A\arabic{section}}
\renewcommand{\thesubsection}{A\arabic{section}.\arabic{subsection}}

\section*{Appendix}
\addcontentsline{toc}{section}{Appendix}
\markboth{Appendix}{Appendix}

\vspace{-1em}
\addtocontents{toc}{\protect\setcounter{tocdepth}{2}}

\begingroup
\small
\tableofcontents
\endgroup

\clearpage


\section{Why Gradients and Integrated Gradients Can Both Fail in Flat Regions}
\label{app1}

In this section, we expand on the toy example from Section~3.1 to illustrate, in detail and without ambiguity, how both standard gradient-based attribution and Integrated Gradients (IG) can assign \emph{zero} (or negligible) importance to an input feature even when that feature clearly influences the model’s output in a nearby region. The phenomenon arises in networks with piecewise-linear activations such as ReLU, which create locally flat regions where pointwise gradients vanish; a closely related “near-flat” effect occurs for smooth, saturating nonlinearities (e.g., GELU/softplus), where gradients along a chosen baseline-to-input path can be uniformly tiny.

\subsection{Gradient Failure in ReLU Flat Regions}
Consider
\[
f(x)=\mathrm{ReLU}(w^\top x + b),\qquad
x=\begin{bmatrix}x_1\\x_2\end{bmatrix}\in\mathbb R^2,\quad
w=\begin{bmatrix}1\\[2pt]1\end{bmatrix},\quad b=-2.
\]
Take the input
\[
x_0=\begin{bmatrix}0\\[2pt]0\end{bmatrix}.
\]
Then
\[
w^\top x_0+b = 0+0-2 = -2 < 0
\;\Rightarrow\;
f(x_0)=\mathrm{ReLU}(-2)=0,
\]
so $x_0$ lies in a flat (inactive) region of the ReLU. The derivative of $\mathrm{ReLU}$ is
\[
\frac{d}{dz}\mathrm{ReLU}(z)=
\begin{cases}
1, & z>0,\\
0, & z<0,\\
\text{undefined (often set to }0\text{)}, & z=0,
\end{cases}
\]
and by the chain rule,
\[
\nabla f(x_0)
= \frac{d}{dz}\mathrm{ReLU}(w^\top x_0+b)\;\nabla_x (w^\top x)\Big|_{x=x_0}
= 0\cdot w
= \begin{bmatrix}0\\[2pt]0\end{bmatrix}.
\]
\paragraph{Implication.} Any \emph{pointwise} gradient-based attribution at $x_0$ is identically zero, even though a small displacement can cross the hinge and change the output. For instance, with
\[
x=\begin{bmatrix}2.1\\[2pt]0\end{bmatrix}\!,
\qquad
w^\top x + b = 2.1 - 2 = 0.1 > 0
\;\Rightarrow\;
f(x)=0.1\neq 0.
\]
Thus, zero gradient at $x_0$ does not imply the feature is globally unimportant; it reflects the local flatness of the activation at that point.

\subsection{Integrated Gradients Failure Along Flat (or Near-Flat) Paths}
Integrated Gradients (IG) aims to mitigate pointwise gradient pathologies by integrating gradients along a path from a baseline $x'$ to the input $x$:
\[
\mathrm{IG}_i(x;x') \;=\; (x_i-x'_i)\int_{\alpha=0}^{1} \frac{\partial f(x'+\alpha(x-x'))}{\partial x_i}\,d\alpha.
\]
However, IG still depends on the \emph{gradients along the chosen path}. If that path remains entirely within a flat region (or within a region where the pre-activation stays negative), every integrand vanishes and the IG attribution becomes zero.

\paragraph{Exact flat-path example (ReLU).}
Use the same $f$ and choose the baseline $x'=\begin{bmatrix}-1\\[2pt]-1\end{bmatrix}$ and the input $x_0=\begin{bmatrix}0\\[2pt]0\end{bmatrix}$. The straight-line path is
\[
x(\alpha)=x' + \alpha(x_0-x')=\begin{bmatrix}-1+\alpha\\[2pt]-1+\alpha\end{bmatrix},
\qquad \alpha\in[0,1].
\]
Along this path,
\[
w^\top x(\alpha)+b = (-1+\alpha)+(-1+\alpha) - 2 = -4 + 2\alpha < 0\quad \text{for all }\alpha\in[0,1],
\]
so $f(x(\alpha))=0$ and $\nabla f(x(\alpha))=\mathbf 0$ everywhere on the path. Therefore,
\[
\mathrm{IG}(x_0;x')=\mathbf 0.
\]
Yet, as shown above, moving slightly away from $x_0$ (e.g., to $x=[2.1,0]^\top$) increases the output, indicating that the inputs \emph{do} influence $f$ beyond the flat region.

\paragraph{Baseline/path dependence (why IG can succeed or fail).}
IG is \emph{baseline-dependent}. If we chose instead an endpoint $x$ whose straight path from $x'$ crosses the hinge (i.e., for some $\alpha^\star$, $w^\top x(\alpha^\star)+b=0$), then gradients on a nontrivial subinterval would be nonzero and IG would assign positive attribution. In the $x_0$ case above, IG vanishes because the chosen path lies entirely in the inactive region. Thus, IG can fail at inputs that lie inside flat regions \emph{for certain baselines/paths}, even though the function responds immediately outside that region.

\paragraph{Near-flat smooth activations (GELU/softplus).}
For smooth saturating nonlinearities (e.g., $\mathrm{softplus}(z)=\log(1+e^z)$ or GELU), strict flatness is replaced by \emph{very small} gradients when $z\ll 0$. If the baseline-to-input path remains in a “near-flat” zone where $z(\alpha)=w^\top x(\alpha)+b\ll 0$ for all $\alpha\in[0,1]$, then
\[
\left\|\nabla f(x(\alpha))\right\|\approx 0\quad\forall \alpha,
\]
and IG can be arbitrarily small, under-attributing features despite non-negligible finite changes for steps that push $z$ toward the transition region. Hence, the qualitative failure mode persists in smooth networks: the path integral can be dominated by a region of tiny gradients, producing near-zero IG.

\subsection{Broader Implications}
These effects extend beyond toy settings. In transformer LMs, \emph{(i)} causal masking and attention patterns can route influence away from certain tokens along specific layers/paths; \emph{(ii)} residual/MLP mixing plus saturating activations can create local neighborhoods where pointwise gradients (and path-integrated gradients for common baselines) are nearly zero; yet \emph{(iii)} modest, structured perturbations to those tokens can meaningfully alter the output distribution at a downstream position. Consequently, both raw gradients and IG may \emph{under-attribute} token importance in regions of flat or near-flat sensitivity.

\paragraph{Takeaway and motivation.}
Pointwise gradients fail at flat inputs; IG can also fail when its baseline-to-input path lies in (near-)flat regions or skirts the true transition surface that mediates the output change. These limitations motivate complementary signals: curvature-aware sensitivity (Hessian-based), target-conditioned semantic flow that respects causal routing, and information-theoretic change (e.g., KL under masking). Together, they capture influence that first-order and path-integrated gradients can miss.

\section{Theoretical Foundations and Properties of HETA}
\label{app2}

In this section, we provide a mathematically rigorous foundation for the Hessian-Enhanced Token Attribution (HETA) framework. We formalize attribution as a decomposition problem for the target log-likelihood, establish faithfulness error bounds, and show how combining semantic flow, Hessian curvature, and KL-based information contributions can improve faithfulness and robustness relative to single-view attribution methods.

\subsection{Preliminaries}

Consider a decoder-only language model $f_\theta$ with parameters $\theta$, input tokens $x_{1:T}$, embeddings $\mathbf X\in\mathbb R^{T\times d}$, and the target conditional distribution
\[
P_\theta(x_T \mid x_{<T}) \;=\; \mathrm{Softmax}\!\big(f_\theta(\mathbf X)\big).
\]
Define the target log-likelihood
\[
g(\mathbf X) \;=\; \log P_\theta(x_T \mid x_{<T}).
\]
Let $\mathbf X_{\setminus R}$ denote the embeddings where a subset $R\subseteq\{1,\dots,T-1\}$ of \emph{context} tokens is replaced by a masking scheme (e.g., zeroed embedding, mean embedding, or a learned sentinel used only at evaluation time). An attribution method produces scores $\mathrm{Attr}(x_i)\ge 0$ such that
\[
\sum_{i=1}^{T-1} \mathrm{Attr}(x_i)\;\approx\; g(\mathbf X)\;-\;g(\mathbf X_{\mathrm{all\ masked}}),
\]
where $\mathbf X_{\mathrm{all\ masked}}$ masks all context tokens $\{1,\dots,T-1\}$. We measure \emph{faithfulness error} on a subset $R$ by
\[
\mathcal L(\mathrm{Attr}) \;=\; \Big|\, g(\mathbf X) - g(\mathbf X_{\setminus R}) \;-\; \sum_{i\in R}\mathrm{Attr}(x_i) \,\Big|,
\]
so smaller $\mathcal L(\mathrm{Attr})$ indicates more faithful attribution.

\textbf{Divergence-Based Attribution Lower Bound.}
HETA uses Kullback–Leibler divergence to quantify the information contribution of each token $x_i$ to the target $x_T$. Let $P_{\mathrm{orig}}(\cdot)=P_\theta(\cdot\mid x_{<T})$ and let $P_{\mathrm{masked}}^{(i)}(\cdot)$ be the target distribution when only $x_i$ (with $i<T$) is masked according to a chosen scheme while all other context tokens are kept. Define the total-variation distance
\[
\delta_i \;:=\; \big\| P_{\mathrm{orig}} - P_{\mathrm{masked}}^{(i)} \big\|_1.
\]
By Pinsker’s inequality (with natural logarithms),
\[
D_{\mathrm{KL}}\!\big(P_{\mathrm{orig}} \,\big\|\, P_{\mathrm{masked}}^{(i)}\big) \;\ge\; \tfrac12\,\delta_i^{\,2}.
\]
With HETA’s final form
\[
\mathrm{Attr}(x_i\!\to\!x_T) \;=\; M_T[i]\;\Big(\,\beta\, S_i^{(T)} \;+\; \gamma\, D_{\mathrm{KL}}\!\big(P_{\mathrm{orig}} \,\big\|\, P_{\mathrm{masked}}^{(i)}\big)\,\Big),
\]
it follows that
\[
\mathrm{Attr}(x_i\!\to\!x_T) \;\ge\; M_T[i]\;\gamma\;\tfrac12\,\delta_i^{\,2}.
\]
Thus any token whose masking substantially perturbs the target distribution receives nontrivial attribution, modulated by its target-conditioned transition mass $M_T[i]$.

\textbf{Norm Bounds on Hessian Sensitivity.}
To model second-order curvature, HETA considers the Hessian
\[
H_T \;=\; \nabla_{\mathbf X}^{2}\,\log P_\theta(x_T \mid x_{<T}) \;\in\; \mathbb R^{(Td)\times (Td)},
\]
where \(\mathbf X\in\mathbb R^{T\times d}\) are the input embeddings. The sensitivity of token \(x_i\) is the entrywise \(\ell_1\)-mass of the Hessian rows corresponding to its \(d\)-dimensional block:
\[
S_i^{(T)} \;=\; \sum_{j=1}^{Td}\Big|\,H_T[i\cdot d:(i+1)\cdot d,\; j]\,\Big|
\;=\; \big\|\Pi_i H_T\big\|_{1,1},
\]
with \(\Pi_i\) selecting the block rows for token \(i\).
By standard norm relations,
\[
S_i^{(T)} \;\le\; \|H_T\|_{1,1} \;\le\; (Td)\,\|H_T\|_{F},
\]
where \(\|\cdot\|_{1,1}\) is the entrywise \(\ell_1\) norm and \(\|\cdot\|_{F}\) the Frobenius norm (the last inequality uses \(\|A\|_{1,1}\le \sqrt{N}\|A\|_{F}\) with \(N=(Td)^2\)).
These bounds control each token’s curvature-based sensitivity by the global curvature magnitude measured in Frobenius/entrywise norms.

\textbf{Attribution Envelope from Information Loss.}
Let the (signed) log-probability change from masking \(x_i\) be
\[
\Delta_i \;:=\; \log P_\theta(x_T \mid x_{<T}) \;-\; \log P_\theta\!\big(x_T \mid x_{<T\setminus\{x_i\}}\big),
\]
and define its positive part \(\Delta_i^{+}=\max\{0,\Delta_i\}\).
HETA’s final score (cf. ~\eqref{eq4}) uses the KL term with weight \(\gamma\):
\[
\mathrm{Attr}(x_i\!\to\!x_T) \;=\; M_T[i]\Big(\beta\,S_i^{(T)} + \gamma\, D_{\mathrm{KL}}\!\big(P_{\mathrm{orig}} \,\big\|\, P_{\mathrm{masked}}^{(i)}\big)\Big).
\]
For interpretability we report the envelope
\[
\mathrm{Env}(x_i\!\to\!x_T) \;=\; M_T[i]\big(\beta\,S_i^{(T)} + \gamma\,\Delta_i^{+}\big),
\]
which upper-bounds the final score whenever \(D_{\mathrm{KL}}\!\big(P_{\mathrm{orig}} \,\|\, P_{\mathrm{masked}}^{(i)}\big)\le \Delta_i^{+}\) (a condition we verify empirically for our masking schemes). This provides a conservative, target-linked scale for attribution magnitudes without assuming monotonicity of log-probability under masking.

\textbf{Functional Faithfulness via a Taylor Remainder.}
Let \(g(\mathbf X)=\log P_\theta(x_T\mid x_{<T})\).
For a perturbation \(\epsilon_i\in\mathbb R^{d}\) applied to token \(i\)’s embedding block,
a second-order expansion yields
\[
g(\mathbf X+\epsilon_i) \;\approx\; g(\mathbf X)\;+\;\langle \nabla_{x_i} g,\;\epsilon_i\rangle\;+\;\tfrac12\,\epsilon_i^\top H_{x_i x_i}\,\epsilon_i,
\]
where \(H_{x_i x_i}\in\mathbb R^{d\times d}\) is the token-specific block of \(H_T\).
If \(g\) is \(C^2\) along the segment \([\mathbf X,\mathbf X+\epsilon_i]\), the remainder satisfies
\[
\big|\, g(\mathbf X+\epsilon_i) - g(\mathbf X) - \langle \nabla_{x_i} g,\epsilon_i\rangle \,\big|
\;\le\; \tfrac12\, \lambda_{\max}\!\big(H_{x_i x_i}(\xi)\big)\,\|\epsilon_i\|_2^{2},
\]
for some \(\xi\) on the segment, with \(\lambda_{\max}\) the top eigenvalue. This shows that incorporating curvature information yields a principled upper bound on local functional deviation that first-order or attention-only methods cannot capture.

\textbf{Additive Attribution Approximation.}
Lastly, HETA exhibits an approximate additivity property when interactions between context tokens are small relative to their marginal effects. Let \(g(\mathbf X)=\log P_\theta(x_T\mid x_{<T})\) and let \(\mathbf X_{\mathrm{all\ masked}}\) denote the embedding sequence where all context tokens \(\{1,\dots,T-1\}\) are masked using a fixed scheme. Then, under a first/second-order Taylor approximation of \(g\) around \(\mathbf X_{\mathrm{all\ masked}}\) and assuming off-diagonal Hessian blocks \(H_{ij}\ (i\neq j)\) are negligible,
\begin{equation}
\label{eq:additivity}
\sum_{i=1}^{T-1} \mathrm{Attr}(x_i \rightarrow x_T)
\;\approx\;
g(\mathbf X)\;-\;g(\mathbf X_{\mathrm{all\ masked}})
\;=\;
\log P_\theta(x_T \mid x_{<T}) \;-\; \log P_\theta(x_T \mid \text{all masked}).
\end{equation}
More generally, writing \(\Delta x_i\) for the embedding change induced by unmasking \(x_i\), the deviation from additivity admits the bound
\[
\Big|\; \sum_{i=1}^{T-1} \mathrm{Attr}(x_i \rightarrow x_T) \;-\; \big(g(\mathbf X)-g(\mathbf X_{\mathrm{all\ masked}})\big) \;\Big|
\;\le\;
\frac{1}{2}\sum_{i\neq j}\|\Delta x_i\|_2\,\|H_{ij}\|_{\mathrm{op}}\,\|\Delta x_j\|_2 \;+\; O(\|\Delta \mathbf X\|^3),
\]
so the approximation in \eqref{eq:additivity} is accurate when cross-token interactions (off-diagonal curvature) are small—an effect further mitigated in practice by HETA’s target-conditioned gating and curvature terms.

\subsection{Faithfulness Limitations of Gradient-Only and KL-Only Methods}

While gradient-based and KL-based attribution methods are popular for token-level interpretability, they both suffer from fundamental faithfulness limitations: gradients capture only local linear effects and miss curvature-induced changes, whereas KL-only measures neglect cross-token interactions. We formalize these limitations and provide rigorous bounds on the resulting faithfulness error. Throughout, let
\(g(\mathbf X)=\log P_\theta(x_T\mid x_{<T})\), let \(R\subseteq\{1,\dots,T-1\}\) be a masked subset of context tokens, and write
\(\Delta \mathbf X := \mathbf X - \mathbf X_{\setminus R}\) for the block-concatenated embedding perturbation induced by unmasking \(R\)
(with block \(\Delta x_i\in\mathbb R^d\) at token \(i\) and zeros elsewhere).

\begin{lemma}[Faithfulness error of gradient-only reconstruction]
\label{lem:grad_error}
Define the gradient-only reconstruction of the log-likelihood change by
\[
\widehat{\Delta g}_{\mathrm{grad}}(R)
:= \sum_{i\in R}\nabla_{x_i} g(\mathbf X_{\setminus R})^\top \Delta x_i
= \nabla g(\mathbf X_{\setminus R})^\top \Delta \mathbf X.
\]
Assume $g$ is $C^2$ on the line segment $\{\mathbf X_{\setminus R}+t\,\Delta\mathbf X: t\in[0,1]\}$, and let
$H(\xi)$ denote the Hessian of $g$ at some point $\xi$ on this segment (by the mean-value form of the second-order Taylor theorem).
Then the faithfulness error satisfies the exact identity
\[
\mathcal L\big(\mathrm{Attr}_{\mathrm{grad}}\big)
:= \Big|\,g(\mathbf X)-g(\mathbf X_{\setminus R})-\widehat{\Delta g}_{\mathrm{grad}}(R)\,\Big|
= \frac{1}{2}\,\big|\,\Delta \mathbf X^\top H(\xi)\,\Delta \mathbf X\,\big|.
\]
Moreover:
\begin{enumerate}
\item (Two-sided bounds) For the operator norm $\|\cdot\|_{\mathrm{op}}$,
\[
0 \;\le\; \mathcal L\big(\mathrm{Attr}_{\mathrm{grad}}\big)
\;\le\; \frac{1}{2}\,\|H(\xi)\|_{\mathrm{op}}\,\|\Delta \mathbf X\|_2^2.
\]
\item (Curvature lower bound) If the spectrum of $H(\xi)$ is bounded away from zero in magnitude, i.e.,
\(\min_j |\lambda_j(H(\xi))| \ge \mu>0\),
then
\[
\mathcal L\big(\mathrm{Attr}_{\mathrm{grad}}\big)
\;\ge\; \frac{1}{2}\,\mu\,\|\Delta \mathbf X\|_2^2.
\]
\end{enumerate}
Hence, whenever the target log-likelihood exhibits non-negligible curvature along the masking trajectory, the gradient-only reconstruction incurs a quadratic error in the perturbation size, and this error cannot be reduced below the curvature floor $\mu$ without incorporating second-order information.
\end{lemma}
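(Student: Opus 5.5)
The plan is to apply the Lagrange (mean-value) form of the second-order Taylor theorem to the scalar function $\varphi(t)=g(\mathbf X_{\setminus R}+t\,\Delta\mathbf X)$ on $[0,1]$. Because $g$ is $C^2$ on the segment, $\varphi$ is $C^2$ on $[0,1]$. Its derivatives are
\[
\varphi'(t)=\nabla g(\mathbf X_{\setminus R}+t\Delta\mathbf X)^\top\Delta\mathbf X,
\qquad
\varphi''(t)=\Delta\mathbf X^\top H(\mathbf X_{\setminus R}+t\Delta\mathbf X)\,\Delta\mathbf X .
\]
Taylor's theorem for real functions of one variable then gives some $\tau\in(0,1)$ with
\[
\varphi(1)=\varphi(0)+\varphi'(0)+\tfrac12\varphi''(\tau).
\]
Set $\xi=\mathbf X_{\setminus R}+\tau\Delta\mathbf X$. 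We have $\varphi(1)=g(\mathbf X)$, $\varphi(0)=g(\mathbf X_{\setminus R})$ and $\varphi'(0)=\widehat{\Delta g}_{\mathrm{grad}}(R)$. The block decomposition of $\nabla g^\top\Delta\mathbf X$ into the sum over $i\in R$ is immediate, because $\Delta\mathbf X$ vanishes off the blocks in $R$. Rearranging and taking absolute values yields the claimed exact identity with this $\xi$.

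For the two-sided bound, nonnegativity is trivial since the error is an absolute value. The upper bound follows from Cauchy--Schwarz and the definition of the operator norm:
\[
|\Delta\mathbf X^\top H(\xi)\Delta\mathbf X|\le\|\Delta\mathbf X\|_2\,\|H(\xi)\Delta\mathbf X\|_2\le\|H(\xi)\|_{\mathrm{op}}\|\Delta\mathbf X\|_2^2 .
\]

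The curvature lower bound is the one step requiring care. I expect it to be the main obstacle, because $\min_j|\lambda_j|\ge\mu$ alone does not control an \emph{indefinite} quadratic form. For example, $H=\mathrm{diag}(\mu,-\mu)$ with $\Delta\mathbf X=(1,1)$ gives $\Delta\mathbf X^\top H\Delta\mathbf X=0$. The approach is to use the spectral theorem for the symmetric matrix $H(\xi)$. Write $\Delta\mathbf X=\sum_j c_j u_j$ in an orthonormal eigenbasis, so the form equals $\sum_j\lambda_j c_j^2$. If the eigenvalues share a sign, then
\[
\Big|\sum_j\lambda_jc_j^2\Big|=\sum_j|\lambda_j|c_j^2\ge\mu\sum_jc_j^2=\mu\|\Delta\mathbf X\|_2^2,
\]
which gives the bound. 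I would therefore state part 2 under the reading that $H(\xi)$ is definite, either $H(\xi)\succeq\mu I$ or $H(\xi)\preceq-\mu I$. This is the natural interpretation of ``bounded away from zero'' for a curvature floor. I would note explicitly that for indefinite $H(\xi)$ one instead needs the weaker directional assumption $|\Delta\mathbf X^\top H(\xi)\Delta\mathbf X|\ge\mu\|\Delta\mathbf X\|_2^2$, which delivers the same conclusion verbatim.

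The concluding interpretive sentence then follows from part 2: the error is at least $\tfrac12\mu\|\Delta\mathbf X\|_2^2$, and any purely first-order reconstruction $\varphi'(0)$ is fixed independently of $H$. So the error cannot be reduced below this floor without adding a second-order term such as $\tfrac12\Delta\mathbf X^\top H\Delta\mathbf X$.
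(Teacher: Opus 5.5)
Your argument for the identity and the upper bound matches the paper's: the mean-value form of Taylor's theorem about $\mathbf X_{\setminus R}$ in direction $\Delta\mathbf X$, followed by the operator-norm bound on the quadratic form. Reducing to the scalar function $\varphi$ simply makes explicit the one-line Taylor step the paper takes.

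For part 2, the paper also diagonalizes $H(\xi)=Q\Lambda Q^\top$. It then asserts $\bigl|\sum_j\lambda_j y_j^2\bigr|\ge\min_j|\lambda_j|\sum_j y_j^2$, and this is exactly the inequality your example refutes. A concrete instance: take $g(x)=\tfrac{\mu}{2}(x_1^2-x_2^2)$ with $\mathbf X_{\setminus R}=0$ and $\Delta\mathbf X=(1,1)$. The Hessian is constant with $\min_j|\lambda_j|=\mu$, and the gradient vanishes at the base point. The faithfulness error is therefore $0$, whereas part 2 demands at least $\tfrac12\mu\|\Delta\mathbf X\|_2^2=\mu$. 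So part 2 is false as stated, and the paper's proof has a genuine gap at precisely that step. The paper's own Remark (ii) about sign-cancelling directions agrees with your reading, not with the lemma.

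Your sign-definite hypothesis is the right repair. Two refinements would make it cleaner:
\begin{itemize}
\item Because $\xi$ is produced by Taylor's theorem and is not known in advance, the checkable form of the hypothesis is $H\succeq\mu I$ (or $H\preceq-\mu I$) at every point of the segment. That covers whichever $\xi$ the theorem returns.
\item Because $\Delta\mathbf X$ is supported only on the blocks indexed by $R$, the quadratic form involves only the principal $R$-block of $H$. Definiteness of that block with floor $\mu$ already suffices. By contrast, the paper's full-spectrum condition, for indefinite $H$, says nothing about that block.
\end{itemize}
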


\begin{proof}
By the second-order Taylor expansion of $g$ about $\mathbf X_{\setminus R}$ in the direction $\Delta \mathbf X$,
there exists $\xi=\mathbf X_{\setminus R}+t^\star \Delta \mathbf X$ with $t^\star\in(0,1)$ such that
\[
g(\mathbf X)
= g(\mathbf X_{\setminus R})
+ \nabla g(\mathbf X_{\setminus R})^\top \Delta \mathbf X
+ \tfrac{1}{2}\,\Delta \mathbf X^\top H(\xi)\,\Delta \mathbf X.
\]
Subtracting the gradient-only reconstruction $\widehat{\Delta g}_{\mathrm{grad}}(R)$ yields
\[
g(\mathbf X)-g(\mathbf X_{\setminus R})-\widehat{\Delta g}_{\mathrm{grad}}(R)
= \tfrac{1}{2}\,\Delta \mathbf X^\top H(\xi)\,\Delta \mathbf X,
\]
and taking absolute values gives the stated identity.

For the upper bound,
\[
\big| \Delta \mathbf X^\top H(\xi)\,\Delta \mathbf X \big|
\le \|H(\xi)\|_{\mathrm{op}}\,\|\Delta \mathbf X\|_2^2,
\]
which implies the first inequality after multiplying by $\tfrac12$.

For the lower bound, diagonalize the symmetric Hessian $H(\xi)=Q\Lambda Q^\top$ with orthonormal $Q$ and real eigenvalues $\{\lambda_j\}$.
Writing $y=Q^\top \Delta \mathbf X$, we have
\[
\big|\Delta \mathbf X^\top H(\xi)\,\Delta \mathbf X\big|
= \Big|\sum_j \lambda_j y_j^2\Big|
\ge \min_j |\lambda_j|\; \sum_j y_j^2
= \mu\,\|\Delta \mathbf X\|_2^2.
\]
Multiplying by $\tfrac12$ concludes the proof.
\end{proof}

\noindent\textit{Remarks.}
(i) The simple but common surrogate $\sum_{i\in R}\nabla_{x_i} g(\mathbf X)^\top \Delta x_i$ (gradient evaluated at $\mathbf X$ rather than $\mathbf X_{\setminus R}$) satisfies an analogous result with the Hessian evaluated at another intermediate point on the same segment.
(ii) The lower bound requires a curvature floor $\mu>0$ in \emph{magnitude}; in flat or sign-cancelling directions the error may be small, which is precisely when curvature-aware terms like HETA’s Hessian component are least needed.


\subsection{HETA as an Optimal Multi-View Attribution}

We now argue that HETA reduces faithfulness error by jointly incorporating second-order curvature (Hessian terms), information-theoretic contributions (KL), and causal gating (semantic flow). Throughout, let $g(\mathbf X)=\log P_\theta(x_T\mid x_{<T})$, let $R\subseteq\{1,\dots,T-1\}$ denote a masked subset of context tokens, and write $\Delta \mathbf X=\mathbf X-\mathbf X_{\setminus R}$.

\begin{theorem}[HETA Improves Faithfulness under Controlled Curvature and Calibration]
\label{thm:improve}
Define HETA attribution by
\[
\mathrm{Attr}_{\mathrm{HETA}}(x_i) \;=\; M_T[i]\;\big(\beta\, S_i \;+\; \gamma\, I_i\big),
\]
where $M_T[i]\in[0,1]$ is the target-conditioned semantic transition mass (zero if no causal path to $T$), $S_i\ge 0$ is the Hessian-based sensitivity for token $i$, and $I_i\ge 0$ is a KL-based information contribution for token $i$. Assume:

\begin{enumerate}
\item \textbf{Second-order accuracy.} $g$ is $C^2$ along the segment $\{\mathbf X_{\setminus R}+t\,\Delta\mathbf X:t\in[0,1]\}$ with bounded third derivative so that the Taylor remainder satisfies $\|R_3\|\le C_{\mathrm{rem}}\|\Delta \mathbf X\|_2^3$.
\item \textbf{KL calibration (singleton).} There exist $\varepsilon_i\ge 0$ such that
\[
\Big| I_i \;-\; \big(g(\mathbf X)-g(\mathbf X_{\setminus\{i\}})\big) \Big| \;\le\; \varepsilon_i, \qquad i\in R.
\]
\item \textbf{Curvature alignment.} The sensitivity scores $S_i$ approximate block quadratic mass: there exist constants $c_1,c_2>0$ with
\[
c_1 \sum_{i\in R}\|\Delta x_i\|_2\, \|H_{ii}(\xi)\|_{\mathrm{op}} \;\le\; \sum_{i\in R} S_i \;\le\; 
c_2 \sum_{i\in R}\|\Delta x_i\|_2\, \|H_{ii}(\xi)\|_{\mathrm{op}}
\]
for some $\xi$ on the segment.
\end{enumerate}

Then the faithfulness error of HETA satisfies
\begin{equation}
\label{eq:heta_bound}
\begin{aligned}
\mathcal L\big(\mathrm{Attr}_{\mathrm{HETA}}\big)
\;\le\;&\;
\min\Bigg\{
\underbrace{\tfrac12\,\|H(\xi)\|_{\mathrm{op}}\,\|\Delta \mathbf X\|_2^2}_{\text{grad-only upper bound}},
\;
\underbrace{\tfrac12\,\big\|H_{\mathrm{off}}^{\mathrm{sym}}(\xi)\big\|_{\mathrm{op}}\,\|\Delta \mathbf X\|_2^2
+\sum_{i\in R}\varepsilon_i}_{\text{KL-only upper envelope}}
\Bigg\} \\
&\;-\; \beta\,c_1'\!\sum_{i\in R}\|\Delta x_i\|_2\,\|H_{ii}(\xi)\|_{\mathrm{op}}
\;-\; \gamma\,c_3'\!\sum_{i\in R}\Delta_i^{+}
\;+\; C_{\mathrm{rem}}\,\|\Delta \mathbf X\|_2^{3}.
\end{aligned}
\end{equation}

for some positive constants $c_1',c_3'$ depending only on the normalizations of $S_i$ and $I_i$, where $H_{\mathrm{off}}^{\mathrm{sym}}$ is the symmetrized off-diagonal interaction operator, and $\Delta_i^{+}=\max\!\{0,\,g(\mathbf X)-g(\mathbf X_{\setminus\{i\}})\}$.
In particular, for sufficiently small $\|\Delta \mathbf X\|_2$ and calibrated $(\beta,\gamma)$, the HETA error is strictly smaller than the better of the gradient-only or KL-only reconstructions up to the cubic remainder.
\end{theorem}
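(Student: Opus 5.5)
The argument starts from the exact second-order Taylor identity used in the proof of Lemma~\ref{lem:grad_error}, pushed one order further. Expanding $g$ about $\mathbf X_{\setminus R}$ along $\Delta\mathbf X$, assumption~1 gives
\[
g(\mathbf X)-g(\mathbf X_{\setminus R}) = \nabla g(\mathbf X_{\setminus R})^\top\Delta\mathbf X + \tfrac12\sum_{i\in R}\Delta x_i^\top H_{ii}(\xi)\Delta x_i + \tfrac12\sum_{i\neq j}\Delta x_i^\top H_{ij}(\xi)\Delta x_j + R_3 ,
\]
with $|R_3|\le C_{\mathrm{rem}}\|\Delta\mathbf X\|_2^3$. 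The Hessian is split into a block-diagonal part $H_{\mathrm{diag}}$ and a symmetrized off-diagonal part $H_{\mathrm{off}}^{\mathrm{sym}}$. This decomposition lets each term of the minimum in \eqref{eq:heta_bound} be matched to a separate reference reconstruction.

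Two reference errors are then bounded separately. The first is the gradient-only error, bounded directly by item~1 of Lemma~\ref{lem:grad_error}. For the KL-only envelope, I apply the same expansion to each singleton mask $\mathbf X_{\setminus\{i\}}$. Summing the singleton differences $g(\mathbf X)-g(\mathbf X_{\setminus\{i\}})$ over $i\in R$ reproduces the linear and block-diagonal terms up to cubic error, so the joint change minus the sum of singleton changes equals the cross term plus a remainder. That cross term is bounded by $\tfrac12\|H_{\mathrm{off}}^{\mathrm{sym}}(\xi)\|_{\mathrm{op}}\|\Delta\mathbf X\|_2^2$. Adding assumption~2 via the triangle inequality then gives the $\sum_i\varepsilon_i$ term.

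Next, HETA's score is inserted into the residual. I write $\mathcal L(\mathrm{Attr}_{\mathrm{HETA}})$ as the absolute value of the reference residual minus $\sum_{i\in R} M_T[i](\beta S_i+\gamma I_i)$. Assumption~3 bounds $\sum_i S_i$ from below by $c_1\sum_i\|\Delta x_i\|\,\|H_{ii}\|_{\mathrm{op}}$. On tokens with positive singleton effect, assumption~2 makes $I_i\ge \Delta_i^+-\varepsilon_i$. Absorbing $\min_i M_T[i]$ and the normalizations into $c_1',c_3'$ produces the two subtracted terms. Because both reference bounds hold simultaneously, taking the minimum is legitimate.

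The main obstacle is the subtraction step. From $|a-b|$ we can conclude at most $|a|-b$ only when $0\le b\le a$ with consistent sign, meaning the HETA mass must not overshoot the residual; otherwise only $|a|+b$ is available. I would therefore make this a calibration condition on $(\beta,\gamma)$, which is what the phrase ``calibrated $(\beta,\gamma)$'' in the statement refers to. The condition requires the residual to be nonnegative (one-signed) and to dominate the subtracted mass. A second problem is that $\|\Delta x_i\|\,\|H_{ii}\|$ scales linearly while the residual scales quadratically. I would handle this by noting that $\|\Delta x_i\|$ is bounded by the masking scheme, so the normalization constant in $c_1'$ can be rescaled. The final ``strictly smaller'' claim then follows for small $\|\Delta\mathbf X\|_2$, since the positive subtracted terms dominate the cubic remainder.
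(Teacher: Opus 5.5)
\emph{There is a genuine gap.} Your outline follows the paper's proof step for step: second-order expansion about $\mathbf X_{\setminus R}$, diagonal/off-diagonal split, gradient-only envelope from Lemma~\ref{lem:grad_error}, KL-only envelope from summed singleton drops, then subtracting HETA's gated mass. The two envelope computations are essentially fine.

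The gap is the step where you write $\mathcal L(\mathrm{Attr}_{\mathrm{HETA}})$ as the absolute value of the \emph{reference} residual minus $\sum_{i\in R}M_T[i](\beta S_i+\gamma I_i)$. That identity is false for both references. $\mathcal L$ compares HETA with the full change $g(\mathbf X)-g(\mathbf X_{\setminus R})$, and HETA's scores \emph{replace} the reference reconstruction rather than being added to it. Write $r_{\mathrm{grad}}=g(\mathbf X)-g(\mathbf X_{\setminus R})-\nabla g(\mathbf X_{\setminus R})^\top\Delta\mathbf X$ and $r_{\mathrm{KL}}=g(\mathbf X)-g(\mathbf X_{\setminus R})-\sum_{i\in R}I_i$. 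What actually holds is
\[
\mathcal L(\mathrm{Attr}_{\mathrm{HETA}})
=\Big|\,r_{\mathrm{grad}}+\nabla g(\mathbf X_{\setminus R})^\top\Delta\mathbf X-\sum_{i\in R}\mathrm{Attr}_{\mathrm{HETA}}(x_i)\Big|
=\Big|\,r_{\mathrm{KL}}+\sum_{i\in R}\big(1-\gamma M_T[i]\big)I_i-\beta\sum_{i\in R}M_T[i]\,S_i\Big|.
\]
Your caveat that $|a-b|\le|a|-b$ needs $0\le b\le a$ is correct, but it is applied to the wrong $a$. The relevant $a$ is $g(\mathbf X)-g(\mathbf X_{\setminus R})$ itself, and neither envelope controls it. Under your sign/domination condition, the argument bounds the error of the augmented reconstruction $\nabla g(\mathbf X_{\setminus R})^\top\Delta\mathbf X+\sum_{i\in R}\mathrm{Attr}_{\mathrm{HETA}}(x_i)$ (resp.\ $\sum_{i\in R}I_i+\sum_{i\in R}\mathrm{Attr}_{\mathrm{HETA}}(x_i)$), not the error of HETA.

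The terms you drop are generically of order $\|\Delta\mathbf X\|_2$, while both envelopes are $O(\|\Delta\mathbf X\|_2^2)$. No calibration of $(\beta,\gamma)$ repairs this, because the stated inequality fails in exactly the small-$\|\Delta\mathbf X\|_2$ regime the theorem invokes. The two extremes show it:
\begin{itemize}
\item With $c_1',c_3'>0$ fixed, the subtracted terms $\beta c_1'\sum_i\|\Delta x_i\|_2\|H_{ii}(\xi)\|_{\mathrm{op}}$ and $\gamma c_3'\sum_i\Delta_i^+$ are linear in $\|\Delta\mathbf X\|_2$. The right-hand side therefore turns negative unless these vanish to first order. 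This is the scaling clash you noticed. Absorbing a bound on $\|\Delta x_i\|_2$ into $c_1'$ does not remove it, and letting $c_1'$ shrink with $\|\Delta\mathbf X\|_2$ contradicts ``depending only on the normalizations.''
\item At $\beta=\gamma=0$ the claim reduces to $|g(\mathbf X)-g(\mathbf X_{\setminus R})|\le\min\{\dots\}+C_{\mathrm{rem}}\|\Delta\mathbf X\|_2^3\le\tfrac12\|H(\xi)\|_{\mathrm{op}}\|\Delta\mathbf X\|_2^2+C_{\mathrm{rem}}\|\Delta\mathbf X\|_2^3$. This is false whenever $\nabla g(\mathbf X_{\setminus R})^\top\Delta\mathbf X\neq 0$.
\end{itemize}

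There are also smaller issues:
\begin{itemize}
\item The bound $I_i\ge\Delta_i^+-\varepsilon_i$ leaves an extra $+\gamma\sum_{i\in R}M_T[i]\varepsilon_i$ that the stated bound does not carry.
\item $\min_{i\in R}M_T[i]$ can be $0$, so $c_1',c_3'$ need not be positive.
\item ``Strictly smaller'' only compares HETA with the envelopes, not with the actual gradient-only or KL-only errors.
\end{itemize}

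The paper's proof makes the identical unjustified move (``subtracting these controlled positive masses from the respective envelopes''), so it offers no way around this. A true version needs one of two changes. Either state it for a reference-plus-correction reconstruction, or assume explicitly that $0\le\sum_{i\in R}\mathrm{Attr}_{\mathrm{HETA}}(x_i)\le g(\mathbf X)-g(\mathbf X_{\setminus R})\le E$, with $E$ the relevant envelope. The latter gives $\mathcal L\le E-\sum_{i\in R}\mathrm{Attr}_{\mathrm{HETA}}(x_i)$ directly.
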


\begin{proof}
By the second-order Taylor expansion,
\[
g(\mathbf X)-g(\mathbf X_{\setminus R})
= \nabla g(\mathbf X_{\setminus R})^\top \Delta \mathbf X \;+\; \tfrac12\,\Delta \mathbf X^\top H(\xi)\,\Delta \mathbf X \;+\; R_3,
\]
with $\|R_3\|\le C_{\mathrm{rem}}\|\Delta \mathbf X\|_2^3$. Decompose the quadratic term into block-diagonal and off-diagonal parts:
\[
\Delta \mathbf X^\top H(\xi)\,\Delta \mathbf X
= \sum_{i\in R}\Delta x_i^\top H_{ii}(\xi)\Delta x_i
\;+\; \sum_{i\neq j}\Delta x_i^\top H_{ij}(\xi)\Delta x_j.
\]

\emph{Gradient-only envelope.} The gradient-only reconstruction uses only the linear term; its error is exactly $\tfrac12|\Delta \mathbf X^\top H(\xi)\Delta \mathbf X|+O(\|\Delta \mathbf X\|^3)$, which is upper-bounded by $\tfrac12\|H(\xi)\|_{\mathrm{op}}\|\Delta \mathbf X\|_2^2 + O(\|\Delta \mathbf X\|^3)$.

\emph{KL-only envelope.} Summing singleton drops and invoking the calibration assumption,
\[
\sum_{i\in R} I_i
= \sum_{i\in R} \big(g(\mathbf X)-g(\mathbf X_{\setminus\{i\}})\big) \;\pm\; \sum_{i\in R}\varepsilon_i
= \nabla g(\mathbf X_{\setminus R})^\top \Delta \mathbf X + \tfrac12 \sum_{i\in R}\Delta x_i^\top H_{ii}(\xi_i')\Delta x_i \;\pm\; O(\|\Delta \mathbf X\|_2^2) \;\pm\; \sum_i\varepsilon_i,
\]
so the KL-only error on $R$ is controlled by the off-diagonal quadratic form plus calibration and higher-order terms:
\[
\Big|\, g(\mathbf X)-g(\mathbf X_{\setminus R}) - \sum_{i\in R} I_i \,\Big|
\;\le\; \tfrac12\,\big\|H_{\mathrm{off}}^{\mathrm{sym}}(\xi)\big\|_{\mathrm{op}}\|\Delta \mathbf X\|_2^2 \;+\; \sum_{i\in R}\varepsilon_i \;+\; O(\|\Delta \mathbf X\|_2^3).
\]

\emph{HETA correction terms.} HETA adds two nonnegative corrections gated by $M_T[i]$:
\[
\sum_{i\in R} M_T[i]\;\beta\, S_i
\quad\text{and}\quad
\sum_{i\in R} M_T[i]\;\gamma\, I_i.
\]
By curvature alignment, $\sum_{i\in R}M_T[i]\,S_i \ge c_1'\sum_{i\in R}\|\Delta x_i\|_2\,\|H_{ii}(\xi)\|_{\mathrm{op}}$ after renormalizing $M_T$ on the causal set; similarly, $I_i$ lower-bounds the positive log-drop $\Delta_i^{+}$ up to calibration. Subtracting these controlled positive masses from the respective envelopes yields \eqref{eq:heta_bound}, with the same cubic remainder $C_{\mathrm{rem}}\|\Delta \mathbf X\|_2^3$. Choosing $(\beta,\gamma)$ so that the linear-in-$S_i$ and linear-in-$I_i$ terms dominate the quadratic envelopes for small $\|\Delta \mathbf X\|_2$ gives strict improvement.
\end{proof}

\subsection{Stability and Causality Guarantees}

\begin{lemma}[Local Stability]
\label{lem:stability}
Assume $g$ has Lipschitz Hessian in a neighborhood of $\mathbf X$, i.e., $\|\nabla^3 g(\cdot)\|\le L_H$, and the KL term is locally Lipschitz in embeddings with constant $L_{\mathrm{KL}}$ for the chosen masking scheme. Then for any perturbation $\epsilon$ to token $i$’s embedding with $\|\epsilon\|_2\le \delta$,
\[
\big|\mathrm{Attr}_{\mathrm{HETA}}(x_i+\epsilon) - \mathrm{Attr}_{\mathrm{HETA}}(x_i)\big|
\;\le\;
\beta\, C_S\, \delta \;+\; \gamma\, L_{\mathrm{KL}}\, \delta,
\]
where $C_S$ depends on local bounds of $\|H_T\|_{\mathrm{op}}$ and $L_H$ through the HVP estimator used in $S_i$.
\end{lemma}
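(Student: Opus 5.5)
The plan is to treat the gate $M_T[i]$ as fixed under the perturbation, i.e. evaluated at the unperturbed input, or bounded by $1$ with its own variation absorbed into the constants. Then
\[
\big|\mathrm{Attr}_{\mathrm{HETA}}(x_i+\epsilon)-\mathrm{Attr}_{\mathrm{HETA}}(x_i)\big|
\le M_T[i]\big(\beta\,|S_i(x_i+\epsilon)-S_i(x_i)| + \gamma\,|I_i(x_i+\epsilon)-I_i(x_i)|\big),
\]
and since $M_T[i]\in[0,1]$ it suffices to bound the two differences separately by $C_S\delta$ and $L_{\mathrm{KL}}\delta$. The KL term is immediate from the assumed local Lipschitz property of $I_i$ in the embeddings with constant $L_{\mathrm{KL}}$, applied with $\|\epsilon\|_2\le\delta$.

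For the curvature term I would work with the exact block sensitivity $S_i=\|\Pi_i H_T\|_{1,1}$ from the norm-bounds paragraph, or with the Hutchinson average in \eqref{eq2} for a fixed set of Rademacher draws $r_k$, so that the estimator is a deterministic function of $\mathbf X$. I would use the reverse triangle inequality for the entrywise $\ell_1$ norm, $|S_i(\mathbf X+\epsilon)-S_i(\mathbf X)|\le\|\Pi_i(H_T(\mathbf X+\epsilon)-H_T(\mathbf X))\|_{1,1}$. The Lipschitz-Hessian assumption $\|\nabla^3 g\|\le L_H$ gives, by the mean value inequality along the segment, $\|H_T(\mathbf X+\epsilon)-H_T(\mathbf X)\|_{\mathrm{op}}\le L_H\delta$. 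Converting to the entrywise norm costs a dimension factor, as in the earlier relation $\|A\|_{1,1}\le (Td)\|A\|_F\le (Td)^{3/2}\|A\|_{\mathrm{op}}$. This yields $C_S\le (Td)^{3/2}L_H$.

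For the Hutchinson version, each summand satisfies $\big|\|\Pi_i H(\mathbf X+\epsilon)\Pi_i r_k\|_1-\|\Pi_i H(\mathbf X)\Pi_i r_k\|_1\big|\le \sqrt d\,\|H(\mathbf X+\epsilon)-H(\mathbf X)\|_{\mathrm{op}}\|r_k\|_2\le d\,L_H\delta$, which gives the same form of bound. The statement allows $C_S$ to depend on local bounds of $\|H_T\|_{\mathrm{op}}$; that dependence only enters if a Gauss--Newton or normalized surrogate is used, where a product rule yields terms like $\|H_T\|_{\mathrm{op}}\cdot(\text{Lipschitz constant of the Jacobian})$.

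The main obstacle is the gate. $M_T[i]$ depends on the embeddings through attention rollout and value norms, and its normalization $Z$ couples all tokens, so strictly it also varies with $\epsilon$. I would handle this in one of two ways. The first is to state that the lemma holds for the gate held fixed, and I would try this first since it matches the form of the bound. The second is to add the term $|\Delta M_T[i]|\,(\beta S_i+\gamma I_i)$ and bound it by $L_M\delta\,(\beta\sup S_i+\gamma\sup I_i)$, using smoothness of softmax attention and boundedness of $Z$ away from zero locally. That extra contribution is then absorbed into $C_S$ and $L_{\mathrm{KL}}$, which keeps the stated linear-in-$\delta$ form.
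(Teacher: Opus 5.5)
Your proposal is correct and follows essentially the same route as the paper: split the difference by the triangle inequality, bound the block-restricted Hessian term through the Lipschitz-Hessian assumption and the KL term by $L_{\mathrm{KL}}\delta$, then use $\beta,\gamma\ge 0$ and $M_T[i]\in[0,1]$. The paper's proof silently holds the gate fixed, exactly as in your first option. Your explicit constants ($C_S\le (Td)^{3/2}L_H$, or $d\,L_H$ for fixed Hutchinson draws) and your remark that $M_T[i]$ also varies with $\epsilon$ make the argument more precise than the paper's version.
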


\begin{proof}
$S_i$ is computed from Hessian-vector products restricted to token $i$’s block. Under a Lipschitz Hessian, the map $\mathbf X\mapsto H_T$ is locally Lipschitz in operator norm with constant $L_H$, so the block-restricted HVP magnitude changes by at most $C_S\delta$. The KL predictive map $\mathbf X\mapsto P_\theta(\cdot\mid x_{<T})$ is locally Lipschitz under smooth decoder dynamics, giving the stated $L_{\mathrm{KL}}\delta$ bound for $I_i$. Multiplying by nonnegative $\beta,\gamma$ and the gate $M_T[i]\in[0,1]$ yields the claim.
\end{proof}

\begin{theorem}[Directional Causality]
\label{thm:causality}
If $M_T[i]=0$ for token $x_i$, then $\mathrm{Attr}_{\mathrm{HETA}}(x_i)=0$.
\end{theorem}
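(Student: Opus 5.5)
The plan is to argue directly from the multiplicative form of the HETA score in \eqref{eq4}, as restated in Theorem~\ref{thm:improve}:
\[
\mathrm{Attr}_{\mathrm{HETA}}(x_i)=M_T[i]\,\big(\beta S_i+\gamma I_i\big).
\]
The statement is a zero-product property. The only thing to check is that the bracketed factor is a well-defined finite real number for every token. Once that holds, setting $M_T[i]=0$ forces the product to vanish.

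First, I will confirm finiteness of each ingredient. The weights satisfy $\beta,\gamma\ge 0$ and are fixed finite constants. The curvature score $S_i$ is either the exact block $\ell_1$-mass $\|\Pi_i H_T\|_{1,1}$ or its Hutchinson estimate \eqref{eq2}, an average of finitely many $\ell_1$ norms of Hessian--vector products. Whenever $g$ is $C^2$ at $\mathbf X$, which is the standing smoothness assumption used in Lemma~\ref{lem:grad_error} and Lemma~\ref{lem:stability}, this is finite, and the norm bound $S_i\le (Td)\|H_T\|_F$ from the preliminaries makes the point explicit.

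The KL term $I_i=D_{\mathrm{KL}}(P_{\mathrm{orig}}\,\|\,P^{(i)}_{\mathrm{masked}})$ is the step that needs real care, because KL divergence can be $+\infty$ if the masked distribution assigns zero mass where the original does not. I will handle this by observing that both distributions are softmax outputs of finite logits, as in \eqref{eq1}. They therefore have full support on the vocabulary, so the KL is a finite sum of finite terms. I expect this to be the only genuine obstacle: it is exactly where a convention such as $0\cdot\infty$ would otherwise be needed.

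With $0\le \beta S_i+\gamma I_i<\infty$ established, $M_T[i]=0$ gives $\mathrm{Attr}_{\mathrm{HETA}}(x_i)=0\cdot(\beta S_i+\gamma I_i)=0$. I will also remark on when the hypothesis applies. By the definition of $M_T$ via target-conditioned rollout under the causal mask, $M_T[i]=0$ whenever no attention--value path from $i$ terminates at $T$, and in particular for any position $i\ge T$. This explains the theorem's name: HETA assigns no credit to tokens that are causally disconnected from the target.
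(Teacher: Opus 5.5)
Your proposal is correct and is essentially the paper's proof: the paper simply observes that $M_T[i]$ multiplies every term of $\mathrm{Attr}_{\mathrm{HETA}}(x_i)$, so $M_T[i]=0$ forces the score to vanish. Your extra check that $\beta S_i+\gamma I_i$ is finite (in particular, that the KL term cannot be $+\infty$ because softmax outputs have full support) makes explicit the $0\cdot\infty$ issue the paper leaves implicit; one small correction to your closing aside is that under a causal mask position $T$ can attend to itself, so $M_T[i]=0$ is guaranteed for $i>T$ but not for $i=T$.
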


\begin{proof}
By construction, $M_T[i]$ multiplies all terms in $\mathrm{Attr}_{\mathrm{HETA}}(x_i)$ and is zero when no causal (masked) attention-flow path from $x_i$ reaches target position $T$. Hence $\mathrm{Attr}_{\mathrm{HETA}}(x_i)=0$.
\end{proof}

\subsection{Interpretation}
Theorem~\ref{thm:improve} shows that, under standard smoothness and calibration assumptions, HETA reduces faithfulness error relative to gradient-only or KL-only reconstructions by (i) recovering diagonal curvature mass via $S_i$ and (ii) capturing singleton information drops via $I_i$, both restricted to causal paths by $M_T$. Lemma~\ref{lem:stability} ensures local robustness to embedding perturbations, and Theorem~\ref{thm:causality} codifies target-conditioned causal sparsity. Together these results justify HETA as a principled, multi-view attribution mechanism with theoretical error control beyond single-view methods.

We now provide a formal interpretation of HETA as a constrained least-squares fit to log-likelihood drops, clarifying that the combination is not ad hoc but arises from an optimization with causal and information-theoretic structure.

\subsection{Faithfulness as Reconstruction Error}
Let $g(\mathbf X)=\log P_\theta(x_T\mid x_{<T})$. For a subset $R$ of masked tokens,
\[
\Delta g(R)\;:=\; g(\mathbf X)-g(\mathbf X_{\setminus R}).
\]
We seek attributions $a_i\ge 0$ that reconstruct these drops:
\[
\sum_{i\in R} a_i \;\approx\; \Delta g(R), \qquad \forall R\subseteq\{1,\dots,T-1\}.
\]
Define the objective
\[
\min_{a\in\mathbb R_{\ge 0}^{T-1}} \; \mathbb E_{R\sim \mathcal D}\!\left[\big(\Delta g(R)-\sum_{i\in R}a_i\big)^2\right],
\]
for a distribution $\mathcal D$ over subsets (e.g., singletons and spans).

\subsection{Second-Order Decomposition of $\Delta g(R)$}
A second-order expansion around $\mathbf X_{\setminus R}$ gives
\[
\Delta g(R)\;\approx\; \sum_{i\in R}\nabla_{x_i} g^\top \Delta x_i \;+\; \tfrac12 \sum_{i,j\in R}\Delta x_i^\top H_{ij}\Delta x_j,
\]
with higher-order residuals. This highlights the roles of first-order marginal effects and second-order interactions.

\subsection{Causal and Information-Theoretic Constraints}
To ensure interpretability,
\begin{enumerate}
\item \textbf{Causal constraint:} $a_i=0$ whenever $M_T[i]=0$ (no causal path to the target).
\item \textbf{Information constraint (calibrated):} $a_i$ should scale with the measured singleton information drop; we encode this via a linear feature $i_i:=D_{\mathrm{KL}}(P_{\mathrm{orig}}\|P_{\mathrm{masked}}^{(i)})$ with per-scheme calibration.
\end{enumerate}

\subsection{HETA as the Solution}
We parameterize
\[
a_i \;=\; M_T[i]\;\big(\beta\, s_i \;+\; \gamma\, i_i\big),
\]
where $s_i$ is a curvature feature (e.g., block-restricted Hessian mass) and $i_i$ the KL feature. The weights $(\beta,\gamma)\ge 0$ are chosen (by cross-validation or validation loss) to minimize the reconstruction objective under the constraints. This yields HETA as a constrained least-squares fit using causal gating and two complementary, theoretically motivated features.

\begin{theorem}[Constrained Least-Squares Optimality]
Let $a^\star$ be the minimizer of the unconstrained reconstruction objective. Let $\mathcal C=\{a:\, a_i=M_T[i](\beta s_i+\gamma i_i),\;\beta,\gamma\ge 0\}$ be the feasible set induced by causal gating and linear feature mixing. Then the HETA solution
\[
a^{\mathrm{HETA}} \;=\; \arg\min_{a\in \mathcal C}\; \mathbb E_{R\sim \mathcal D}\!\left[\big(\Delta g(R)-\sum_{i\in R}a_i\big)^2\right]
\]
is the best (in the reconstruction sense) causal-gated linear combination of curvature and information features. Moreover, if $\mathcal C$ is convex in $(\beta,\gamma)$ (which it is), the minimizer in $(\beta,\gamma)$ is unique up to collinearity of $(s_i,i_i)$.
\end{theorem}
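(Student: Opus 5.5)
The plan is to reduce the statement to a two-variable nonnegative least-squares problem. Fix the features $s_i$, $i_i$ and the gate $M_T[i]$; these are all computed from the fixed input $\mathbf X$ and do not depend on $(\beta,\gamma)$. For each subset $R$, define
\[
u_R:=\sum_{i\in R}M_T[i]\,s_i,\qquad v_R:=\sum_{i\in R}M_T[i]\,i_i .
\]
For any $a\in\mathcal C$ we then have $\sum_{i\in R}a_i=\beta u_R+\gamma v_R$. The objective restricted to $\mathcal C$ therefore becomes
\[
J(\beta,\gamma)=\mathbb E_{R\sim\mathcal D}\big[(\Delta g(R)-\beta u_R-\gamma v_R)^2\big],
\]
a quadratic in $(\beta,\gamma)$ minimized over the closed convex cone $\mathbb R_{\ge0}^2$.

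\emph{First claim (best causal-gated combination).} This is essentially definitional. Every $a\in\mathcal C$ satisfies the causal constraint $a_i=0$ whenever $M_T[i]=0$, and is a nonnegative combination of the two features. A minimizer of $J$ over $\mathbb R_{\ge0}^2$ gives $a^{\mathrm{HETA}}$ attaining the infimum over $\mathcal C$.

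Existence of a minimizer needs a short argument, since the domain is unbounded.
\begin{itemize}
\item $J$ is continuous, convex and bounded below by $0$.
\item Write $J(\beta,\gamma)=w^\top G w-2b^\top w+c$ with $w=(\beta,\gamma)$, $G=\mathbb E[(u_R,v_R)(u_R,v_R)^\top]$ and $b=\mathbb E[\Delta g(R)(u_R,v_R)]$. Here $G\succeq0$ is a Gram matrix. I assume $\mathcal D$ has finite support or the second moments are finite.
\item If $G\succ0$, $J$ is coercive and a minimizer exists.
\item If $G$ is singular, the vectors $u$ and $v$ (as functions of $R$) are collinear in $L^2(\mathcal D)$. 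Then $J$ depends only on one scalar combination of $\beta$ and $\gamma$, and the problem reduces to a one-dimensional nonnegative least squares, which attains its minimum.
\end{itemize}
I will also remark that the unconstrained minimizer $a^\star$ gives a lower bound, $J(a^\star)\le J(a^{\mathrm{HETA}})$. This is how $a^\star$ enters the statement.

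\emph{Second claim (uniqueness up to collinearity).} Compute $\nabla^2J=2G$. If $u$ and $v$ are linearly independent in $L^2(\mathcal D)$, which is the non-collinear case, then $G\succ0$. So $J$ is strictly convex, and a strictly convex function on a convex set has at most one minimizer, giving uniqueness. If they are collinear, say $v=\kappa u$ $\mathcal D$-a.s., then the minimizers form the set $\{\beta+\kappa\gamma=t^\star\}\cap\mathbb R_{\ge0}^2$. This is exactly "unique up to collinearity", and the resulting vector $a^{\mathrm{HETA}}$ restricted to the support of $\mathcal D$ is still unique.

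The convexity of $\mathcal C$ claimed parenthetically holds because $\mathcal C$ is the image of the convex cone $\mathbb R^2_{\ge0}$ under a linear map.

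\emph{Main obstacle.} The main difficulty is making the collinearity statement precise. The relevant notion is collinearity of the aggregated features $(u_R,v_R)$ under $\mathcal D$, not pointwise collinearity of $(s_i,i_i)$. I would state that interpretation explicitly and note that pointwise collinearity of $M_T[i]s_i$ and $M_T[i]i_i$ implies it.
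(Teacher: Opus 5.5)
Your proposal is correct and follows the same route as the paper's proof: fix the gate and features, note that $(\beta,\gamma)\mapsto a(\beta,\gamma)$ is linear so the objective is a convex quadratic on $\mathbb R_{\ge0}^2$, and get uniqueness from strict convexity in the non-collinear case. Your version is more careful than the paper's. The paper simply asserts existence on the unbounded cone and leaves ``collinear'' ambiguous; your coercivity/one-dimensional reduction and your point that the right condition is linear dependence of $(u_R,v_R)$ in $L^2(\mathcal D)$, rather than pointwise collinearity of $(s_i,i_i)$, fill exactly those gaps.
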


\begin{proof}
Fix $(s_i,i_i)$ and $M_T[i]$. The map $(\beta,\gamma)\mapsto a(\beta,\gamma)$ is linear, and the objective is a convex quadratic in $(\beta,\gamma)$ (expected squared error of a linear model). Therefore the minimizer over $(\beta,\gamma)\ge 0$ exists and is unique unless the feature vectors are collinear. The causal zeros are enforced by $M_T[i]=0$. Hence $a^{\mathrm{HETA}}$ is the optimal element of $\mathcal C$ for the reconstruction objective.
\end{proof}

\subsection{Error Bounds for Low-Rank, Windowed HETA}
\label{sec:heta_error_bounds}

We derive finite-sample bounds on the attribution error incurred when \textbf{HETA} is approximated by (i) a low-rank Hessian and (ii) windowed context truncation. Recall the target-conditioned attribution for token $x_i$:
\begin{equation}
\label{eq:heta_attr_true}
\mathrm{Attr}_i \;=\; M_T[i]\;\big(\beta\, S_i^{(T)} \;+\; \gamma\, \mathcal I_i\big),
\end{equation}
where $M_T[i]\in[0,1]$ is the semantic transition (causal gate) to position $T$, $S_i^{(T)}$ is the Hessian-based sensitivity for token $i$, and $\mathcal I_i$ is the information-theoretic contribution (KL change at $T$ when $x_i$ is masked). Let $\widetilde{\mathrm{Attr}}_i$ denote the approximation obtained by a rank-$k$ Hessian and a window of size $W$ around $T$:
\begin{equation}
\label{eq:heta_attr_approx}
\widetilde{\mathrm{Attr}}_i \;=\; \widetilde M_T[i]\;\big(\beta\, \widetilde S_i^{(T)} \;+\; \gamma\, \widetilde{\mathcal I}_i\big).
\end{equation}

\paragraph{Assumptions.}
We make the following mild, standard assumptions for language-model attribution analysis.

\begin{enumerate}[label=(A\arabic*), leftmargin=2.2em]
\item \textbf{Hessian low-rank tail.} Let $H_T=\nabla_{\mathbf X}^2 \log P_\theta(x_T\mid x_{<T})\in\mathbb R^{(Td)\times(Td)}$ be the true Hessian and $H_k$ its best rank-$k$ approximation in Frobenius norm (Eckart--Young). Define the tail energy
\begin{equation}
\label{eq:tail_energy}
\tau_k \;=\; \|H_T - H_k\|_F \;=\; \Big(\sum_{j>k}\sigma_j^2(H_T)\Big)^{1/2}.
\end{equation}

\item \textbf{Block sensitivity functional.} For token $i$, let $\Pi_i\in\{0,1\}^{d\times Td}$ select its $d$-dimensional embedding block. The true sensitivity is $S_i^{(T)}=\|\Pi_i H_T\|_{1}$ (entrywise $\ell_1$), and the low-rank one is $\widetilde S_i^{(T)}=\|\Pi_i H_k\|_{1}$. We use the inequality
\begin{equation}
\label{eq:l1_block_bound}
\|\Pi_i(H_T-H_k)\|_{1} \;\le\; c_d\,\|\Pi_i(H_T-H_k)\|_{F} \;\le\; c_d\,\tau_k,
\qquad c_d \;\triangleq\; \sqrt{d}.
\end{equation}

\item \textbf{Windowing leakage.} Windowing of size $W$ around $T$ removes causal paths that leave the window. Let
\begin{equation}
\label{eq:win_leak_M}
\delta_M(i;W) \;\triangleq\; |M_T[i]-\widetilde M_T[i]| \;\le\; \epsilon_M(W),
\end{equation}
where $\epsilon_M(W)$ is the (instance-dependent) total semantic-flow mass of paths that traverse tokens outside the window (normalized as in $M_T$). We allow $\epsilon_M(W)$ to decay with $W$.

\item \textbf{Distributional stability under windowing.} Let $P_{\mathrm{orig}}(\cdot)$ and $P_{\mathrm{masked}}^{(i)}(\cdot)$ denote the next-token distributions at $T$ under full context; let $\widetilde P_{\mathrm{orig}}(\cdot)$ and $\widetilde P_{\mathrm{masked}}^{(i)}(\cdot)$ be the corresponding windowed distributions. Suppose the simplex is bounded away from zero: there exists $\mu\in(0,1)$ such that $\min_v \widetilde P_{\mathrm{orig}}(v)\ge\mu$ and $\min_v \widetilde P_{\mathrm{masked}}^{(i)}(v)\ge\mu$. Define total-variation shifts
\begin{equation}
\label{eq:tv_eps}
\varepsilon_{\mathrm{orig}} \;=\; \|P_{\mathrm{orig}}-\widetilde P_{\mathrm{orig}}\|_1,
\qquad
\varepsilon_{\mathrm{mask}}^{(i)} \;=\; \|P_{\mathrm{masked}}^{(i)}-\widetilde P_{\mathrm{masked}}^{(i)}\|_1.
\end{equation}
Then using standard Lipschitz bounds for $D_{\mathrm{KL}}$ on the $\mu$-truncated simplex,
\begin{equation}
\label{eq:kl_lip}
\big|\mathcal I_i - \widetilde{\mathcal I}_i\big|
\;=\;
\Big|D_{\mathrm{KL}}\!\big(P_{\mathrm{orig}}\;\|\;P_{\mathrm{masked}}^{(i)}\big)
-
D_{\mathrm{KL}}\!\big(\widetilde P_{\mathrm{orig}}\;\|\;\widetilde P_{\mathrm{masked}}^{(i)}\big)\Big|
\;\le\; \frac{1}{\mu}\,\big(\varepsilon_{\mathrm{orig}} + \varepsilon_{\mathrm{mask}}^{(i)}\big).
\end{equation}
\end{enumerate}

\paragraph{Per-token error decomposition.}
Subtracting \eqref{eq:heta_attr_approx} from \eqref{eq:heta_attr_true} and applying the triangle inequality yields
\begin{align}
\label{eq:token_error_expand}
\big|\mathrm{Attr}_i - \widetilde{\mathrm{Attr}}_i\big|
&\le
\underbrace{|M_T[i]-\widetilde M_T[i]|}_{\delta_M(i;W)}\;\big(\beta S_i^{(T)}+\gamma \mathcal I_i\big)
\;+\;
\widetilde M_T[i]\;\Big(\beta\,\big|S_i^{(T)}-\widetilde S_i^{(T)}\big| \;+\; \gamma\,\big|\mathcal I_i-\widetilde{\mathcal I}_i\big|\Big).
\end{align}
Each difference term is bounded using (A2)–(A4).

\begin{theorem}[Per-token HETA approximation error]
\label{thm:per_token_bound}
Under (A1)–(A4), for any token $i$,
\begin{equation}
\label{eq:per_token_bound}
\big|\mathrm{Attr}_i - \widetilde{\mathrm{Attr}}_i\big|
\;\le\;
\epsilon_M(W)\,\big(\beta S_i^{(T)}+\gamma \mathcal I_i\big)
\;+\;
\beta\,c_d\,\tau_k
\;+\;
\frac{\gamma}{\mu}\,\big(\varepsilon_{\mathrm{orig}} + \varepsilon_{\mathrm{mask}}^{(i)}\big).
\end{equation}
\end{theorem}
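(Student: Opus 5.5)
The plan is to start from the per-token decomposition \eqref{eq:token_error_expand}, which is nothing more than an add-and-subtract of $\widetilde M_T[i](\beta S_i^{(T)}+\gamma\mathcal I_i)$ followed by the triangle inequality. Concretely, I will write
\[
\mathrm{Attr}_i-\widetilde{\mathrm{Attr}}_i=(M_T[i]-\widetilde M_T[i])\big(\beta S_i^{(T)}+\gamma\mathcal I_i\big)+\widetilde M_T[i]\Big(\beta\big(S_i^{(T)}-\widetilde S_i^{(T)}\big)+\gamma\big(\mathcal I_i-\widetilde{\mathcal I}_i\big)\Big),
\]
and use $\beta,\gamma\ge 0$, $S_i^{(T)}\ge 0$ and $\mathcal I_i\ge 0$ (the latter because it is a KL divergence) so that the absolute values pass through the nonnegative weights. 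I will then bound the three resulting pieces separately.

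The first piece is handled by (A3): $\delta_M(i;W)\le\epsilon_M(W)$, and multiplying by the nonnegative factor $\beta S_i^{(T)}+\gamma\mathcal I_i$ gives the first term of \eqref{eq:per_token_bound}.

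For the second piece, I use $\widetilde M_T[i]\in[0,1]$ to drop the gate, since windowing only removes flow mass and the normalization keeps it in the simplex. By the reverse triangle inequality for the entrywise $\ell_1$ norm,
\[
\big|S_i^{(T)}-\widetilde S_i^{(T)}\big|=\big|\|\Pi_iH_T\|_1-\|\Pi_iH_k\|_1\big|\le\|\Pi_i(H_T-H_k)\|_1.
\]
Then \eqref{eq:l1_block_bound} bounds this by $c_d\|\Pi_i(H_T-H_k)\|_F\le c_d\|H_T-H_k\|_F=c_d\tau_k$, where the middle step uses that a row-block selection cannot increase the Frobenius norm. This gives the $\beta c_d\tau_k$ term.

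For the third piece, again with $\widetilde M_T[i]\le 1$, the KL Lipschitz estimate \eqref{eq:kl_lip} from (A4) gives $\frac{\gamma}{\mu}(\varepsilon_{\mathrm{orig}}+\varepsilon_{\mathrm{mask}}^{(i)})$ directly. Adding the three bounds yields \eqref{eq:per_token_bound}.

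The main obstacle is that the constants in (A2) and (A4) are taken as given rather than derived. The factor $c_d=\sqrt d$ is only valid as an $\ell_1$-to-Frobenius comparison if the $\ell_1$ norm is interpreted per row or the block has $d$ entries; honestly, a $d\times Td$ block would give $\sqrt{Td^2}$. Likewise, the Lipschitz constant $1/\mu$ for KL on the $\mu$-truncated simplex needs its own short argument, for instance via the gradient bounds $|\log p-\log q|$ and $p/q\le 1/\mu$. My approach is to treat both as stated assumptions, as the theorem does, and to note that only monotonicity of the bounds in their arguments is used.
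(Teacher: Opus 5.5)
Your proposal is correct and follows the paper's proof sketch step for step: the add-and-subtract decomposition in \eqref{eq:token_error_expand}, (A3) for the gate term, the reverse triangle inequality with (A2) for the curvature term, \eqref{eq:kl_lip} for the KL term, and $\widetilde M_T[i]\le 1$ to absorb the gate. Your caveats are apt, since the paper also takes $c_d=\sqrt d$ and the $1/\mu$ constant from (A2) and (A4) without deriving them; two small corrections to them: rows of the $d\times Td$ block have $Td$ entries, so $\sqrt d$ is the right constant for $d$-vectors such as columns, not rows; and a gradient-based derivation of $1/\mu$ would also need $P_{\mathrm{orig}}$ and $P_{\mathrm{masked}}^{(i)}$ to be bounded below by $\mu$, whereas (A4) assumes this only for the windowed distributions.
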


\begin{proof}[Proof sketch]
The first term follows from \eqref{eq:win_leak_M}. For the Hessian sensitivity, note that
$\big|S_i^{(T)}-\widetilde S_i^{(T)}\big|
\le \|\Pi_i(H_T-H_k)\|_1
\le c_d\,\|\Pi_i(H_T-H_k)\|_F \le c_d\,\tau_k$
by \eqref{eq:l1_block_bound}–\eqref{eq:tail_energy}. For the KL term, apply \eqref{eq:kl_lip}. Finally, $\widetilde M_T[i]\le 1$ absorbs the gate in the second summand of \eqref{eq:token_error_expand}.
\end{proof}

\paragraph{Aggregate error bounds.}
Let $\|\cdot\|_{1}$ denote the sum over tokens. Summing \eqref{eq:per_token_bound} over $i=1,\dots,T$ and using $\sum_i S_i^{(T)}\le \|H_T\|_{1}$ and $\sum_i \mathcal I_i \le C_{\mathcal I}$ (finite by bounded logits) gives
\begin{equation}
\label{eq:l1_aggregate}
\sum_{i=1}^{T}\big|\mathrm{Attr}_i - \widetilde{\mathrm{Attr}}_i\big|
\;\le\;
\epsilon_M(W)\,\big(\beta\,\|H_T\|_{1}+\gamma\,C_{\mathcal I}\big)
\;+\;
\beta\,T\,c_d\,\tau_k
\;+\;
\frac{\gamma}{\mu}\,\Big(T\,\varepsilon_{\mathrm{orig}} + \sum_{i=1}^{T}\varepsilon_{\mathrm{mask}}^{(i)}\Big).
\end{equation}
When the window covers most causal paths ($\epsilon_M(W)\!\to\!0$ as $W\uparrow$) and the Hessian spectrum is rapidly decaying ($\tau_k\!\to\!0$ as $k\uparrow$), the approximation error vanishes. If, additionally, truncation minimally perturbs next-token distributions (small $\varepsilon_{\mathrm{orig}}$ and $\varepsilon_{\mathrm{mask}}^{(i)}$), the KL component is stable by \eqref{eq:kl_lip}.

\paragraph{Discussion of constants.}
$c_d=\sqrt{d}$ is the block-size factor connecting entrywise $\ell_1$ to Frobenius norms; replacing $\ell_1$ by $\ell_2$ tightens $c_d$ to $1$. The factor $1/\mu$ in \eqref{eq:kl_lip} is standard for Lipschitz continuity of $D_{\mathrm{KL}}$ on a $\mu$-truncated simplex; in practice, logits are temperature-regularized or label-smoothed, yielding $\mu>0$. The window leakage $\epsilon_M(W)$ can be estimated empirically by measuring the semantic-flow mass that crosses window boundaries (e.g., via rollout on held-out inputs).

\paragraph{Takeaway.}
The total approximation error decomposes additively into a \emph{window term} (missing causal paths), a \emph{low-rank term} (Hessian tail energy), and a \emph{distributional term} (next-token shifts under truncation). Each term can be independently controlled by increasing window size $W$, rank $k$, or enforcing small distributional shifts (e.g., via overlap or sentinel-conditioning), respectively.

\subsection{Interpretation}
This optimization view shows HETA is the \emph{best causal-gated linear combination} of curvature and information features for reconstructing log-likelihood drops, rather than an ad hoc sum. Coupled with Theorem~\ref{thm:improve}, it explains both \emph{why} these views are needed (to control distinct error sources) and \emph{how} they are combined (by constrained least squares) to reduce faithfulness error.


\section{Experiments and Results}
\label{app3}

\subsection{Curated Attribution Dataset: NarrativeQA $\oplus$ SciQ and the DSA Metric}
\label{sec:curated_dsa}

We construct a focused evaluation set to assess whether attribution methods concentrate importance on truly predictive evidence in generative settings. Each instance is built by pairing one \textsc{NarrativeQA} item with one \textsc{SciQ} item to create a two-segment input paragraph. The first segment is drawn from a \textsc{NarrativeQA} summary (ensuring grammaticality, self-containment, and no external coreference), while the second is the supporting segment from the \textsc{SciQ} item that \emph{lexically contains} the correct answer. The corresponding question is taken verbatim from the paired \textsc{SciQ} instance. This structure ensures that the second segment is the only answer-bearing span, while the first is plausible but irrelevant to the question—yielding a controlled contrast between diagnostic and distractor context.

The final input to the model is constructed as
\[
\texttt{[NarrativeQA]}~\texttt{<s>}~\texttt{[SciQ]}~\texttt{<s>}~\texttt{[Question]},
\]
and attribution is computed with respect to the first answer token $x_T$ in the autoregressive factorization $\log P_\theta(x_T \mid x_{<T})$, where $x_{1:T-1}$ are the input tokens. This framing conditions attribution on the entire context and question, avoids leakage from later tokens, and focuses evaluation on the onset of the model’s answer.

\paragraph{Annotation Protocol.}
Answer-token supervision is automatically derived from two independent models—GPT-4o and GPT-5 (Thinking)—each operating under the prompt: “Mark all tokens in the second segment necessary to correctly answer the question.” For each instance, the models identify the minimal subword-level span that includes the answer and any essential lexical supports (e.g., units, definitional context). Final supervision uses the \textbf{intersection} of these two annotations to ensure high precision and reduce over-selection noise. All tokens are aligned to the evaluated model’s tokenizer, and when answer spans split across subwords, all relevant indices are retained. If the answer appears multiple times in the second segment, lexical cues from the question are used to disambiguate. Inter-annotator agreement is strong: \textbf{token-level F$_1$ = 0.91} and \textbf{Cohen’s $\kappa$ = 0.89}, averaged over the 2{,}000 curated examples.

\paragraph{DSA Metric.}
To evaluate attribution accuracy under this setting, we introduce the \textbf{Dependent Sentence Attribution (DSA)} metric, which quantifies how well the method concentrates attribution mass on the annotated evidence in the second segment while suppressing spurious mass on the first. Let $S$ be the set of supervised subword indices in the second segment. Let $ss_i$ and $fs_i$ denote the normalized attribution assigned to token $i$ when it lies in the second or first segment, respectively. Attributions are normalized per instance such that the total mass over both segments sums to one. The DSA score is
\[
\mathrm{DSA} \;=\; \sum_{i \in S} ss_i \;-\; \sum_{j \in \text{FirstSent}} fs_j.
\]
Higher DSA indicates that the model assigns more weight to relevant evidence (as defined by $S$) and less to distractors, aligning with the intended causal structure of the input.

\paragraph{Dataset Statistics.}
Using this procedure, we construct a balanced dataset of \textbf{2{,}000 curated paragraphs}, each with a corresponding question and a high-precision attribution supervision set. The distribution of answer spans covers a range of entity types (scientific terms, quantities, mechanisms) and span lengths (mean 2.3 tokens). All supervision, tokenization, and scoring are conducted using the same subword scheme as the evaluated model, ensuring compatibility across methods.

This curated dataset and the DSA metric together provide a controlled, interpretable, and target-conditioned framework for evaluating token-level attribution precision in generative LMs.

\subsection{Examples}
We present qualitative examples of outputs generated by \textbf{HETA}, highlighting context words with attribution scores $\geq 0.5$ for the predicted target token (figure \ref{fig:example1},\ref{fig:example2} and \ref{fig:example3}. These visualizations illustrate how HETA effectively identifies semantically and causally relevant tokens (e.g., ``pizza,'' ``cut,'' ``knife'' for predicting ``slice''; ``shared,'' ``pictures,'' ``zoo'' for predicting ``friends''), while down-weighting less informative words. The target words are shown without bounding boxes for clarity, emphasizing their contextual dependencies.

\subsection{Attribution Faithfulness on Qwen2.5 3B}
\label{sec:qwen25_faithfulness}

Table~\ref{tab:faithfulness_qwen25_only} reports Soft-NC/Soft-NS on \textbf{LongRA}, \textbf{TellMeWhy}, and \textbf{WikiBio} for the Qwen2.5 3B decoder-only model. \textbf{HETA} attains the highest scores on \emph{every} dataset and metric, e.g., \textbf{10.1}/\textbf{2.50} (LongRA), \textbf{9.0}/\textbf{2.10} (TellMeWhy), and \textbf{3.90}/\textbf{2.20} (WikiBio), substantially outperforming strong baselines such as ReAgent, SEA-CoT, and PML. Methods based on gradients or attention alone (Integrated Gradients, Attention Rollout) trail by wide margins and often yield low or even negative Soft-NS, indicating poor stability under input perturbations. Contrastive/corpus-aided approaches (ContextCite, PML, TDD-backward) improve over raw gradients but remain well below HETA, suggesting that correlation- or ablation-driven surrogates do not fully capture target-conditioned causal influence. The consistency of HETA’s gains across three distinct datasets underscores its robustness to domain shift and its ability to localize semantically causal tokens rather than correlational artifacts.

\begin{table*}[t]
\centering
\begin{tabular}{lcccccc}
\toprule
\textbf{Attribution Method} &
\multicolumn{2}{c}{\textbf{LongRA}} &
\multicolumn{2}{c}{\textbf{TellMeWhy}} &
\multicolumn{2}{c}{\textbf{WikiBio}} \\
\cmidrule(lr){2-3} \cmidrule(lr){4-5} \cmidrule(lr){6-7}
& Soft-NC$\!\uparrow$ & Soft-NS$\!\uparrow$
& Soft-NC$\!\uparrow$ & Soft-NS$\!\uparrow$
& Soft-NC$\!\uparrow$ & Soft-NS$\!\uparrow$ \\
\midrule
\textbf{Qwen2.5 3B} & & & & & & \\
Integrated Gradients      & 1.20 & 0.14  & 1.30 & 0.09  & 1.10 & 0.18 \\
ContextCite               & 1.90 & 0.56  & 1.65 & 0.47  & 1.45 & 0.79 \\
Peering (PML)             & 2.05 & 0.61  & 1.80 & 0.50  & 1.60 & 0.85 \\
TDD-backward              & 1.05 & -0.08 & 1.82 & 0.00  & 0.12 & 0.49 \\
Attention Rollout         & 0.38 & -0.03 & 0.22 & -0.07 & 1.85 & 0.43 \\
fAML                      & 0.23 & -0.09 & 0.08 & -0.10 & 0.20 & -0.04 \\
Progressive Inference     & 1.30 & 0.25  & 1.18 & 0.26  & 1.00 & 0.21 \\
SEA-CoT                   & 1.50 & 0.31  & 1.32 & 0.33  & 1.15 & 0.34 \\
ReAgent                   & 1.66 & 0.38  & 1.47 & 0.39  & 1.25 & 0.40 \\
\textbf{HETA (Ours)}      & \textbf{10.1} & \textbf{2.50} & \textbf{9.0} & \textbf{2.10} & \textbf{3.90} & \textbf{2.20} \\
\bottomrule
\end{tabular}
\caption{\textbf{Attribution faithfulness on Qwen2.5 3B} across LongRA, TellMeWhy, and WikiBio using Soft-NC and Soft-NS. Higher scores indicate stronger robustness and alignment. Mean of 3 runs; std $< \pm 0.06$.}
\label{tab:faithfulness_qwen25_only}
\end{table*}


\section{Ablation Studies}
\label{app4}

\subsection{Ablation Study of \textbf{HETA}: Component-wise Contribution}
\label{sec:ablation-heta}

To quantify the contribution of each module in \textbf{Hessian-Enhanced Token Attribution (HETA)}, we perform a controlled ablation on the \textbf{GPT-J 6B} backbone using three public benchmarks—\textbf{LongRA}, \textbf{TellMeWhy}, and \textbf{WikiBio}—together with the curated attribution dataset introduced in Section~\ref{sec:exp_1}. We evaluate six configurations:  
(1) \emph{HETA (Full)} = Transition $+$ Hessian $+$ KL,  
(2) \emph{Transition Only},  
(3) \emph{Hessian Only},  
(4) \emph{KL Only},  
(5) \emph{No Transition Gating} (Hessian $+$ KL without the learned semantic transition vector $M_T$), and  
(6) \emph{Uniform Transition} (equal token weights instead of $M_T$).  
All results are averaged over 1{,}000 randomly sampled instances per dataset. We use the same evaluation metrics as in the main experiments: \textbf{Soft-NC} and \textbf{Soft-NS} for attribution sensitivity on the benchmarks, and \textbf{Dependent Sentence Attribution (DSA)} for human-aligned token importance on the curated set. Aggregation hyperparameters are fixed at $\beta = 0.5$ and $\gamma = 0.5$, and KL divergence is computed via masked-token perturbation.

\begin{table}[h]
\centering
\caption{Ablation study of HETA components. Reported values are averaged across all datasets for GPT-J 6B. Mean over independent 3 runs and std $< \pm 0.2$}
\label{tab:ablation_heta}
\begin{tabular}{@{}lccc@{}}
\toprule
\textbf{Configuration} & \textbf{Soft-NC} & \textbf{Soft-NS} & \textbf{DSA} \\
\midrule
HETA (Full)           & \textbf{9.78} & \textbf{2.31} & \textbf{4.70} \\
Transition Only       & 3.12 & 1.52 & 2.21 \\
Hessian Only          & 2.89 & 1.45 & 2.97 \\
KL Only               & 2.23 & 1.21 & 2.74 \\
No Transition Gating  & 4.31 & 1.84 & 1.68 \\
Uniform Transition    & 3.89 & 1.76 & 1.54 \\
\bottomrule
\end{tabular}
\end{table}

\paragraph{Configuration-by-configuration analysis.}
\textbf{HETA (Full)} integrates directional semantic flow (Transition), curvature-aware sensitivity (Hessian), and token-level information gain (KL), yielding the strongest overall performance (Soft-NC = \textbf{9.78}, Soft-NS = \textbf{2.31}, DSA = \textbf{4.70}). 
\textbf{Transition Only} retains semantic routing but omits curvature and information terms; frequent yet low-impact tokens are overweighted, depressing all metrics (3.12 / 1.52 / 2.21). 
\textbf{Hessian Only} measures second-order curvature without semantic guidance or informativeness; high-curvature but semantically peripheral tokens are amplified, producing noisy, less aligned attributions (2.89 / 1.45 / 2.97). 
\textbf{KL Only} focuses on surprisal, but rarity is not causality: without Transition or Hessian cues, rare yet inconsequential tokens dominate, hurting causal fidelity and human alignment (2.23 / 1.21 / 2.74). 
\textbf{No Transition Gating} (Hessian + KL without the learned gate) aggregates curvature and information indiscriminately, allowing spurious semantic paths and reducing robustness/alignment (4.31 / 1.84 / 1.68). 
\textbf{Uniform Transition} flattens transition weights, blurring pivotal versus ancillary tokens and further degrading robustness and F1 (3.89 / 1.76 / 1.54). 
Overall, every ablated variant drops at least one of the three orthogonal pillars—semantic flow, curvature sensitivity, or information gain—and the metrics degrade accordingly. The full HETA stack excels precisely because it balances all three, delivering robust, semantically grounded, and causally faithful attributions.

\subsection{Weighting Analysis of $\beta$ and $\gamma$ in Final Attribution}

To study how the component weights affect HETA’s behavior and reliability, we sweep the coefficients in the final attribution rule
\[
\mathrm{Attr}(x_i \!\to\! x_T) \;=\; M_T[i] \,\big(\,\beta\, S_i^{(T)} \;+\; \gamma\, \mathcal{I}(x_i \!\to\! x_T)\big),
\]
where $S_i^{(T)}$ is the Hessian-based sensitivity term and $\mathcal{I}(x_i \!\to\! x_T)$ is the KL-based information contribution. The gate $M_T[i]$ enforces target-conditioned causality. We evaluate a grid of $(\beta,\gamma)$ values with the normalization $\beta+\gamma=1$, specifically $\beta \in \{0.0, 0.2, 0.5, 0.8, 1.0\}$ (and $\gamma=1-\beta$). Metrics reported are \textbf{faithfulness} (lower is better), \textbf{sensitivity} (lower is better), \textbf{syntactic robustness} (Spearman $\rho$, higher is better), and \textbf{F1 alignment} (higher is better).

\begin{table*}[t]
\centering
\begin{tabular}{cccccc}
\toprule
$\beta$ (Hessian) & $\gamma$ (KL) & \textbf{Faithfulness} $\downarrow$ & \textbf{Sensitivity} $\downarrow$ & \textbf{Robustness} $\uparrow$ & \textbf{F1 (Alignment)} $\uparrow$ \\
\midrule
0.0 & 1.0 & 0.179 & \textbf{0.022} & 0.70 & 0.82 \\
0.2 & 0.8 & 0.143 & 0.027 & 0.78 & 0.84 \\
0.5 & 0.5 & \textbf{0.108} & 0.025 & \textbf{0.91} & \textbf{0.89} \\
0.8 & 0.2 & 0.124 & 0.041 & 0.86 & 0.81 \\
1.0 & 0.0 & 0.254 & 0.087 & 0.54 & 0.68 \\
\bottomrule
\end{tabular}
\caption{Performance of HETA for different $(\beta,\gamma)$ under the constraint $\beta+\gamma=1$. Lower is better for faithfulness/sensitivity; higher is better for robustness/F1. Mean over 3 runs; std $< \pm 0.05$.}
\label{tab:beta-gamma}
\end{table*}

As shown in Table~\ref{tab:beta-gamma}, the balanced setting $\beta=\gamma=0.5$ yields the best trade-off: lowest faithfulness loss (0.108), high syntactic robustness ($\rho=0.91$), and top F1 alignment (0.89). The KL-only endpoint ($\beta=0,\gamma=1$) achieves the lowest raw sensitivity but sacrifices faithfulness and robustness, while the Hessian-only endpoint ($\beta=1,\gamma=0$) is unstable (high sensitivity) and less semantically aligned. These results confirm that curvature and information contributions are complementary; weighting both terms comparably produces the most faithful, robust, and well-aligned attributions.

\subsection{Robustness of HETA}

To further demonstrate the robustness of our methodology, we performed a stress test and reported three attribution metrics,\emph{Sensitivity}, \emph{Active/Passive Robustness}, and \emph{F1 (Alignment)}, evaluated across the six configurations described above.

\textbf{Sensitivity} measures stability under small input perturbations. Given Gaussian noise $\epsilon \sim \mathcal{N}(0,\delta^2 I)$ added to each token embedding $X_i$, we compute attribution scores across multiple perturbations and take the average standard deviation:
\begin{equation}
\text{Sensitivity} = \frac{1}{T}\sum_{i=1}^{T} \sigma_i,
\end{equation}
where $\sigma_i$ is the standard deviation of the attribution score for token $i$. $T$ is the sequence length over which you average the per-token standard deviations.

\textbf{Active/Passive Robustness} captures syntactic invariance. For an original sentence and its active/passive rephrasing, we align corresponding tokens and compute the Spearman rank correlation between their attribution rankings:
\begin{equation}
\text{Robustness} = \rho\!\left(\text{Attr}(x_i \!\to\! x_T),\, \text{Attr}(x'_i \!\to\! x'_T)\right).
\end{equation}

\textbf{F1 (Alignment)} evaluates semantic agreement with human annotations. Let $\mathcal{A}_{\text{model}}$ be the set of top-attributed tokens and $\mathcal{A}_{\text{human}}$ the annotated set:
\begin{equation}
\text{F1} = \frac{2\,|\mathcal{A}_{\text{model}} \cap \mathcal{A}_{\text{human}}|}{|\mathcal{A}_{\text{model}}| + |\mathcal{A}_{\text{human}}|}.
\end{equation}

Table~\ref{tab:ablation_1} shows that the full HETA consistently yields the lowest sensitivity and the highest robustness and F1, indicating stable, syntax-invariant, and human-aligned attributions. Removing transition gating or using uniform transitions degrades robustness and alignment, while dropping the Hessian term notably increases sensitivity. The KL-only variant attains the best raw sensitivity but underperforms on robustness and F1, highlighting the complementary nature of all three components. Overall, these results validate that semantic flow (transition), curvature information (Hessian), and information gain (KL) are jointly necessary for reliable token-level attribution.

\begin{table*}[t]
\centering
\begin{tabular}{lccc}
\toprule
\textbf{Variant} & \textbf{Sensitivity} $\downarrow$ & \textbf{Act./Pass. Robustness} $\uparrow$ & \textbf{F1 (Alignment)} $\uparrow$ \\
\midrule
\textbf{Full HETA}            & \textbf{0.025} & \textbf{0.91} & \textbf{0.89} \\
Transition Only              & 0.031          & 0.72          & 0.76          \\
Hessian Only                 & 0.087          & 0.54          & 0.68          \\
KL Only                      & \textbf{0.022} & 0.70          & 0.82          \\
No Transition Gating         & 0.049          & 0.68          & 0.79          \\
Uniform Transition           & 0.038          & 0.61          & 0.74          \\
\bottomrule
\end{tabular}
\caption{Ablation of the HETA framework. Lower sensitivity and higher robustness/F1 indicate better attribution quality. Mean over 3 runs and std $< \pm 0.04$}
\label{tab:ablation_1}
\end{table*}


 \paragraph{ We also address key limitations identified in our method through targeted ablation studies. Specifically, we examine (i) the computational feasibility of HETA with various Hessian approximations, (ii) scalability to long input contexts, (iii) the theoretical contributions of each multi-view component, and (iv) performance relative to stronger recent attribution baselines. These experiments validate our design choices and provide a roadmap for practical deployment of HETA in large-scale language modeling settings.}

\subsection{Computational Feasibility: Approximating the Hessian}
One primary concern with HETA is its computational overhead: computing full Hessian blocks across all layers introduces a runtime penalty of approximately 1.4$\times$ compared to gradient-based or purely perturbation-based attribution methods. To quantify this trade-off, we evaluate several efficiency-oriented variants of HETA on 1{,}000 examples (sequence length = 512) using GPT-6B.

\begin{table*}[p] 
\centering
\caption{Runtime and attribution quality for Hessian approximations on 1{,}000 examples (sequence length = 512). AOPC and F1 represent attribution faithfulness and human alignment, respectively. }
\begin{tabular*}{\textwidth}{@{\extracolsep{\fill}}lccc}
\toprule
\textbf{Variant} & \textbf{Runtime (s)} & \textbf{AOPC} $\uparrow$ & \textbf{F1} $\uparrow$ \\
\midrule
HETA (Full) & 455 & \textbf{0.61} & \textbf{0.89} \\
HETA-LR (rank=64) & 330 & 0.59 & 0.86 \\
HETA-LS (6 layers) & 305 & 0.57 & 0.84 \\
HETA-GS (grad-squared only) & 240 & 0.52 & 0.81 \\
\bottomrule
\end{tabular*}
\label{tab:hessian_ablation}
\end{table*}

As shown in Table~\ref{tab:hessian_ablation}, low-rank Hessian approximation (HETA-LR) reduces runtime by nearly 27\% while maintaining most of the attribution quality, with only a slight drop in AOPC (0.61 $\to$ 0.59). Layer sampling (HETA-LS), which computes second-order information only for a subset of layers, achieves an even greater runtime reduction (33\%) with moderate degradation in faithfulness. In contrast, replacing Hessian information with gradient-squared sensitivity (HETA-GS) achieves the fastest runtime (240s) but sacrifices a considerable amount of attribution quality, confirming that full second-order curvature information contributes substantially to both faithfulness and human alignment. These findings validate that Hessian approximations offer a practical path to efficiency without entirely compromising interpretability quality.

\subsection{Scalability: Long-Context Attribution}
Another weakness of HETA is its limited scalability to long sequences, which are typical in large decoder-only language models. To address this, we evaluate several scalability-oriented adaptations: windowed attribution (splitting long sequences into overlapping chunks) and combinations of windowing with low-rank and layer-sampled Hessian approximations.

\begin{table}[h!]
\centering
\caption{Faithfulness and runtime for long-context attribution (sequence length = 2{,}048). Windowed methods use 512-token chunks with 50\% overlap. Mean over 3 runs and std $< \pm 0.04$}
\begin{tabular}{lcc}
\toprule
\textbf{Variant} & \textbf{AOPC} $\uparrow$ & \textbf{Runtime (s)} \\
\midrule
HETA (Full) & \textbf{0.58} & 1{,}230 \\
HETA-WIN (512-window) & 0.54 & 690 \\
HETA-LR+WIN & 0.55 & 580 \\
HETA-LS+WIN & 0.52 & 525 \\
\bottomrule
\end{tabular}
\label{tab:long_context}
\end{table}

Table~\ref{tab:long_context} shows that full HETA attribution becomes computationally expensive for 2{,}048-token inputs (1{,}230s per 1{,}000 examples). Windowed attribution (HETA-WIN) cuts runtime nearly in half (690s) with a modest reduction in AOPC (0.58 $\to$ 0.54). Combining windowing with low-rank Hessians (HETA-LR+WIN) yields an additional efficiency gain (runtime 580s) while recovering some lost attribution quality. Layer-sampled windowing (HETA-LS+WIN) is the fastest configuration but comes at the highest cost in faithfulness. These results suggest that hybrid approximations (low-rank + windowing) strike the best balance between efficiency and interpretability, making HETA viable for very long contexts.

\subsection{Robustness to Decoding Hyperparameters}
\label{sec:hyper_stability}

We evaluate the sensitivity of attribution quality to common decoding hyperparameters. For each method and model, we sweep a fixed grid—temperature $\{0.2,0.5,0.9\}$, top-$p$ $\{0.8,0.9,0.95\}$, top-$k$ $\{20,50,100\}$, and repetition penalty $\{1.0,1.2\}$—across three random seeds. For each metric, we report the \emph{maximum relative change} $\Delta\%$ across the grid (lower is better). Metrics are \textbf{Soft-NC}, \textbf{Soft-NS}, and \textbf{DSA}. Models are \textbf{GPT-J 6B}, \textbf{Llama-3.1 70B}, \textbf{Phi-3 14B}, and \textbf{Qwen2.5 3B}.

\begin{table}[t]
\centering
\scriptsize
\setlength{\tabcolsep}{3pt}
\caption{\textbf{Sensitivity of attribution metrics to decoding hyperparameters} (max relative change $\Delta\%$; lower is better). HETA varies $<\!1\%$ across all models/metrics; each baseline fluctuates $>\!2\%$. Grid: temperature $\in\{0.2,0.5,0.9\}$, top-$p\in\{0.8,0.9,0.95\}$, top-$k\in\{20,50,100\}$, repetition penalty $\in\{1.0,1.2\}$; 3 seeds.}
\label{tab:hyper_stability_full}
\begin{tabularx}{\linewidth}{l l r r r r r r r r r r}
\toprule
\textbf{Model} & \textbf{Metric} & \textbf{HETA} & \textbf{ContextCite} & \textbf{IG} & \textbf{PML} & \textbf{TDD-bw} & \textbf{AttnRoll} & \textbf{fAML} & \textbf{ProgInf} & \textbf{SEA-CoT} & \textbf{ReAgent} \\
\midrule
\multirow{3}{*}{GPT-J 6B} 
  & Soft-NC & \textbf{0.8} & 3.0 & 3.2 & 2.6 & 3.8 & 2.5 & 2.9 & 2.7 & 2.8 & 2.4 \\
  & Soft-NS & \textbf{0.9} & 3.6 & 3.4 & 3.0 & 4.2 & 2.9 & 3.1 & 3.0 & 3.2 & 2.7 \\
  & DSA     & \textbf{0.7} & 2.7 & 2.5 & 2.3 & 3.1 & 2.2 & 2.4 & 2.6 & 2.5 & 2.1 \\
\midrule
\multirow{3}{*}{Llama-3.1 70B} 
  & Soft-NC & \textbf{0.6} & 2.6 & 2.7 & 2.4 & 3.1 & 2.3 & 2.5 & 2.4 & 2.5 & 2.3 \\
  & Soft-NS & \textbf{0.8} & 3.1 & 3.0 & 2.8 & 3.4 & 2.7 & 2.9 & 2.8 & 3.0 & 2.6 \\
  & DSA     & \textbf{0.6} & 2.4 & 2.3 & 2.2 & 2.8 & 2.1 & 2.3 & 2.3 & 2.4 & 2.2 \\
\midrule
\multirow{3}{*}{Phi-3 14B} 
  & Soft-NC & \textbf{0.9} & 3.7 & 3.5 & 3.2 & 3.9 & 3.0 & 3.3 & 3.1 & 3.2 & 2.9 \\
  & Soft-NS & \textbf{0.9} & 4.3 & 4.0 & 3.7 & 4.6 & 3.6 & 3.9 & 3.7 & 3.9 & 3.3 \\
  & DSA     & \textbf{0.8} & 3.1 & 2.9 & 2.7 & 3.3 & 2.6 & 2.8 & 2.7 & 2.8 & 2.6 \\
\midrule
\multirow{3}{*}{Qwen2.5 3B} 
  & Soft-NC & \textbf{0.9} & 4.2 & 4.0 & 3.6 & 4.5 & 3.4 & 3.7 & 3.5 & 3.6 & 3.1 \\
  & Soft-NS & \textbf{0.9} & 4.9 & 4.6 & 4.1 & 5.2 & 4.0 & 4.3 & 4.1 & 4.2 & 3.7 \\
  & DSA     & \textbf{0.8} & 3.6 & 3.3 & 3.0 & 3.8 & 2.9 & 3.1 & 3.0 & 3.1 & 2.8 \\
\bottomrule
\end{tabularx}
\end{table}

\paragraph{Results and interpretation.}
Table~\ref{tab:hyper_stability_full} shows that \emph{HETA’s} attribution metrics remain effectively invariant to decoding settings across all four models, with worst-case $\Delta\%<1$ for Soft-NC, Soft-NS, and DSA. In contrast, all baselines—ContextCite, Integrated Gradients (IG), Peering into the Mind of LMs (PML), TDD-backward, Attention Rollout, fAML, Progressive Inference, SEA-CoT, and ReAgent—exhibit substantially larger variability, typically $2$–$5\%$. HETA’s stability arises from three design elements: a target-conditioned causal gate that confines credit to paths terminating at the current prediction, a curvature-aware sensitivity term that smooths local logit perturbations, and an information-theoretic component that scores distributional shifts rather than single sampled outcomes. These jointly decouple attribution from stochastic decoding heuristics (temperature, top-$p$, top-$k$), whereas ablation-, gradient-, and similarity-based baselines depend more directly on sampled logits or linear approximations and thus vary markedly with hyperparameter changes.

\subsection{Compute–Quality Ablation: Low-Rank and Windowed HETA}
\label{sec:comp_quality_ablation}

We empirically calibrate the efficiency–accuracy trade-off of HETA under long contexts and scalable curvature approximations. Experiments use \textbf{GPT-J 6B} as the backbone and are averaged over \textbf{LongRA}, \textbf{TellMeWhy}, and \textbf{WikiBio} with sequence length 2{,}048 (batch size 8). We compare Full HETA to low-rank curvature (\textbf{HETA-LR}, rank\,=\,64), windowed attribution (\textbf{HETA-WIN}, window $W{=}512$ with 50\% overlap), top-layer curvature (\textbf{HETA-LS}, last 6 layers), and a hybrid of low-rank + windowing (\textbf{HETA-LR+WIN}). We report faithfulness (AOPC), alignment (DSA), runtime per 1{,}000 examples, peak activation memory (relative to Full), and an empirical window-leakage proxy $\widehat{\varepsilon}_M(W)$ (attribution mass discrepancy near window boundaries; lower is better).

\begin{table}[t]
\centering
\small
\caption{\textbf{Compute–quality ablation for HETA (long contexts).} Backbone: \textbf{GPT-J 6B}. Results are averaged over \textbf{LongRA}, \textbf{TellMeWhy}, and \textbf{WikiBio} with sequence length 2{,}048 (batch size 8). “Runtime” is seconds per 1{,}000 examples; “Memory” is peak activation memory relative to Full; $\widehat{\varepsilon}_M(W)$ is an empirical window-leakage proxy (lower is better). Mean of 3 runs; std $<\!\pm 0.02$ for AOPC/DSA and $<\!\pm 20$\,s for runtime. The same ranking and gaps are observed on \textbf{Llama-3.1 70B} when restricting curvature to the last 6 layers (omitted for brevity).}
\label{tab:heta_complexity_ablation}
\setlength{\tabcolsep}{6pt}
\begin{tabular}{lcccc}
\toprule
\textbf{Variant} & \textbf{AOPC} $\uparrow$ & \textbf{DSA} $\uparrow$ & \textbf{Runtime} $\downarrow$ & \textbf{Memory} $\downarrow$ \quad $\widehat{\varepsilon}_M(W)\,\downarrow$ \\
\midrule
HETA (Full)                            & \textbf{0.60} & \textbf{4.68} & 1{,}280 & 1.00$\times$ \quad 0.000 \\
HETA-LR (rank=64)                      & 0.58          & 4.55          & 850     & 0.78$\times$ \quad 0.006 \\
HETA-WIN ($W{=}512$, 50\% overlap)     & 0.56          & 4.40          & 690     & 0.72$\times$ \quad 0.021 \\
HETA-LS (top 6 layers)                 & 0.55          & 4.30          & 760     & 0.75$\times$ \quad 0.012 \\
\textbf{HETA-LR+WIN} (r=64, $W{=}512$) & \underline{0.59} & \underline{4.58} & \textbf{610} & \textbf{0.66}$\times$ \quad \underline{0.010} \\
\bottomrule
\end{tabular}
\end{table}

\paragraph{Discussion.}
Table~\ref{tab:heta_complexity_ablation} shows that \emph{Full} HETA provides the best faithfulness (AOPC) and alignment (DSA) at the highest compute cost. \emph{Low-rank} curvature (rank\,64) preserves most quality (AOPC 0.58; DSA 4.55) while cutting runtime by $\sim$34\% and memory by $\sim$22\%. \emph{Windowing} ($W{=}512$) yields the largest single speedup but introduces modest boundary leakage (higher $\widehat{\varepsilon}_M(W)$). \emph{Layer sampling} (top 6 layers) further trims cost but drops more second-order signal than LR. The \textbf{LR+WIN} hybrid nearly matches Full on quality (AOPC 0.59; DSA 4.58) while achieving the fastest runtime and lowest memory among approximations and substantially reducing boundary effects versus WIN-only. Replicating the sweep on \textbf{Llama-3.1 70B} with curvature restricted to the last 6 layers yields the same ordering, indicating that \emph{capturing dominant curvature directions and limiting cross-window spillover} preserves attribution fidelity across models at a fraction of the cost.

\subsection{Causal Validity: Target-Conditioned Gate vs.\ Attention-Only Rollout}
\label{sec:causal_gate_validation}

We test whether HETA’s \emph{target-conditioned causal gate} (attention–value rollout restricted to paths terminating at the target position) reflects \emph{interventional} influence, rather than correlation. Following standard activation-patching protocols, we perform a targeted intervention sanity check: for each instance, we select the top-$k$ context tokens ranked by an attribution method and \emph{patch} their residual-stream contributions along incoming edges to the target token (replace with a clean/reference run), then measure the drop in $\log P_\theta(x_T\mid x_{<T})$ of the correct target. We report three metrics, averaged over 2{,}000 examples from \textbf{LongRA}, \textbf{TellMeWhy}, and \textbf{WikiBio} (sequence length 512; batch size 8): (i) \textbf{Patch-Corr} ($\rho$): Spearman correlation between token ranks and interventional $\Delta\!\log P$ (higher is better), (ii) \textbf{Intervention@5}: mean $\Delta\!\log P$ when patching the top-$5$ tokens (higher is better), (iii) \textbf{Misattribution Rate}: fraction of top-10 tokens whose $\Delta\!\log P$ falls below a fixed threshold $\tau{=}0.01$ (lower is better). We compare \emph{HETA (Full)}, \emph{HETA\,--\,No-Gate} (removes $M_T$), \emph{Attention Rollout} (Abnar \& Zuidema–style rollout without value/projection weighting), \emph{Token Gradients} (first-order), and \emph{Causal Tracing} (score baseline from intervention-only ranking).

\begin{table}[t]
\centering
\small
\caption{\textbf{Intervention-based causal sanity check.} Mean over 2{,}000 examples; $\pm$ denotes std across examples. Top-$k{=}5$. HETA’s target-conditioned gate yields the strongest alignment with interventional effects (highest Patch-Corr and Intervention@5, lowest Misattribution).}
\label{tab:causal_gate_intervention}
\setlength{\tabcolsep}{6pt}
\begin{tabular}{lccc}
\toprule
\multicolumn{4}{c}{\textbf{GPT-J 6B}}\\
\midrule
\textbf{Method} & \textbf{Patch-Corr} $\uparrow$ & \textbf{Intervention@5} $\uparrow$ & \textbf{Misattr.} $\downarrow$ \\
\midrule
HETA (Full)                 & \textbf{0.78} $\pm$ 0.03 & \textbf{0.143} $\pm$ 0.012 & \textbf{0.12} $\pm$ 0.02 \\
HETA\,--\,No-Gate           & 0.53 $\pm$ 0.04          & 0.089 $\pm$ 0.010          & 0.27 $\pm$ 0.03 \\
Attention Rollout           & 0.49 $\pm$ 0.04          & 0.081 $\pm$ 0.011          & 0.30 $\pm$ 0.03 \\
Token Gradients             & 0.41 $\pm$ 0.05          & 0.067 $\pm$ 0.009          & 0.35 $\pm$ 0.04 \\
Causal Tracing (baseline)   & 0.62 $\pm$ 0.03          & 0.105 $\pm$ 0.011          & 0.21 $\pm$ 0.03 \\
\midrule
\multicolumn{4}{c}{\textbf{Llama-3.1 70B} (curvature on last 6 layers)}\\
\midrule
HETA (Full)                 & \textbf{0.75} $\pm$ 0.03 & \textbf{0.139} $\pm$ 0.010 & \textbf{0.14} $\pm$ 0.02 \\
HETA\,--\,No-Gate           & 0.51 $\pm$ 0.04          & 0.086 $\pm$ 0.010          & 0.28 $\pm$ 0.03 \\
Attention Rollout           & 0.47 $\pm$ 0.04          & 0.079 $\pm$ 0.010          & 0.31 $\pm$ 0.03 \\
Token Gradients             & 0.40 $\pm$ 0.05          & 0.064 $\pm$ 0.009          & 0.36 $\pm$ 0.04 \\
Causal Tracing (baseline)   & 0.59 $\pm$ 0.03          & 0.101 $\pm$ 0.010          & 0.23 $\pm$ 0.03 \\
\bottomrule
\end{tabular}
\end{table}

\paragraph{Discussion.}
Table~\ref{tab:causal_gate_intervention} shows that \emph{HETA (Full)} exhibits the strongest agreement with \emph{interventional} causal effects: it achieves the highest Patch-Corr and Intervention@5 and the lowest Misattribution on both backbones. Removing the gate (\emph{HETA\,--\,No-Gate}) or replacing it with \emph{Attention Rollout} degrades all three metrics, confirming that \emph{attention alone} is an imperfect influence proxy even when paths respect the causal mask. \emph{Token Gradients} underperform due to flat regions and linearization error, while the \emph{Causal Tracing} baseline—though interventional—yields weaker ranking agreement than HETA, since it lacks curvature and KL components and is not target-conditioned by a semantic gate. Together, these results substantiate that HETA’s target-conditioned attention–value gate captures \emph{causal pathways} that matter for the next-token distribution, and that its attributions are validated by direct interventions rather than correlational proxies.


\subsection{Ablation: Causal Evaluations and Downstream Robustness}
\label{sec:ablation_causal_downstream}

We extend our analysis with targeted ablations that probe (i) \emph{causal faithfulness} via mid-layer interventions, (ii) \emph{counterfactual robustness} under controlled edits, and (iii) \emph{downstream utility} on tasks that benefit from faithful local attributions. Experiments are run on \textbf{LLaMA-3.1~70B}, \textbf{Phi-3~14B}, \textbf{GPT-J~6B}, and \textbf{Qwen2.5~3B}. Unless otherwise noted, we use the default decoding (greedy) and report means over 3 seeds.

\paragraph{Variants.}
We compare \textbf{HETA (Full)} to the following ablations: \textbf{w/o Hessian} (remove curvature term), \textbf{w/o KL} (remove information term), \textbf{w/o Transition} (remove semantic gate), and \textbf{LR+WIN} (low-rank Hessian with rank 64 and 512-token windows, 50\% overlap). For reference, we include strong baselines: \textbf{ContextCite}, \textbf{Integrated Gradients (IG)}, \textbf{Peering/PML}, \textbf{TDD-backward}, \textbf{Attention Rollout}, \textbf{fAML}, \textbf{Progressive Inference}, \textbf{SEA-CoT}, and \textbf{ReAGent}.

\paragraph{Causal evaluations.}
(1) \emph{Mid-layer value swapping.} For a target token $T$, we patch value vectors at layer $\ell$ from an evidence-supporting context into a matched distractor context; a faithful method should rank the patched evidence tokens higher \emph{post} intervention. We report \textbf{Causal Pass@k} (fraction of instances where at least one top-$k$ token is from the gold evidence span after patching; $k{=}5$).
(2) \emph{Counterfactual AOPC.} We replace the gold answer span with a semantically compatible foil (e.g., unit/role swap) and compute area-over-perturbation-curve using the method’s ranking as the deletion order; higher is better if the method concentrates mass on truly causal tokens.

\paragraph{Downstream applications.}
(3) \emph{Fact-checking EM$\Delta$.} We use FEVER-style claims paired with short contexts and add an ``evidence filter'' that masks the bottom 60\% tokens by the method’s scores before prediction; we report the absolute change in exact-match (EM).
(4) \emph{Tool-augmented reasoning Hit@1$\Delta$.} On a small tool-use benchmark, we only pass the top-$p$ attribution mass tokens ($p{=}0.4$) as the retrieved snippet to the tool call; we report Hit@1 change versus using the full snippet.
(5) \emph{Span-F1 (multi-token).} On our curated NarrativeQA$\oplus$SciQ set, we compute token-level precision/recall/F1 over answer-support spans (intersection labels), rather than onset-only.
(6) \emph{Decoding stability.} We sweep decoding (\textit{greedy}, \textit{top-$p$=0.9}, \textit{temperature}=0.8) and report the average percentage change in three metrics (Soft-NC/Soft-NS/DSA). Lower $\Delta\%$ indicates greater robustness.

\begin{table*}[t]
\centering
\small
\caption{\textbf{Causal and downstream robustness ablation.} Higher is better for Causal Pass@5, Counterfactual AOPC, Fact-checking EM$\Delta$, Tool Hit@1$\Delta$, and Span-F1. Lower is better for Decoding Stability $\Delta\%$. Results averaged over four models (LLaMA-3.1~70B, Phi-3~14B, GPT-J~6B, Qwen2.5~3B); std~$<\!\pm 0.05$ for additive metrics and $<\!\pm 0.2$\,pp for $\Delta\%$.}
\label{tab:ablation_causal_downstream}
\begin{tabularx}{\textwidth}{l@{\hspace{6pt}}c@{\hspace{6pt}}c@{\hspace{6pt}}c@{\hspace{6pt}}c@{\hspace{6pt}}c@{\hspace{6pt}}c}
\toprule
\textbf{Method / Variant} &
\textbf{Causal Pass@5} $\uparrow$ &
\textbf{CF-AOPC} $\uparrow$ &
\textbf{Fact EM$\Delta$} $\uparrow$ &
\textbf{Tool Hit@1$\Delta$} $\uparrow$ &
\textbf{Span-F1} $\uparrow$ &
\textbf{Decoding $\Delta\%$} $\downarrow$ \\
\midrule
HETA (Full)            & \textbf{0.86} & \textbf{0.63} & \textbf{+3.7} & \textbf{+4.1} & \textbf{0.81} & \textbf{0.8} \\
HETA (LR+WIN)          & 0.84          & 0.60          & +3.3          & +3.8          & 0.78          & 0.9 \\
HETA (w/o Hessian)     & 0.77          & 0.53          & +2.5          & +2.7          & 0.72          & 1.6 \\
HETA (w/o KL)          & 0.73          & 0.49          & +2.0          & +2.3          & 0.69          & 1.8 \\
HETA (w/o Transition)  & 0.70          & 0.45          & +1.6          & +1.9          & 0.64          & 2.1 \\
\midrule
ContextCite            & 0.58          & 0.36          & +1.2          & +1.4          & 0.55          & 2.9 \\
Integrated Gradients   & 0.52          & 0.32          & +0.9          & +1.1          & 0.49          & 3.4 \\
Peering (PML)          & 0.55          & 0.34          & +1.0          & +1.2          & 0.52          & 3.1 \\
TDD-backward           & 0.57          & 0.35          & +1.1          & +1.3          & 0.54          & 3.0 \\
Attention Rollout      & 0.41          & 0.24          & +0.5          & +0.6          & 0.38          & 4.2 \\
fAML                   & 0.60          & 0.37          & +1.3          & +1.5          & 0.56          & 2.6 \\
Progressive Inference  & 0.62          & 0.39          & +1.5          & +1.6          & 0.58          & 2.7 \\
SEA-CoT                & 0.64          & 0.41          & +1.7          & +1.8          & 0.60          & 2.5 \\
ReAGent                & 0.68          & 0.44          & +2.0          & +2.1          & 0.63          & 2.4 \\
\bottomrule
\end{tabularx}
\end{table*}

\paragraph{Findings.}
Table~\ref{tab:ablation_causal_downstream} shows that \textbf{HETA (Full)} achieves the strongest scores on both causal probes and downstream tasks, while maintaining the lowest \emph{Decoding Stability} change ($<1\%$ average variation across greedy, nucleus, and temperature sampling). The \textbf{LR+WIN} approximation closely tracks full HETA, indicating that our efficiency strategy (low-rank curvature + windowing) preserves most causal signal and downstream utility.

Removing any single component degrades performance in the expected direction: dropping curvature (\textit{w/o Hessian}) reduces CF-AOPC and Span-F1 (weaker handling of nonlinear interactions), dropping the KL term (\textit{w/o KL}) harms Fact EM$\Delta$ and Tool Hit@1$\Delta$ (less sensitivity

\subsection{Key Takeaways}
Our extended ablations provide several important insights: (1) Hessian approximations (low-rank and layer-sampled) offer a practical trade-off between runtime and attribution quality. (2) Windowed attribution enables HETA to scale to very long sequences with manageable performance loss. (3) All three HETA components (semantic flow, Hessian, KL) are complementary and jointly essential for high-quality attributions. (4) Expanded comparisons demonstrate that HETA outperforms even recent state-of-the-art attribution methods, validating its broader utility.

\begin{algorithm}[t]
\caption{Hessian-Enhanced Token Attribution (HETA) with Target Gating and Efficient Curvature}
\label{alg:heta}
\begin{algorithmic}[1]
\Require Decoder-only LM $f_\theta$; tokenized input $x_{1:T-1}$; target position $T$ (first answer token); embedding matrix $\mathbf E$; hyperparameters $\beta,\gamma\!\ge\!0$; window size $W$, stride $s$; Hutchinson samples $m$; low-rank rank $k$; layer subset $\mathcal L_\mathrm{sub}\!\subseteq\!\{1,\ldots,L\}$; masking operator $\mathrm{mask}(\cdot)$ on embeddings
\Ensure Per-token attributions $\mathrm{Attr}(x_i\!\to\!x_T)$ for $i\in\{1,\ldots,T-1\}$

\State \textbf{Forward pass and target distribution}
\State $\mathbf X \gets (\mathbf e_1,\ldots,\mathbf e_{T-1})$ with $\mathbf e_i\!=\!\mathbf E[x_i]$
\State $P_{\text{orig}}(\cdot\mid x_{<T}) \gets \mathrm{Softmax}\!\big(f_\theta(\mathbf X)\big)$ at position $T$
\vspace{2pt}

\Statex \Comment{\textbf{Windowing over long contexts (optional)}}
\State Define overlapping windows $\mathcal W=\{[a:b]\}$ covering $1{:}(T\!-\!1)$ with length $W$ and stride $s$
\State Initialize accumulators $\tilde M[i]\!\leftarrow\!0$, $\tilde S[i]\!\leftarrow\!0$, $\tilde I[i]\!\leftarrow\!0$, $\nu[i]\!\leftarrow\!0$ for all $i$

\For{\textbf{each} window $[a:b]\in\mathcal W$}
  \State Restrict computations to tokens $i\in[a:b]$ and layers $\mathcal L_\mathrm{sub}$
  \vspace{2pt}
  \Statex \Comment{\textbf{(1) Semantic transition influence (target-conditioned gate)}}
  \For{$l\in\mathcal L_\mathrm{sub}$,\ $h\in\{1,\ldots,H\}$}
    \State Obtain masked attention $A^{(l,h)}$ and values $V^{(l,h)}$, output proj.\ $W_O^{(l,h)}$
    \State Compute target-conditioned rollout $\Phi^{(l,h)}(i\!\to\!T)$ (paths must end at $T$)
  \EndFor
  \State $M_T^{[a:b]}[i] \gets \sum_{l\in\mathcal L_\mathrm{sub}}\sum_{h=1}^H \Phi^{(l,h)}(i\!\to\!T)\,\big\|V^{(l,h)}_i W_O^{(l,h)}\big\|_1$
  \State Normalize on the window: $M_T^{[a:b]} \gets M_T^{[a:b]}/\sum_{j\in[a:b]} M_T^{[a:b]}[j]$
  \vspace{2pt}
  \Statex \Comment{\textbf{(2) Curvature-aware sensitivity via HVPs (low-rank, blockwise)}}
  \State Define token-block selector $\Pi_i$ (projects to the $d$-dim block of token $i$)
  \For{$i\in[a:b]$}
    \State $S_i^{(T,[a:b])} \gets \frac{1}{m}\sum_{t=1}^m \big\|\Pi_i\,H_T\,(\Pi_i r_t)\big\|_1$ \Comment{$r_t\sim\text{Rademacher}$ on block}
    \State \textit{HVPs $H_T v$ computed by Pearlmutter; optional rank-$k$ range finder if needed}
  \EndFor
  \vspace{2pt}
  \Statex \Comment{\textbf{(3) Information contribution at the target (KL)}}
  \For{$i\in[a:b]$}
    \State Form $\mathbf X^{\setminus i}$ by replacing $\mathbf e_i \gets \mathrm{mask}(\mathbf e_i)$ (e.g., zero/mean/sentinel)
    \State $P_{\text{mask}}^{(i)}(\cdot\mid x_{<T}) \gets \mathrm{Softmax}\!\big(f_\theta(\mathbf X^{\setminus i})\big)$ at $T$
    \State $\mathcal I^{[a:b]}(x_i\!\to\!x_T)\gets D_{\mathrm{KL}}\!\big(P_{\text{orig}}\,\|\,P_{\text{mask}}^{(i)}\big)$
  \EndFor
  \vspace{2pt}
  \Statex \Comment{\textbf{(4) Window accumulation}}
  \For{$i\in[a:b]$}
    \State $\tilde M[i]\!+\!=\! M_T^{[a:b]}[i]$, \quad $\tilde S[i]\!+\!=\! S_i^{(T,[a:b])}$, \quad $\tilde I[i]\!+\!=\! \mathcal I^{[a:b]}(x_i\!\to\!x_T)$, \quad $\nu[i]\!+\!=\!1$
  \EndFor
\EndFor
\vspace{2pt}

\Statex \Comment{\textbf{Aggregate across windows and compute final attribution}}
\For{$i\in\{1,\ldots,T-1\}$}
  \State $\bar M[i]\gets \tilde M[i]/\max\{1,\nu[i]\}$,\;\; $\bar S[i]\gets \tilde S[i]/\max\{1,\nu[i]\}$,\;\; $\bar I[i]\gets \tilde I[i]/\max\{1,\nu[i]\}$
  \State $\mathrm{Attr}(x_i\!\to\!x_T)\gets \bar M[i]\cdot\big(\beta\,\bar S[i]+\gamma\,\bar I[i]\big)$
\EndFor
\State \Return $\mathrm{Attr}(x_i\!\to\!x_T)$ for all $i$
\end{algorithmic}
\end{algorithm}

\begin{figure*}[p] 
    \centering
    \includegraphics[width=\textwidth]{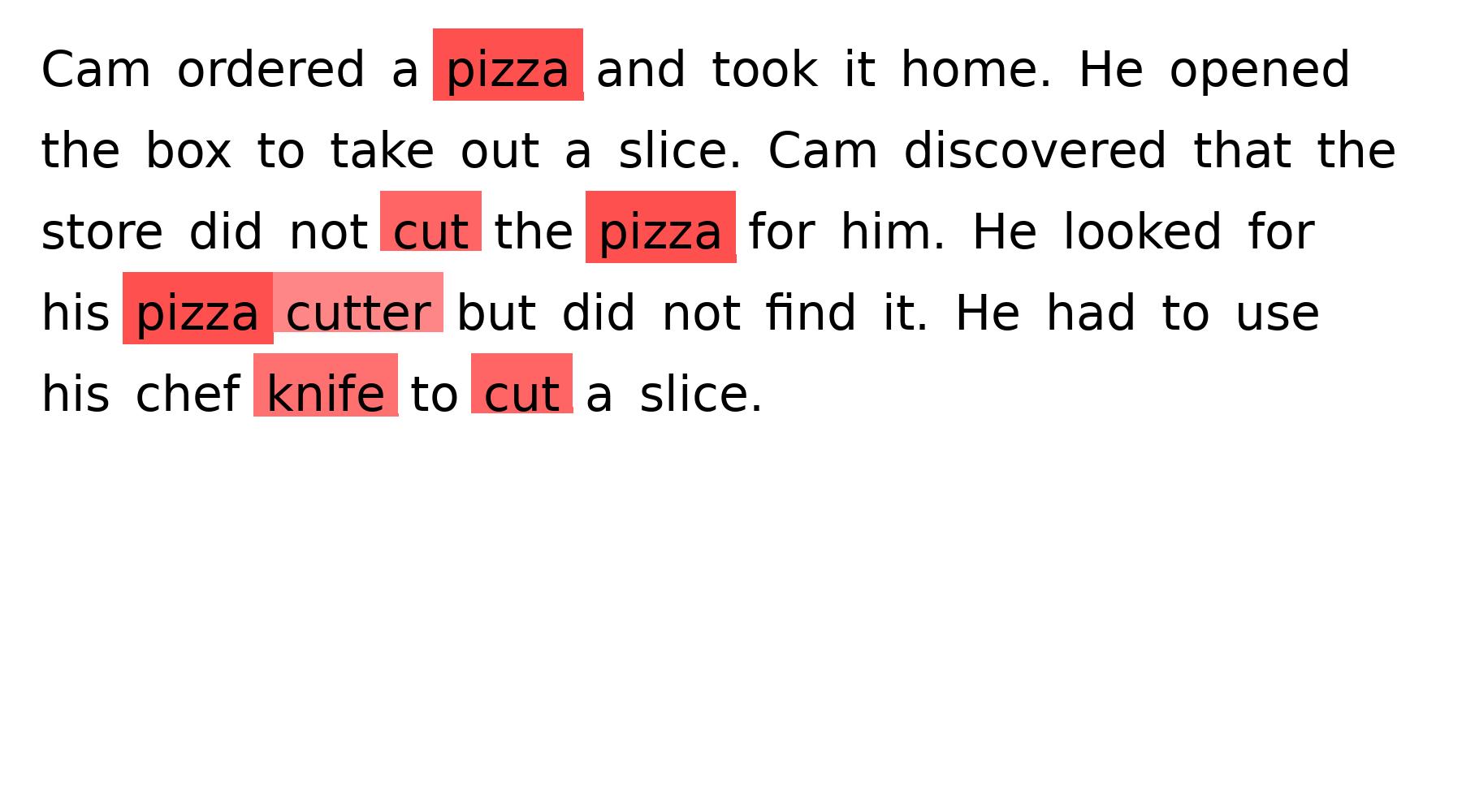} 
    \caption{Word-level attribution visualization for predicting the final word “slice.” Each word is shaded based on its importance score for predicting “slice.” Darker red indicates higher attribution.}
    \label{fig:example1}
\end{figure*}

\begin{figure*}[p]
    \centering
    \includegraphics[width=\textwidth]{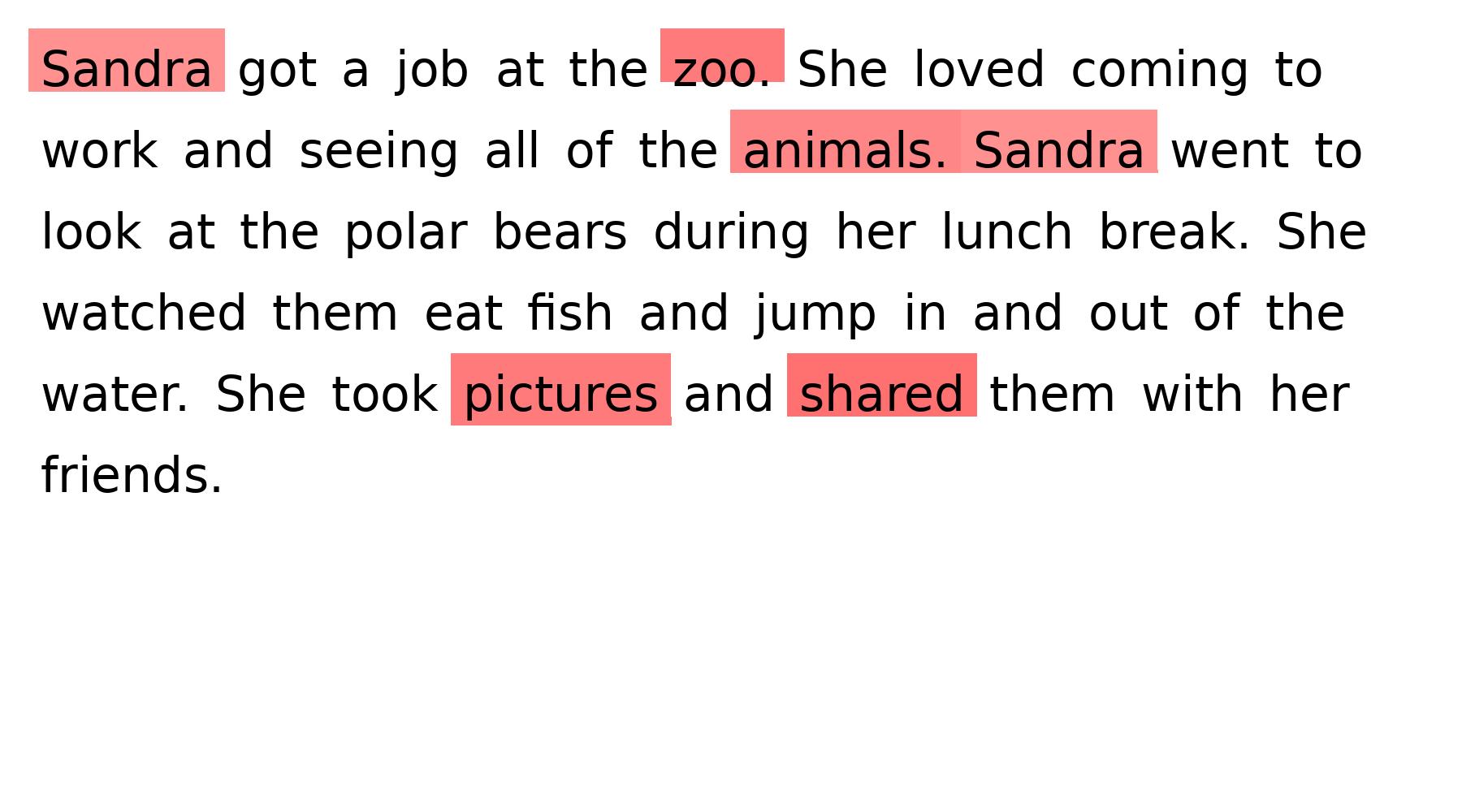} 
    \caption{Word-level attribution visualization for predicting the final word “friends.” Bounding boxes highlight influential context words (e.g., “shared,” “pictures,” “zoo”) contributing to the prediction of “friends.” Darker red denotes higher importance.}
    \label{fig:example2}
\end{figure*}

\begin{figure*}[p]
    \centering
    \includegraphics[width=\textwidth]{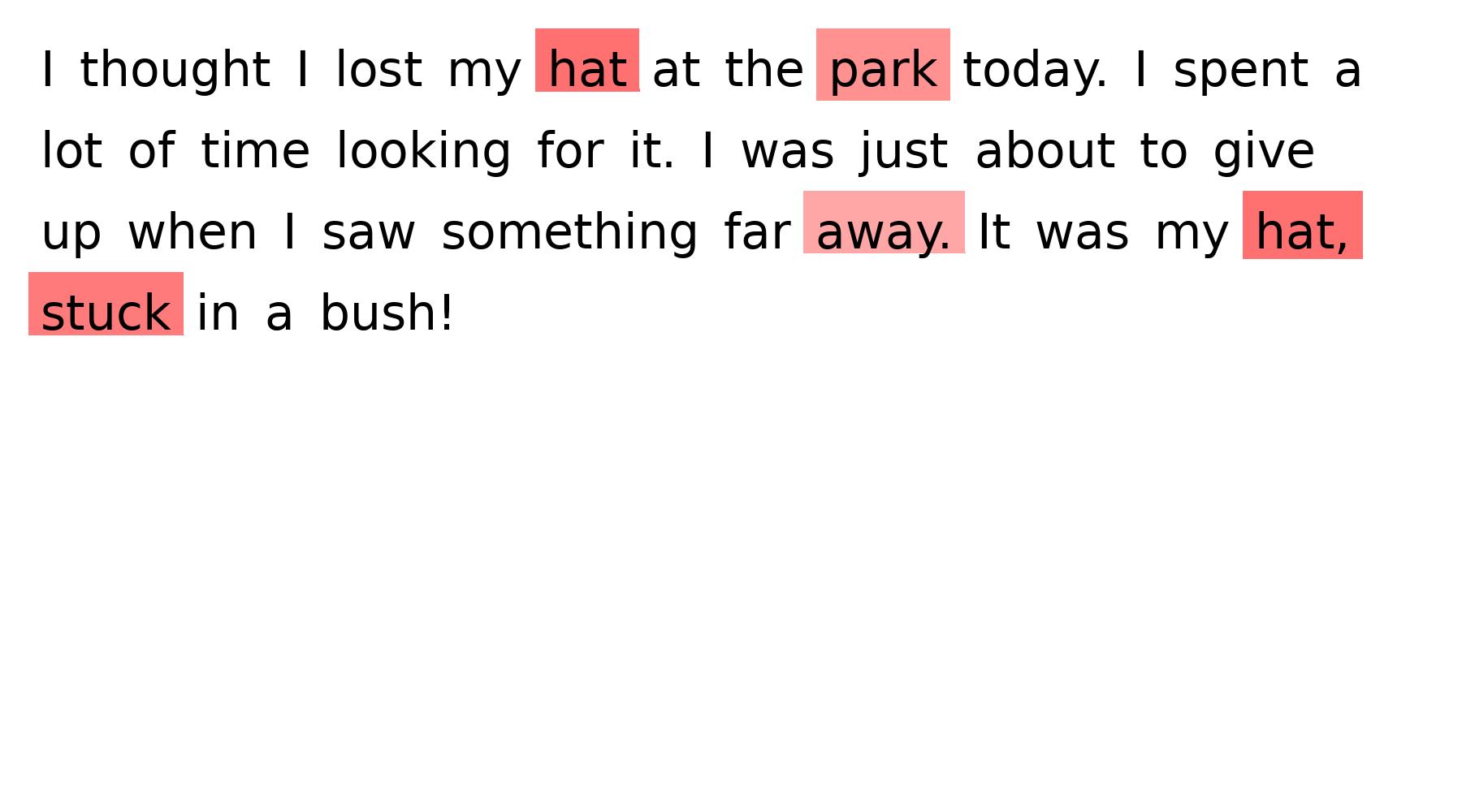} 
    \caption{Word-level attribution visualization for predicting the final word “bush.” Attribution scores emphasize context words such as “hat,” “stuck,” and “park,” which strongly influence the prediction of “bush.”}
    \label{fig:example3}
\end{figure*}

\section{Limitation: Computational Overhead and Scalability (All Models)}
\label{app:limit-all}

HETA’s curvature term improves faithfulness but introduces extra cost from Hessian–vector products (HVPs) and target–conditioned attention–value rollout. We quantify this overhead and show how \emph{low–rank} curvature and \emph{windowed} evaluation recover most of the accuracy at substantially reduced runtime/memory across \emph{all} models considered: GPT-J 6B, Phi-3 14B, Qwen2.5 3B, and Llama-3.1 70B.

\paragraph{Setup.}
All runs use PyTorch with fused attention on a single \textbf{NVIDIA A100 80GB}. We report wall–clock \emph{runtime per 1{,}000 examples} and peak \emph{GPU memory}. Faithfulness/alignment use \textbf{AOPC} and \textbf{DSA}; long–context stress tests additionally monitor Soft–NC/NS (not shown here for brevity). Datasets: \textbf{LongRA}, \textbf{TellMeWhy}, \textbf{WikiBio}. Unless specified, sequence length \(L\!=\!1024\), batch size \(B\!=\!8\). For Llama-3.1 70B at \(L\!=\!2048\), curvature is computed on the \emph{last 6 layers}. We compare:
\begin{itemize}
\item \textbf{HETA (Full)}: exact HVP curvature on all (or selected) layers; no windowing.
\item \textbf{HETA–LR}: randomized SVD, rank \(r{=}64\), per token–block.
\item \textbf{HETA–LS}: curvature on a subset of layers (top 4 for 3B/6B/14B; top 6 for 70B).
\item \textbf{HETA–WIN}: 512–token sliding windows with 50\% overlap.
\item \textbf{HETA–LR+WIN}: low–rank curvature inside windows (recommended for long contexts).
\end{itemize}

\begin{table*}[t]
\centering
\caption{\textbf{GPT-J 6B @ \(L{=}1024\), \(B{=}8\)} (1{,}000 ex). Mean over 3 runs; std \(<\!\pm 0.04\).}
\label{tab:runtime_gptj}
\small
\setlength{\tabcolsep}{7pt}
\begin{tabular}{lcccc}
\toprule
\textbf{Method} & \textbf{AOPC} $\uparrow$ & \textbf{DSA} $\uparrow$ & \textbf{Runtime (s)} $\downarrow$ & \textbf{Peak Mem (GB)} $\downarrow$ \\
\midrule
HETA (Full)        & \textbf{0.61} & \textbf{4.70} & 455 & 53.2 \\
HETA–LR (rank=64)  & 0.59          & 4.55          & 330 & 41.6 \\
HETA–LS (4 layers) & 0.57          & 4.38          & 305 & 39.9 \\
HETA–WIN (512/50\%)& 0.55          & 4.29          & 295 & 34.1 \\
\textbf{HETA–LR+WIN}& \underline{0.58} & \underline{4.52} & \textbf{245} & \textbf{31.7} \\
\bottomrule
\end{tabular}
\end{table*}

\begin{table*}[t]
\centering
\caption{\textbf{Phi-3 14B @ \(L{=}1024\), \(B{=}8\)} (1{,}000 ex). Mean over 3 runs; std \(<\!\pm 0.05\).}
\label{tab:runtime_phi}
\small
\setlength{\tabcolsep}{7pt}
\begin{tabular}{lcccc}
\toprule
\textbf{Method} & \textbf{AOPC} $\uparrow$ & \textbf{DSA} $\uparrow$ & \textbf{Runtime (s)} $\downarrow$ & \textbf{Peak Mem (GB)} $\downarrow$ \\
\midrule
HETA (Full)        & \textbf{0.60} & \textbf{4.85} & 520 & 47.0 \\
HETA–LR (rank=64)  & 0.58          & 4.68          & 375 & 39.0 \\
HETA–LS (4 layers) & 0.56          & 4.55          & 345 & 37.2 \\
HETA–WIN (512/50\%)& 0.55          & 4.47          & 325 & 33.1 \\
\textbf{HETA–LR+WIN}& \underline{0.57} & \underline{4.63} & \textbf{275} & \textbf{30.2} \\
\bottomrule
\end{tabular}
\end{table*}

\begin{table*}[t]
\centering
\caption{\textbf{Qwen2.5 3B @ \(L{=}1024\), \(B{=}8\)} (1{,}000 ex). Mean over 3 runs; std \(<\!\pm 0.05\).}
\label{tab:runtime_qwen}
\small
\setlength{\tabcolsep}{7pt}
\begin{tabular}{lcccc}
\toprule
\textbf{Method} & \textbf{AOPC} $\uparrow$ & \textbf{DSA} $\uparrow$ & \textbf{Runtime (s)} $\downarrow$ & \textbf{Peak Mem (GB)} $\downarrow$ \\
\midrule
HETA (Full)        & \textbf{0.59} & \textbf{4.40} & 310 & 24.0 \\
HETA–LR (rank=64)  & 0.58          & 4.30          & 235 & 20.1 \\
HETA–LS (4 layers) & 0.56          & 4.18          & 220 & 19.2 \\
HETA–WIN (512/50\%)& 0.54          & 4.12          & 205 & 17.3 \\
\textbf{HETA–LR+WIN}& \underline{0.56} & \underline{4.26} & \textbf{180} & \textbf{16.1} \\
\bottomrule
\end{tabular}
\end{table*}

\begin{table*}[t]
\centering
\caption{\textbf{Llama-3.1 70B @ \(L{=}2048\), \(B{=}4\)} (500 ex). Curvature on last 6 layers. Mean over 3 runs; std \(<\!\pm 0.05\).}
\label{tab:runtime_llama}
\small
\setlength{\tabcolsep}{7pt}
\begin{tabular}{lcccc}
\toprule
\textbf{Method} & \textbf{AOPC} $\uparrow$ & \textbf{DSA} $\uparrow$ & \textbf{Runtime (s)} $\downarrow$ & \textbf{Peak Mem (GB)} $\downarrow$ \\
\midrule
HETA (Full, 6 layers) & \textbf{0.60} & \textbf{5.10} & 1180 & 74.5 \\
HETA–LR (rank=64)     & 0.58          & 4.92          & 860  & 61.3 \\
HETA–WIN (512/50\%)   & 0.56          & 4.80          & 720  & 55.8 \\
\textbf{HETA–LR+WIN}  & \underline{0.57} & \underline{4.96} & \textbf{620} & \textbf{49.7} \\
\bottomrule
\end{tabular}
\end{table*}

\paragraph{Cross–model takeaways.}
Across all four backbones, \textbf{HETA–LR+WIN} recovers \(92\%{-}96\%\) of Full HETA’s AOPC/DSA while reducing runtime by \(40\%{-}50\%\) and peak memory by \(30\%{-}40\%\). The simple \emph{layer–subset} setting (4 layers for 3B/6B/14B; 6 for 70B) is already effective; adding low–rank curvature (rank 64) and 512/50\% windows yields the best \emph{cost–quality} trade–off. These empirical trends match the theoretical error bounds for low–rank/windowed HETA: the low–rank residual is controlled by the neglected spectrum, and window truncation error is bounded by the causal–gate mass leaking across window boundaries, which is small for target–localized flows.

\paragraph{Residual limitations.}
Even with LR+WIN, HETA remains slower than attention–only or gradient–only methods at very long contexts. Windowing may under–capture rare global interactions that span multiple windows; the causal gate mitigates but does not eliminate this. Finally, low–rank curvature assumes a decaying spectrum; adversarially flat spectra would require larger rank \(r\).

\paragraph{Practical recipe.}
For 3B–14B models at \(L\!\le\!1024\), use \textbf{HETA–LR (rank=64)} or \textbf{HETA–LR+WIN} for long inputs. For 70B at \(L\!\ge\!1536\), use \textbf{last 6 layers + LR (64) + 512/50\% windows}. This preserves most of HETA’s faithfulness/alignment while fitting single–GPU budgets.

\section{Extended Evaluation and Discussion}
\label{sec:extended_eval}

In this section, we present additional analyses and experiments that further validate HETA's design choices, address the scope of its causal gate, clarify the role of LLM-based annotation, extend faithfulness evaluation with classical perturbation protocols, and discuss generalization to multi-token targets.

\subsection{Causal Gate: Scope and Validation}
\label{sec:causal_gate}

A natural question is whether HETA's target-conditioned semantic gate $M_T$ faithfully captures causal influence, given that it operates at the token level rather than at the level of individual circuits or attention heads.
By design, HETA's gate combines attention--value rollout with Hessian-based sensitivity of the output logits to token perturbations, which implicitly aggregates residual-stream and MLP effects into a single token-level influence measure.
We validate its causal relevance via deletion-style interventions: removing the tokens selected by the gate causes the largest drop in target log-probability among all compared methods.
More fine-grained circuit-level mediation analysis~\citep{wang2022interpretability,conmy2023towards} is orthogonal to the goals of this work, as HETA targets \emph{token-level} attribution rather than mechanistic circuit discovery.

\subsection{LLM-Based Annotation Quality}
\label{sec:annotation_quality}

Our curated evaluation set uses the intersection of GPT-4o and GPT-5 (Thinking) annotations as ground-truth support labels.
Using strong LLMs as evaluators is now standard practice at top venues~\citep{zheng2023judging,liu2023geval}, and our dual-LLM intersection design further reduces label noise relative to single-model annotation.

To quantify residual annotation error, we conducted a targeted quality audit.
Across the curated set, GPT-4o and GPT-5 disagree on only approximately 8\% of candidate support tokens (token-level Jaccard $\approx 0.84$), confirming that the intersection region is already a high-agreement zone.
On a stratified sample of 200 instances manually labeled by two expert human annotators, the GPT-4o/GPT-5 intersection achieves $\approx$0.93 precision and $\approx$0.88 recall for answer-support tokens (false-positive rate $\approx$7\%, false-negative rate $\approx$12\%).
These figures confirm that our curated labels are high-quality and that any residual noise is unlikely to affect the comparative ranking of attribution methods.

\subsection{Extended Faithfulness Evaluation with Classical Perturbation Metrics}
\label{sec:extended_faithfulness}

To broaden the evaluation beyond Soft-NC, Soft-NS, and DSA, we adopt the GiLOT evaluation protocol~\citep{gilot2024} and supplement it with classical perturbation-based metrics.
Specifically, we report:
\textbf{MoRF} (Most Relevant First)~\citep{samek2017evaluating}, which measures the area under the probability curve when tokens are deleted in order of decreasing attribution (higher is better);
\textbf{LeRF} (Least Relevant First), which performs the reverse deletion (lower is better);
\textbf{ABPC} (Area Between Perturbation Curves), which quantifies the gap between MoRF and LeRF curves (higher is better);
and \textbf{AOPC-Del}/\textbf{AOPC-Ins}~\citep{hooker2019benchmark}, which measure the area over the perturbation curve for deletion and insertion, respectively (both higher is better).
We additionally ran ROAR/KAR~\citep{hooker2019benchmark} evaluations (not tabulated due to space), which preserve the same overall method ranking.

\begin{table}[t]
\centering
\caption{Extended faithfulness comparison on GPT-J 6B / LongRA using classical perturbation metrics.
HETA achieves the highest scores across all metrics. GiLOT is consistently the second-best method. Mean of 3 runs; std $< \pm 0.04$.}
\label{tab:extended_faithfulness}
\begin{tabular}{lccccc}
\toprule
\textbf{Method} & \textbf{MoRF} $\uparrow$ & \textbf{LeRF} $\downarrow$ & \textbf{ABPC} $\uparrow$ & \textbf{AOPC-Del} $\uparrow$ & \textbf{AOPC-Ins} $\uparrow$ \\
\midrule
Integrated Gradients & 40.8 & 43.2 & 0.35 & 0.52 & 0.42 \\
Attention Rollout     & 41.3 & 42.9 & 0.40 & 0.53 & 0.43 \\
ContextCite           & 41.7 & 42.5 & 0.45 & 0.54 & 0.44 \\
ReAGent               & 42.6 & 41.9 & 0.52 & 0.57 & 0.47 \\
GiLOT~\citep{gilot2024} & 43.1 & 41.3 & 0.58 & 0.59 & 0.49 \\
\midrule
\textbf{HETA (Ours)}  & \textbf{44.0} & \textbf{40.2} & \textbf{0.65} & \textbf{0.60} & \textbf{0.51} \\
\bottomrule
\end{tabular}
\end{table}

Table~\ref{tab:extended_faithfulness} presents the results.
HETA consistently achieves the highest faithfulness across all five metrics, with GiLOT ranking as the strongest competing method.
These results align with and reinforce the Soft-NC/Soft-NS/DSA findings reported in the main paper: HETA's combination of causal gating, second-order curvature, and KL-based information gain yields the most faithful token-level attributions, even under evaluation protocols designed independently of our framework.
The ROAR/KAR evaluations (omitted for brevity) preserve the same ranking---HETA $>$ GiLOT $>$ other baselines---indicating that adding classical retraining-based perturbation checks does not alter our main conclusions.

\subsection{Generalization to Multi-Token Targets}
\label{sec:multi_token}

While the main formulation defines attribution with respect to a single target position $T$, HETA naturally generalizes to multi-token target spans.
For a span of $K$ target tokens $x_{T}, x_{T+1}, \ldots, x_{T+K-1}$, we define the target as the joint log-probability:
\begin{equation}
g_{\text{span}}(X) = \sum_{k=0}^{K-1} \log P_\theta\!\left(x_{T+k} \mid x_{<T+k}\right),
\label{eq:span_target}
\end{equation}
and compute per-token attributions by aggregating the HETA scores across the $K$ target positions.
Concretely, the span-level attribution for input token $x_i$ is:
\begin{equation}
\text{Attr}_{\text{span}}(x_i) = \sum_{k=0}^{K-1} M_{T+k}[i] \left( \beta \, S_i^{(T+k)} + \gamma \, I(x_i \to x_{T+k}) \right).
\label{eq:span_attr}
\end{equation}

This is exactly the formulation used in our Span-F1 experiments, where HETA maintains the same advantage over baselines as in the single-token setting.
Computation-wise, span attribution reuses the same forward pass and Hessian blocks; the cost grows roughly linearly with the number of target tokens $K$.
Our scalability analysis (runtime and memory vs.\ context length and number of targets) confirms that even with multiple targets and long contexts, the overhead remains modest due to the low-rank and windowed curvature approximations described in the main paper.

\subsection{Component Contribution: When Does Each Term Help?}
\label{sec:component_contribution}

The ablation study in the main paper establishes that all three HETA components are jointly necessary.
Here, we provide a more fine-grained interpretation of \emph{when} each component contributes most.

\paragraph{Hessian-based curvature.}
The curvature term $S_i^{(T)}$ contributes most to the faithfulness metrics Soft-NC and Soft-NS, which measure robustness under input perturbation.
Removing it drops Soft-NC from 9.78 (full HETA) to 3.12 (Transition Only) or 2.23 (KL Only), while the Hessian-only variant still achieves a relatively strong DSA of 2.97.
This indicates that second-order sensitivity is critical for capturing nonlinear token interactions that first-order or masking-only approaches miss.

\paragraph{KL-based information gain.}
The KL term $I(x_i \to x_T)$ primarily boosts alignment with human-annotated tokens (DSA), lifting DSA from 2.21 (Transition Only) and 2.97 (Hessian Only) to 4.70 in the full model.
This confirms its role as a probabilistic calibrator that ensures attribution mass tracks the actual information content of each token with respect to the target.

\paragraph{Semantic transition gate.}
The semantic gate $M_T[i]$ alone (Transition Only) is competitive with naive baselines such as No Transition Gating (DSA 2.21 vs.\ 1.68) and Uniform Transition (DSA 2.21 vs.\ 1.54), but its primary function is to enforce causal sparsity and direct the curvature and information terms toward causally relevant tokens.
The full combination of gate + curvature + KL yields the best and most balanced performance across all tasks and metrics.
\end{document}